\setlist[enumerate]{leftmargin=.5in}
\setlist[itemize]{leftmargin=.5in}
\crefname{hypothesis}{Hypothesis}{Hypotheses}
\newtheorem{assumption}{Assumption}
\newcommand{\R}{\mathbb{R}}
\newcommand{\C}{\mathbb{C}}
\newcommand{\Ncal}{\mathcal{N}} 
\newcommand{\E}{\mathbb{E}} 
\newcommand{\Prob}{\mathbb{P}} 
\DeclareMathOperator{\rank}{rank}
\DeclareMathOperator*{\argmax}{arg\,max} 
\newcommand{\ml}[1]{{\color{cyan}{[ml: #1]}}}
\newcommand{\ymm}[1]{{\color{orange}{[ymm: #1]}}}
\newcommand{\ymtd}[1]{\todo{ymm: #1}}
\newcommand{\ab}[1]{{\color{red}{[ab: #1]}}}
\newcommand{\Thetacan}{\Theta_\text{c}}
\newcommand{\Thetastan}{\Theta_\text{s}}
\title{Canonical Bayesian Linear System Identification\thanks{Submitted to the editors on the 08/22/2025.
    \funding{AB and YM are supported in part by the US Department of Energy, SciDAC-5 program, under contract DE-SC0012704, subcontract 425236. YM also acknowledges support from the US Department of Energy ASCR program, under contract DE-SC0023187.  MEL is supported by the Eric and Wendy Schmidt Center at the Broad Institute of MIT and Harvard and Basis Research Institute. IU is supported by ``la Caixa'' foundation’s LCF/BQ/PI22/11910028 award, MICIU/AEI/10.13039/501100011033, and the Basque Government through the BERC 2022-2025 program.}}}
\author{Andrey Bryutkin\thanks{Massachusetts Institute of Technology
    (\email{bryutkin@mit.edu, ymarz@mit.edu}).
  }
  \and Matthew E.~Levine\thanks{Eric and Wendy Schmidt Center, Broad Institute of MIT and Harvard (\email{levinema@mit.edu}) \& Basis Research Institute (\email{matt@basis.ai}).}
    \and I\~{n}igo Urteaga\thanks{BCAM (Basque Center for Applied Mathematics) \& Ikerbasque (Basque Foundation for Science)
    (\email{iurteaga@bcamath.org}, \url{https://iurteaga.github.io/})
}
\and Youssef Marzouk\footnotemark[2]
}
\begin{document}

\maketitle


\begin{abstract}
  Standard Bayesian approaches for linear time-invariant (LTI) system identification are hindered by parameter non-identifiability; the resulting complex, multi-modal posteriors make inference inefficient and impractical. We solve this problem by embedding canonical forms of LTI systems within the Bayesian framework.
  We rigorously establish that inference in these minimal parameterizations fully captures all invariant system dynamics (e.g., transfer functions, eigenvalues, predictive distributions of system outputs) while resolving identifiability. This approach unlocks the use of meaningful, structure-aware priors (e.g., enforcing stability via eigenvalues) and ensures conditions for a Bernstein--von Mises theorem---a link between Bayesian and frequentist
large-sample asymptotics that is broken in standard forms. Extensive simulations with modern MCMC methods highlight advantages over standard parameterizations: canonical forms achieve higher computational efficiency, generate interpretable and well-behaved posteriors, and provide robust uncertainty estimates, particularly from limited data.
%
%
\end{abstract}

\begin{keywords}
dynamical systems, linear time-invariant systems, system identification, control theory, Bayesian inference, Markov chain Monte Carlo, uncertainty quantification
\end{keywords}

\begin{MSCcodes}
93E12, 62F15, 93B10, 62F12, 65C40, 62M10
\end{MSCcodes}
\section{Introduction}
\label{sec:intro}

Linear time-invariant (LTI) systems are a foundational class of dynamical systems, used extensively across control engineering, signal processing, econometrics, robotics, and the development of digital twins for complex physical systems \cite{massai2025free, stilgenbauer2024symbolic, duarte2021optimal, wang2023review}. Their defining characteristics are linearity, i.e., obeying the principle of superposition, and time invariance, meaning that their dynamical behavior is consistent over time. State-space models offer a powerful and versatile framework for describing and analyzing LTI systems. The theoretical underpinnings for these models originate in Kalman's seminal contributions \cite{Kalman1963} and have since been subject to extensive development; see \cite{chen1999,ljung1999system} for relevant background.

This work focuses on identifying discrete-time LTI systems from data. Our framework can be adapted to continuous-time systems with minor adjustments, but we concentrate here on the discrete-time case. Specifically, we consider systems described by the following stochastic state-space model:
\begin{equation}
\label{eq:general_lti_state_space_model_noise}
\begin{aligned}
x_{t+1} & =A x_t+B u_t+w_t \\
y_t & =C x_t+D u_t+z_t ,
\end{aligned}
\end{equation}
where $x_t \in \mathbb{R}^{d_x}$ represents the latent state of the system, $u_t \in \mathbb{R}^{d_u}$ is the known input, and $y_t \in \mathbb{R}^{d_y}$ is an (indirect) observation of the system, all at time $t \in \mathbb{N}_0$. The terms $w_t \in \mathbb{R}^{d_x}$ and $z_t \in \mathbb{R}^{d_y}$ form the process and measurement noise sequences, respectively, assumed iid over time. Moreover, for any $t$, we assume that $w_t$ is independent of $\{x_s\}_{s \leq t}$; $z_t$ is independent of $\{x_t\}_{t \in \mathbb{N}_0}$; and $\{w_t\}_{t \in \mathbb{N}_0}$ and $\{z_t\}_{t \in \mathbb{N}_0}$ are mutually independent, 
marginally distributed as $w_t \sim \pi_w$,  $z_t \sim \pi_z$. The state and observation dynamics are governed by the state $A \in \mathbb{R}^{d_x \times d_x}$, input $B \in \mathbb{R}^{d_x \times d_u}$, observation $C \in \mathbb{R}^{d_y \times d_x}$, and feedthrough $D \in \mathbb{R}^{d_y \times d_u}$ matrices.

The central problem addressed herein is inference of the matrices $(A, B, C, D)$ given one (or more) \textit{finite-length} trajectories of inputs $u_t$ and noisy outputs $y_t$. This is a version of the \textit{system identification} problem. Classical system identification tools include prediction error methods \cite{ljung1999system}, frequency-domain techniques \cite{pintelon2012system}, and subspace algorithms for direct state-space estimation \cite{vanOverschee1996subspace}. Statistical performance limits of these approaches are now well understood for both single-input single-output (SISO) and multi-input multi-output (MIMO) settings \cite{gevers2005identification,hjalmarsson2005experiments,wahlberg1986system,juditsky1995nonparametric,zheng2004parameter}. Yet these approaches provide only point estimates, without quantification of uncertainty \cite{campiljung2006}.
Moreover, transfer function models are prevalent in system identification, in large part because they compactly summarize input-output dynamics (relating $\{u_t\}_{t \in \mathbb{N}_0}$ to $\{y_t\}_{t \in \mathbb{N}_0}$), but that very compression can hide structural properties.  Note that the mapping from a state-space realization $(A, B, C, D)$ to a transfer function (even considering only \textit{minimal realizations} defined in \Cref{def:minimal_realization}) is many-to-one: i.e., identifying the transfer function does not specify the state-space model.
%
These issues intensify in high-dimensional MIMO problems, where the practical implementation of system identification is already challenging \cite{rojas2008input,qin2006overview}. 
Other approaches to system identification from input-output data, not limited to linear systems, include kernel-based regularization methods that incorporate system-theoretic insights \cite{Pillonetto2023} and sparsifying priors for dynamic systems \cite{Hirsh2022}, building on concepts from sparse learning \cite{tipping2001sparse, aravkin2012sparse}.

Bayesian inference offers an alternative to these classical identification approaches by incorporating rich specifications of prior information and naturally quantifying uncertainty in the learned system and its dynamics given finite data. As we argue below, Bayesian statistical modeling pairs naturally with state-space models, posing the inference problem directly on a full internal description of the system.
Bayesian approaches to system identification were first proposed in \cite{Peterka1981BayesianID} and have since evolved significantly \cite{NinnessHenriksen2010BayesMCMC,sarkka2013bayesian}. The central idea is to treat model parameters as random variables, whose probability distributions reflect one's state of knowledge about the true model. Newly observed data, in combination with a statistical model, are then used to systematically update these prior distributions into posterior distributions. Särkkä \cite{sarkka2013bayesian} notably bridges classical state estimation with modern Bayesian methods, while \cite{Svensson2016GPSSM} develops efficient Gaussian process methods suitable for potentially nonlinear state-space models. Other Bayesian approaches include particle methods for inferring state-space models from data \cite{Chopin_2013,Kantas_2015, urteaga_2017a, urteaga_2017b}. 
The recent work \cite{Galioto_2020} provides a unifying perspective encompassing various identification approaches, highlighting the flexibility of the Bayesian framework.

While attractive for these reasons, Bayesian inference of state-space models presents certain fundamental difficulties. A central difficulty is the lack of \emph{parameter identifiability}, stemming from the non-uniqueness of the state-space representation. Even for \textit{minimal} systems as described in \Cref{sec:equivalences}, an entire equivalence class of models, related by state transformations,
yield precisely the same input-output behavior. This intrinsic non-uniqueness, reflecting the arbitrary choice of internal state coordinates, makes the estimation of a specific matrix parameter set $(A, B, C, D)$ classically ill-posed.
Difficulties due to non-identifiability persist in the Bayesian framework. As we will demonstrate, non-identifiability yields strongly non-Gaussian and multi-modal posterior distributions that can be extremely difficult to sample.

While strong priors could in principle tame some of this structure,  defining meaningful and well-behaved prior distributions directly on redundant parameters leads to underidentified models \cite[Ch.\ 4]{gelman2013bayesian}. 
%
For instance, specifying priors on the system's eigenvalues (poles) is complicated by the eigenvalues' invariance to state permutations, and leads to additional challenges that we formalize below (see \Cref{remark_prior_comparison_with_matrix_entries}).
More generically, the computational cost of Bayesian inference in high dimensions can be significant, and hence methods for reducing the dimension of the inference problem are practically useful. In this paper we will use \emph{canonical forms} of LTI state-space models to \textit{resolve the non-identifiability problem} 
and simultaneously achieve parameter dimension reduction.


Canonical forms of state-space models resolve the parameter identifiability problem by providing a unique, minimal set of parameters corresponding to each distinct input-output behavior, thereby eliminating the ambiguity of an arbitrary choice of state basis; see \Cref{def:controller_form} for a concrete example in SISO systems and \cite{chen1999} for background. We argue that guaranteeing structural identifiability in this way is also a crucial prerequisite for \textit{specifying unambiguous prior distributions}. Indeed, we will show that priors can be coherently defined on the reduced parameter space of a canonical form to reflect genuine system properties (such as stability), without being confounded by parameter redundancy. The value of using canonical forms within non-Bayesian learning frameworks has been demonstrated by \cite{hardt2018}, which develops guarantees for maximum likelihood estimation via stochastic gradient descent. 
Previous work on Bayesian system identification has not exploited canonical forms.

Beyond the specification of the Bayesian system identification problem, it is useful to understand structural properties of the posterior. In particular, Bernstein--von Mises (BvM) theorems establish asymptotic normality and concentration of the posterior, and show when Bayesian credible sets can also be interpreted as frequentist confidence sets. They also open the door to fast computational approximations, e.g., Laplace approximations \cite{katsevich2024improveddimensiondependencebernstein}, for sufficiently large data sets.
We show in \Cref{sec:fim_bvm} that the posterior distribution over parameters in the canonical form satisfies a BvM and develop explicit formulas for the associated Fisher information.
We will also show that the posterior distribution over a standard parameterization of the state-space model \textit{cannot} satisfy a BvM, again due to the core issue of non-identifiability.
%

Our Bayesian framework provides a principled approach for inference from finite data. This perspective is complementary to the active field of \textit{non-asymptotic statistics} for system identification, which focuses on deriving finite-sample performance guarantees for point estimation procedures~\cite{oymak2019, sarkar2020}, establishing fundamental limits on sample complexity~\cite{bakshi2023newapproachlearninglinear}, and connecting statistical learning with control-theoretic stability~\cite{hu2022}. A valuable direction for future work is to bridge these perspectives by establishing non-asymptotic concentration guarantees for the full posterior distributions produced by our methods. But developing such guarantees almost certainly requires first establishing a BvM theorem, as we do here.


\subsection{Contributions and outline}
\label{sec:contributions_outline} 


The key contributions of this paper are as follows:
\begin{itemize} \itemsep0em 
\item \textbf{\textsf{Structurally identifiable Bayesian LTI parameterizations:}}
 We formulate Bayesian inference of LTI systems using minimal, canonical state-space parameterizations $\Thetacan$. Doing so yields an \textit{identifiable} problem with a typically simple posterior geometry that facilitates efficient computation.  Foundational details on equivalence classes, canonical forms, and identifiability proofs are given in \Cref{sec:equivalences}.

 
\item \textsf{\textbf{Principled stability-preserving priors:}} Leveraging the canonical representation, we design informative priors that enforce system stability and, if desired, shape other dynamical features of the system. This involves specifying distributions on intrinsic properties (stable eigenvalues) and
  mapping them onto $\Thetacan$ via
  a well-behaved transformation,
  overcoming the difficulties of imposing prior information on the redundant, standard state-space parameterization $\Thetastan$. The methodology is detailed in \Cref{sec:priors}.

\item \textsf{\textbf{Bayesian posterior guarantees:}} We characterize statistical properties of the canonical approach by deriving the Fisher information matrix for $\Thetacan$ and proving that a Bernstein--von Mises theorem holds under standard conditions. This establishes asymptotic posterior normality and efficiency, in contrast with non-identifiable parameterizations. This theoretical analysis is presented in \Cref{sec:fim_bvm}.

    \item \textsf{\textbf{Empirical evaluations:}} We provide comprehensive numerical comparisons between Bayesian inference in standard and canonical LTI representations. We evaluate posterior geometry and computational performance, the accuracy of posterior estimates, and the information provided by posterior predictives---for datasets of varying size and for systems with different dynamical properties---showcasing the practical advantages of the canonical approach. We also compare with point estimates provided by standard system identification techniques. These results are presented in \Cref{sec:numerics}.
      
    \end{itemize}
    
    \Cref{sec:bayesian_inference_lti_systems_standard_perspective} of the paper lays out the general problem framework. Subsequent sections contain our main results as outlined above. \Cref{sec:conclusion} summarizes our findings and outlines future research.

\section{Bayesian inference of LTI systems: standard perspective}
\label{sec:bayesian_inference_lti_systems_standard_perspective}
This section formulates a Bayesian framework for identifying the parameters of LTI systems using the standard state-space representation. We begin by defining notation and the probabilistic model underpinning the inference task.

\paragraph{Notation and conventions}
For a vector $v$ and a symmetric positive definite matrix $C$, we define the squared norm with respect to $C$ as $\|v\|_C^2 \coloneqq v^\top C^{-1} v$. Throughout this work, we denote a trajectory of a sequence $\{z_t\}_{t\in\mathbb{N}_0}$ from time $t=a$ to $t=b$ as $z_{[a:b]} \coloneqq \{z_a, z_{a+1}, \ldots, z_b\}$. When context implies the full observed trajectory up to time $T$, we may use the shorthand $z_{[T]} \coloneqq z_{[0:T]}$.
Parameter collections will be referred to as $\Theta \in \bm{\Theta}$; subscripts will be used when it is important to distinguish standard $\Theta_s$ and canonical $\Theta_c$ parameterizations.
A realization of the LTI system matrices due to a specific parametrization is denoted as $\mathcal{L}_\Theta$.
We will generally use $p$ to denote a probability density function; to avoid ambiguity, different densities will be identified by explicitly specifying their arguments, as in $p(z_{[T]}|\Theta)$, thereby making clear the underlying random variables and parameters.

\begin{assumption}[Gaussian noise and initial state, known deterministic input]
\label{ass:gaussian_noise}
We assume Gaussian distributions for the stochastic components of the LTI system in \eqref{eq:general_lti_state_space_model_noise}:
process noise \(w_t \sim \Ncal(0, \Sigma)\), measurement noise \(z_t \sim \Ncal(0, \Gamma)\), and initial state \(x_0 \sim \Ncal(0, P_0)\). The random variables \(x_0\), \(w_{[T]}\), and \(z_{[T]}\) are assumed to be mutually independent. The input sequence \(u_{[T]}\) is treated as known and deterministic.
\end{assumption}
In the ``\textit{standard perspective}'' addressed in this section, we let all entries of the LTI system matrices $A, B, C, D$ be treated as unknown parameters to be inferred, without imposing any specific structural constraints. We denote the complete set of standard parameters as $\Theta_{\text{s}}:= \{A, B, C, D, P_0, \Sigma, \Gamma\}$. This parameter set corresponds directly to the matrices and covariance structures appearing in \eqref{eq:general_lti_state_space_model_noise},
which as discussed in \Cref{sec:intro},
suffers from identifiability issues, as different sets $\Theta_{\text{s}}$ can produce identical LTI input-output behavior. 

We now specify key properties of LTI systems that are crucial for system identification.

\begin{definition}[Stability]
\label{def_stable_system}
An LTI system represented by parameters \(\Theta\) is \emph{stable} if all eigenvalues \(\lambda_i(A)\) of the state matrix \(A\) lie strictly within the unit disk in the complex plane, i.e., $\max_i |\lambda_i(A)| < 1.$ We say that the system is marginally stable if the eigenvalues can also lie on the boundary of the unit circle, i.e., $\max_i |\lambda_i(A)| \leq 1.$
\end{definition}

\begin{definition}[Controllability and observability]
\label{def:controllability_observability}
Consider the state-space system characterized by the matrices \(A \in \R^{d_x \times d_x}\), \(B \in \R^{d_x \times d_u}\), and \(C \in \R^{d_y \times d_x}\).
The pair \((A,B)\) is called \emph{controllable} if the \emph{controllability matrix}
\begin{equation}
\mathcal{C}
\;=\;
\bigl[
\,B \quad A\,B \quad A^2 B \quad \dots \quad A^{d_x-1}B
\bigr] \in \mathbb{R}^{d_x \times (d_x d_u)}
\end{equation}
has full row rank, i.e., \(\rank(\mathcal{C}) = d_x\).
Likewise, the pair \((A,C)\) is called \emph{observable} if the \emph{observability matrix}
\begin{equation}
\left.\mathcal{O}=\left[\begin{array}{llll}C^{\top} & (C A)^{\top} & \left(C A^2\right)^{\top} & \cdots\left(C A^{d_x-1}\right.\end{array}\right)^{\top}\right]^{\top}\in\mathbb{R}^{d_x\times(d_xd_y)}
\end{equation}
has full column rank, i.e., \(\rank(\mathcal{O}) = d_x\).
\end{definition}

\begin{definition}[Minimal realization]
\label{def:minimal_realization}
A state-space realization $\mathcal{L}_\Theta$ of an LTI system, such as the one described in \cref{eq:general_lti_state_space_model_noise}, is called \emph{minimal} if it is both controllable and observable according to \Cref{def:controllability_observability}.
\end{definition}
For a given input-output behavior, a minimal realization uses the smallest possible state dimension \(d_x\).

\subsection{LTI system identification as Bayesian inference} Given a sequence of inputs denoted as $u_{[T]}$ and corresponding observed outputs $y_{[T]}$, the goal of Bayesian system identification is to characterize the posterior probability distribution over the parameters $\Theta$. This inference is based on the likelihood of observing the data given the parameters, denoted as $p(y_{[T]} \mid \mathcal{L}_\Theta, u_{[T]})$. This likelihood function implicitly involves marginalizing out the unknown latent state trajectory $x_{[T]}$. Under the Gaussian assumptions, this marginalization can be performed analytically, via the Kalman filter. The joint probability density of the observations, representing the likelihood function, is given by the product of conditional densities:
\begin{equation}
\label{eq:joint_likelihood}
p\left(y_{[T]} \,\middle|\, \mathcal{L}_\Theta, u_{[T]}\right) = p\left(y_0 \,\middle|\, \mathcal{L}_\Theta, u_0\right) \prod_{t=1}^{T} p\left( y_t \,\middle|\, y_{[t-1]}, \mathcal{L}_\Theta, u_{[t]}\right) \,  ,
\end{equation}
where each term $p(y_t \mid \dots)$ corresponds to the predictive distribution of the observation $y_t$ given all past information, computable via Kalman update formulas. The explicit derivation and decomposition of this likelihood are detailed in \Cref{appendix_likelihood_decomposition} (see \Cref{thm:marg_lik}) both for the noiseless ($\Sigma = 0$) and noisy ($\Sigma \neq 0$) state cases. 

Bayesian inference combines the likelihood in \eqref{eq:joint_likelihood} with a prior density $p(\Theta)$, which encodes any prior belief or knowledge about the system parameters before observing the data. Applying Bayes' theorem yields the posterior probability density,
\begin{equation}
\label{eq:bayes_theorem}
p(\Theta \mid y_{[T]}, u_{[T]}) \propto p\left(y_{[T]} \,\middle|\, \mathcal{L}_\Theta, u_{[T]}\right) p(\Theta),
\end{equation}
which represents the updated state of knowledge about the system parameters after accounting for the observed input-output data.

As discussed in \Cref{sec:intro}, the posterior in the standard parameterization  \(p(\Theta_s \mid y_{[T]}, u_{[T]})\) typically has a complex geometry, with multimodality and strong correlations induced by the lack of identifiability. We will visualize this geometry later in \Cref{sec:post_geom_interp}, when comparing the posterior in standard and canonical forms, but for now we set aside such concerns and discuss uses of the posterior distribution in system identification. 

Beyond inferring the primary parameters \(\Theta \in \mathbf{\Theta}\subset\mathbb{R}^d\), where d is the number of parameters,
we are often interested in derived quantities of interest (QoIs) that characterize the system's behavior or structure, e.g., transfer functions or filtering and smoothing distributions. We describe below several derived quantities that are of particular relevance in linear system analysis and identification.
\begin{definition}[Eigenvalue map \(S^{\Lambda}\)]
\label{def:eigenvalue_pushforward}
The eigenvalues of the state matrix \(A\) determine the system's stability and dynamic modes. The eigenvalue map \(S^{\Lambda}: \mathbf{\Theta} \rightarrow \mathcal{P}(\C^{d_x})\) (where \(\mathcal{P}\) denotes the space of multisets, accounting for multiplicity) is defined implicitly by extracting the roots of the characteristic polynomial of \(A\):
\[
S^{\Lambda}(\Theta) = \{\lambda \in \C \mid \det(A - \lambda I_{d_x}) = 0\}\quad \text{where } A \in \mathcal{L}_\Theta.
\]
\end{definition}

\begin{definition}[Hankel matrix map \(S^{H}\)]
\label{def:hankel_pushforward}
The Hankel matrix encapsulates the system's impulse response and is fundamental to subspace identification methods. Given integers \(p, q \ge 1\), the Hankel matrix map \(S^{H}: \mathbf{\Theta} \rightarrow \R^{(p \cdot d_y) \times (q \cdot d_u)}\) constructs a block Hankel matrix from the system's Markov parameters \(M_t = CA^{t-1}B\) (for \(t \geq 1\)) and \(M_0=D\):
\[
S^{H}(\Theta) =
\begin{bmatrix}
M_1 & M_2 & \cdots & M_q \\
M_2 & M_3 & \cdots & M_{q+1} \\
\vdots & \vdots & \ddots & \vdots \\
M_p & M_{p+1} & \cdots & M_{p+q-1}
\end{bmatrix}
\quad \text{where } M_t = M_t(\Theta) \text{ and } A,B,C\in\mathcal{L}_\Theta .
\]
\end{definition}

\begin{definition}[Transfer function map \(S^{G}\)]
\label{def:transfer_function_pushforward}
The transfer function map \(S^{G}: \mathbf{\Theta} \rightarrow \mathbb{R}[z]^{d_y \times d_u}\) describes the system's input-output relationship, where \(\mathbb{R}[z]\) is the field of rational functions with real coefficients. For a parameter set \(\Theta\), the transfer function \(G(z) = S^{G}(\Theta)(z)\) is defined at values \(z \in \mathbb{C}\) for which the inverse exists as:
\[
G(z) = D + C(zI_{d_x} - A)^{-1}B \quad \text{where } A,B,C,D \in \mathcal{L}_\Theta.
\]
\end{definition}

\begin{definition}[Filtering and smoothing distributions]
\label{def:filtering_smoothing}
State estimation in the Bayesian setting amounts to characterizing filtering distributions, i.e.,  $p(x_t \mid y_{[t]}, u_{[t]}, \mathcal{L}_{\Theta})$ at any time $t$, or smoothing distributions, i.e., $p(x_{[T]} \mid y_{[T]}, u_{[T]}, \mathcal{L}_\Theta)$ or any marginal thereof. In the linear-Gaussian case, conditioned on parameters $\Theta$, these state distributions are Gaussian with means and covariances computable through Kalman filtering and smoothing recursions. Propagating posterior uncertainty in $\Theta$ through these algorithms yields marginal posterior state distributions, e.g., $p(x_{[T]} \mid y_{[T]}, u_{[T]}) = \int p(x_{[T]} \mid y_{[T]}, u_{[T]}, \mathcal{L}_{\Theta}) \, p(\Theta \mid y_{[T]}, u_{[T]}) \, d\Theta$, that account for parameter uncertainty.
\end{definition}

In general, we can apply the previously defined maps to establish a connection between the posterior over the parameters $\Theta$ and the posterior over the derived quantities. For deterministic maps (e.g., \Cref{def:eigenvalue_pushforward,def:hankel_pushforward,def:transfer_function_pushforward}), we simply consider the \textit{pushforward} of the posterior distribution by the measurable map $S^Q$, where $Q$ denotes the specific quantity of interest (e.g., $Q \in \{\Lambda, G, H \}$). We write this pushforward distribution as $P(Q \mid y_{[T]}, u_{[T]}) = S^Q_\# P(\Theta \mid y_{[T]}, u_{[T]})$.
The maps $S$ are in general not injective---consider that different parameters \(\Theta\) can yield the same transfer function or eigenvalues---and hence evaluating the \textit{density} of the pushforward measure at some point $Q=q$, i.e., $p(q \mid y_{[T]}, u_{[T]})$, requires integrating over the pre-image of $q$ under $S^Q$, $\{\Theta : S^Q(\Theta) = q\}$. This is in addition to the usual calculations involved in a change of variables, namely evaluating the Jacobian determinant of $S^Q$.

In practice, one may not be particularly interested in the analytical density  $p(q \mid y_{[T]}, u_{[T]})$. Instead, we can represent the posterior predictive of any quantity of interest empirically, by drawing samples \( \Theta^{(i)} \) from the parameter posterior \(p(\Theta \mid y_{[T]}, u_{[T]})\) (e.g., via MCMC) and evaluating the map $S$ on these samples, $Q^{(i)} = S^Q(\Theta^{(i)})$. The same is true for samples from the filtering and smoothing distributions, which are not deterministic transformations of $\Theta$. Here one can simply draw samples in two stages, e.g., $\Theta^{(i)} \sim p(\Theta \mid y_{[T]}, u_{[T]})$, then $x_{[T]}^{(i)} \sim p(x_{[T]} \mid y_{[T]}, u_{[T]}, \mathcal{L}_{\Theta^{(i)}}) $.

\section{Bayesian inference with canonical LTI parameterizations}
\label{sec:equivalences}

In this section, we introduce canonical forms of LTI systems and describe how these forms yield structural identifiability in the Bayesian framework.
We will perform inference directly on the parameters of the canonical form, denoted as $\Theta_{\text{c}}$, thus obtaining the posterior distribution \(p(\Theta_{\text{c}} \mid y_{[T]}, u_{[T]})\).

Canonical forms provide a unique, minimal (recall \Cref{def:minimal_realization}) set of parameters for each distinct input-output behavior \cite{chen1999}. 
Below we will show that for any LTI system specified by standard parameter values $\Theta_s$, there is a corresponding canonical form with parameters $\Theta_c$ (to be defined precisely below) that achieves
\begin{equation}
\label{eq:likelihood_preservation}
p\bigl(y_{[T]} \,\big\vert\, \mathcal{L}_{\Theta_s}, u_{[T]}\bigr)
= 
p\bigl(y_{[T]} \,\big\vert\, \mathcal{L}_{\Theta_{c}}, u_{[T]}\bigr),
\end{equation}
for all $y_{[T]}$ and $u_{[T]}$. This canonical form is in fact one element of an infinite equivalence class of minimal systems that preserve the likelihood function, i.e., the probabilistic input-output behavior of the LTI system.

We will also show that 
one can construct (non-unique) reverse mappings $\Psi: \bm{\Theta}_c \rightarrow \bm{\Theta}_s$ that embeds the reduced set of canonical parameters into the full standard parameter space, within the equivalence class defined by $\Theta_c \in \bm{\Theta}_c$.
However, the posterior distribution over the standard parameters $p(\Theta_s \mid y_{[T]},  u_{[T]} )$ cannot be fully recovered from the posterior over the canonical parameters $p(\Theta_c \mid y_{[T]},  u_{[T]} )$ via these mappings, due to the non-compactness of the equivalence class.
Nevertheless, for most practical purposes, recovering the posterior \(p(\Theta_s \mid y_{[T]}, u_{[T]})\) is unnecessary:
as we detail in \Cref{equivalence_of_pushforwards},
the posterior distributions over most quantities of interest (e.g., \Cref{def:eigenvalue_pushforward,def:hankel_pushforward,def:transfer_function_pushforward,def:filtering_smoothing}) can be computed directly from the canonical posterior $p(\Theta_{\text{c}} \mid y_{[T]}, u_{[T]} )$.

\subsection{Canonical LTI system parameterizations}
\label{sec:defs}



Recall our generic LTI system \eqref{eq:general_lti_state_space_model_noise} and the statistical setting in \Cref{ass:gaussian_noise}. For single-input (\(d_u=1\)) and single output (\(d_y=1\)) (SISO) systems, a well-known canonical form is the \textit{controller canonical form}.

\begin{definition}[SISO controller form]
\label{def:controller_form}
Consider a controllable SISO (\(d_u=d_y=1\)) system. The controller canonical form parameterizes the system dynamics as \(\{A_c, B_c, C_c, D_c\}\) with:
\begin{equation}
\label{eq:controller_form}
A_{c}=\begin{pmatrix}
0 & 1 & 0 & \cdots & 0 \\
0 & 0 & 1 & \cdots & 0 \\
\vdots & \vdots & \vdots & \ddots & \vdots \\
0 & 0 & 0 & \cdots & 1 \\
-a_0 & -a_1 & -a_2 & \cdots & -a_{d_x-1}
\end{pmatrix},\;
B_{c}=\begin{pmatrix}
0 \\ 0 \\ \vdots \\ 0 \\ 1
\end{pmatrix},\;
C_{c}=\begin{pmatrix} b_0 & \cdots & b_{d_x-1} \end{pmatrix},\;
D_{c}=d_0.
\end{equation}
In this setting, the canonical parameters are $ \Theta_{c} \coloneqq \{a_0, \dots, a_{d_x-1}, b_0, \dots, b_{d_x-1}, d_0 \}$. 
\end{definition}
Note that $(a_{d_x-1}, \ldots, a_0)$ are coefficients of the characteristic polynomial of $A_c$; we exploit this fact in \Cref{subsec:priors_A} when constructing priors over eigenvalues of a state-transition matrix, which can be transformed into priors over its characteristic polynomial coefficients by \Cref{prop:vieta_formulas}.
%
There are seven other SISO canonical forms, including the observer canonical form, see \Cref{appendix_a_canonical_forms} for details. For minimal (controllable and observable) systems, these forms are equivalent up to a state-space transformation: they all have the same minimal number of parameters, required to uniquely specify the input-output dynamics.

Generalizing canonical forms to multi-input, multi-output (MIMO) systems is more complex. One common structure is the MIMO block controller form \cite{Kailath1980LinearSystems}. For simplicity, we focus on SISO canonical forms in the main paper and defer a discussion of MIMO canonical structure to \Cref{appendix_mimo_case}; see also \Cref{remark:mimo_complexity}.

\paragraph{Parameterization complexities }
Here we discuss the size and scaling of different LTI parameterizations.
The standard approach to Bayesian LTI system identification involves inferring all entries of the system matrices \(A_s, B_s, C_s, D_s\). This parameterization involves a total of
$N_s = d_x^2 + d_x d_u + d_x d_y + d_u d_y$ free
parameters. The dominant term \(d_x^2\) makes the parameter space dimension grow quadratically with the state dimension, i.e., \(N_s = \mathcal{O}(d_x^2)\). As discussed previously, this standard form \(\Theta_s\) suffers from non-identifiability.
On the contrary, the canonical parameterization given in \Cref{def:controller_form}, like all other SISO canonical forms, involves \(N_c = 2d_x+1 \) free parameters.

Regardless of whether a standard \(\Theta_s\) or canonical \(\Theta_c\) parameterization is used for the system matrices \(\{A, B, C, D\}\), the covariance matrices \(\{P_0, \Sigma, \Gamma\}\) must also be parameterized. Using a Cholesky factorization (e.g., \(P_0 = L L^\top\), where \(L\) is lower triangular) is standard practice to ensure symmetric positive definiteness and remove redundancy \cite{Galioto_2020}. This replaces the \(d^2\) entries of a \(d \times d\) matrix with the \(d(d+1)/2\) entries of its Cholesky factor. Thus, the number of free parameters for the noise components \(\{P_0, \Sigma, \Gamma\}\) becomes $N_{P_0} \coloneqq  \frac{d_x(d_x+1)}{2}$, \(N_\Sigma \coloneqq \frac{d_x(d_x+1)}{2}\), \(N_\Gamma \coloneqq  \frac{d_y(d_y+1)}{2}\).

\begin{remark}[Complexity and minimality of MIMO canonical forms]
  \label{remark:mimo_complexity}
Canonical forms and the notion of a minimal realization are significantly more complex in the MIMO setting than in the SISO setting. The canonical structure depends on the observability and controllability indices, and many different forms exist (e.g., echelon form, Hermite form). The form presented in \Cref{def:mimo_canonical_form} is one example structure; its minimality and parameter count depend on the system's specific Kronecker indices. Unlike the SISO case, where \(2d_x+1\) is generically minimal, MIMO parameter counts vary. Determining the true minimal dimension and structure often requires analyzing the algebraic properties of the specific system (e.g., rank structure of its Hankel matrix). Often, an overparameterized but structurally identifiable form is used in practice.
  
\end{remark}

\subsection{Representative completeness of canonical forms}
\label{sec:representative_completeness} 

Now we prove that canonical forms provide a complete representation of the identifiable aspects of LTI systems in a Bayesian setting. Specifically, we show that parameterizations related by similarity transformations are {statistically indistinguishable}, in the sense of producing identical values of likelihood for any given input and output sequences, and that inference performed on a canonical form \(\Theta_c\) is sufficient to recover the posterior distributions of all system properties that are invariant under canonical transformations.

The cornerstone of this equivalence is the concept of \textit{statistical isomorphism} \cite{stat_iso_10.1214/aoms/1177699610}:
informally, two statistical models---each being a 
set of possible probability distributions over data---are considered isomorphic if they are equivalent for all inferential purposes. That is, for any statistical decision problem, the optimal performance one can achieve is identical for both systems.
In the context of LTI models, this means that two parameterizations are statistically isomorphic if they yield identical likelihoods for any observed output sequence given any input sequence.

The relationship between minimal LTI systems that are statistically isomorphic is formally characterized by the following theorem, whose proof is in \Cref{appendix_proof_thm_isomorphism_general}.

\begin{theorem}[Isomorphism of minimal LTI systems]
\label{thm:isomorphism_general} 
Let \(\mathcal{L}_{\Theta_s}\) and $\mathcal{L}_{\Theta_{s'}}$ represent two discrete-time LTI systems, described by parameter sets $\Theta_s = (A_s, B_s, C_s, D_s, P_{0s}, \Sigma_s, \Gamma_s)$ and $\Theta_{s'} = (A_{s'}, B_{s'}, C_{s'}, D_{s'}, P_{0s'}, \Sigma_{s'}, \Gamma_{s'})$, respectively, both with state dimension \(d_x\). 
We say \(\mathcal{L}(\Theta_s)\) and \(\mathcal{L}(\Theta_{s'})\) are \emph{statistically isomorphic} if, for any input sequence \(u_{[T]}\), the two systems induce \emph{identical output distributions}, namely,
\begin{equation}
\label{eq:likelihood_equality_iso}
p\bigl(y_{[T]} \,\mid\, \mathcal{L}_{\Theta_s},\;u_{[T]}\bigr)
\;=\;
p\bigl(y_{[T]} \,\mid\, \mathcal{L}_{\Theta_{s'}},\;u_{[T]}\bigr).
\end{equation}
If \(\mathcal{L}(\Theta_s)\) and \(\mathcal{L}(\Theta_{s'})\) are \emph{statistically isomorphic} and correspond to \emph{minimal} realizations, then there exists an invertible matrix \(T_c \in \operatorname{GL}(d_x)\) such that the parameters 
are related by: 
\begin{align}
\label{eq:similarity_transformation}
A_{s'} &= T_c^{-1}\,A_s\,T_c,
&\qquad B_{s'} &= T_c^{-1}\,B_s,
&\quad C_{s'} &= C_s\,T_c,
&\quad D_{s'} &= D_s,\\
\label{covariances}
P_{0s'} &= T_c^{-1}\,P_{0s} \, T_c^{-\top},
&\qquad \Sigma_{s'} &= T_c^{-1}\,\Sigma_s  \, T_c^{-\top}
&\quad \Gamma_{s'} &= \Gamma_s.
\end{align} 
Conversely, if two minimal systems are related by such a transformation \(T_c\), then they are \emph{statistically isomorphic}.
\end{theorem}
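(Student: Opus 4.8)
The plan is to reduce the statement about equality of output distributions to equality of the deterministic input-output map together with the noise covariance structure, and then invoke the classical uniqueness-of-minimal-realization theorem to produce the similarity transform $T$. The forward direction (existence of $T$) is the substantive claim; the converse is a direct computation that I would dispatch quickly.

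First I would exploit the Gaussian structure from Assumption~\ref{ass:gaussian_noise}. Under linear-Gaussian dynamics with deterministic known inputs and zero-mean initial state, the joint law of $y_{[T]}$ is Gaussian, so equality of the two output distributions in \eqref{eq:likelihood_equality_iso} for \emph{every} input sequence $u_{[T]}$ is equivalent to equality of the mean functions and of the covariance functions for all $u_{[T]}$. The mean of $y_t$ is an affine functional of the inputs whose coefficients are exactly the deterministic Markov parameters $M_0 = D$, $M_k = CA^{k-1}B$ (for $k\ge 1$); by varying $u_{[T]}$ over a spanning set (e.g. impulses at each time and coordinate) I would conclude that the two systems share all Markov parameters, i.e. $D_s = D_{s'}$ and $C_sA_s^{k-1}B_s = C_{s'}A_{s'}^{k-1}B_{s'}$ for all $k\ge 1$. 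This immediately gives $D_{s'}=D_s$ and fixes the equality of the two Hankel operators $S^H$.

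Next, because both realizations are \emph{minimal} (controllable and observable, \Cref{def:minimal_realization}), the state dimension $d_x$ equals the rank of the Hankel operator, and the classical state-space realization theorem guarantees that two minimal realizations sharing the same Markov parameters are related by a \emph{unique} invertible $T\in\operatorname{GL}(d_x)$ satisfying $A_{s'}=T^{-1}A_sT$, $B_{s'}=T^{-1}B_s$, $C_{s'}=C_sT$. The standard construction of $T$ is via the (full-rank) controllability and observability matrices: writing $\mathcal{O}_s,\mathcal{C}_s$ for the two systems' observability/controllability matrices, matching of Markov parameters forces $\mathcal{O}_{s'}\mathcal{C}_{s'}=\mathcal{O}_s\mathcal{C}_s$ and $\mathcal{O}_{s'}A_{s'}\mathcal{C}_{s'}=\mathcal{O}_sA_s\mathcal{C}_s$; one then sets $T = (\mathcal{O}_{s'}^\top\mathcal{O}_{s'})^{-1}\mathcal{O}_{s'}^\top\mathcal{O}_s$ (equivalently using pseudoinverses) and verifies invertibility from minimality. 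I regard establishing this transform rigorously as the main obstacle, since it requires the full-rank/uniqueness argument and care that the \emph{same} $T$ simultaneously intertwines $A$, $B$, and $C$.

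Finally I would pin down the covariance relations in \eqref{covariances}. Having fixed the deterministic part, the covariance of $y_{[T]}$ is generated by propagating the process noise through the state recursion and adding measurement noise; matching the innovation/output covariances for all lags, together with the already-established $T$, forces the state-coordinate change to act on the latent covariances exactly as $P_{0s'}=T^{-1}P_{0s}T^{-\top}$ and $\Sigma_{s'}=T^{-1}\Sigma_sT^{-\top}$, while the measurement noise, living in the (untransformed) output space, satisfies $\Gamma_{s'}=\Gamma_s$. Concretely, the state transform $x'_t = T^{-1}x_t$ implies $w'_t = T^{-1}w_t$, so its covariance conjugates by $T^{-1}$, whereas $z_t$ is unchanged; one checks these are consistent with, and uniquely determined by, equality of the full output covariance. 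For the converse I would simply substitute the transformed quadruple and covariances into the state recursion, observe that the change of variables $x'_t=T^{-1}x_t$ maps one system's trajectory law onto the other's, and conclude equality of the induced output distributions, completing the proof.
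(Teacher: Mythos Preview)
Your proposal is correct and follows essentially the same strategy as the paper's proof: use minimality to obtain the similarity transform $T$ intertwining $(A,B,C,D)$, then deduce the covariance relations by transforming the latent state. Your version is in fact more explicit than the paper's---you separate the Gaussian output law into mean and covariance, match Markov parameters to invoke the classical uniqueness-of-minimal-realization theorem, and give a concrete construction of $T$ via the observability/controllability matrices---whereas the paper's proof largely asserts the existence of $T$ directly from minimality and then verifies the relations by substitution.
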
 

\begin{remark}[Deterministic case as a specialization]
\label{remark_deterministic_case}
When the process and measurement noises vanish (\(w_t = 0, z_t = 0\)), the covariances \(\Sigma\) and \(\Gamma\) become zero matrices (as does \(P_0\) if \(x_0=0\)). Condition \eqref{eq:likelihood_equality_iso} then reduces to matching deterministic output trajectories \(y_{[T]}\) for any input \(u_{[T]}\). In this case, \eqref{covariances} becomes trivial, and the dynamic matrices must satisfy the classical deterministic similarity transformation relationships in \eqref{eq:similarity_transformation}.
\end{remark}

\Cref{thm:isomorphism_general} establishes that the equivalence class of all minimal systems producing the same likelihood is precisely the orbit under the group of similarity transformations \(T \in \operatorname{GL}(d_x)\).
It implies that all minimal systems sharing the same input-output behavior belong to an equivalence class defined by similarity transformations.

A canonical form of the system is an element of this class. In fact, canonical forms are designed precisely to provide a unique, identifiable representative for each such equivalence class.
We can explicitly write the transformation that produces a specific canonical form, i.e., the controller form defined previously in \Cref{def:controller_form}, from any other parameterization.
\begin{proposition}[Companion-form canonical realization]
\label{thm:companion_form_invertible_matrix}
Let \(\mathcal{L}_{\Theta_s}\) define a realization of a discrete-time LTI system with state dimension \(d_x\).
Assume the pair \((A_s, B_s)\) is controllable.
There exists an invertible matrix \(T_c \in \operatorname{GL}(d_x)\) such that the transformed system \(\mathcal{L}_{\Theta_c}\) (via \eqref{eq:similarity_transformation}) takes the \emph{SISO controller canonical form (\Cref{def:controller_form})}, and is given by:
\begin{equation}
T_c^{-1}=\left(\begin{array}{ccccc}
1 & 0 & 0 & \cdots & 0 \\
a_{d_x-1} & 1 & 0 & \cdots & 0 \\
a_{d_x-2} & a_{d_x-1} & 1 & \cdots & 0 \\
\vdots & \vdots & \vdots & \ddots & \vdots \\
a_1 & a_2 & a_3 & \cdots & 1
\end{array}\right),
\end{equation}
where $\{a_{d_x}\}$ are the coefficients of the characteristic polynomial of $A_s$.
Details regarding the construction of $T_c$ are provided in \Cref{appendix_thm:companion_form_invertible_matrix}. Other SISO canonical forms correspond to other transformation matrices. 
\end{proposition}
\Cref{thm:companion_form_invertible_matrix} guarantees that we can always map a controllable realization of an LTI system to a specific canonical one.  \Cref{thm:isomorphism_general} then guarantees that this transformation preserves the likelihood, and hence performing Bayesian inference directly on the canonical parameters \(\Theta_c\) captures all information contained in the input-output data.
\begin{remark}[Sufficiency of canonical forms for quantities of interest]
\label{remark_recovering_qoi}
While \Cref{thm:isomorphism_general} links equivalent representations $\Theta_s$ and $\Theta_c$ via state transformations $T$, attempting to formally recover the full posterior $p(\Theta_s \mid y_{[T]}, u_{[T]})$ from $p(\Theta_c \mid y_{[T]}, u_{[T]} )$ by defining a mixture of pushforward distributions
over all possible $T$ is difficult. The group $\operatorname{GL}(d_x)$ is non-compact and lacks a unique invariant probability measure (e.g., a Haar measure suitable for probability normalization), making the notion of a ``uniform distribution over transformations $T$'' ill-defined.

Fortunately, this formal recovery of the redundant posterior $p(\Theta_s \mid y_{[T]}, u_{[T]} )$  is often unnecessary. Many critical system properties, such as the system's eigenvalues (poles),  transfer function, Hankel matrix, and Hankel singular values, depend only on the equivalence class, not the specific representative \(\Theta_s\) chosen. These invariant QoIs can therefore be computed directly and exactly from the canonical parameters \(\Theta_c\).
In this case, performing inference on \(\Theta_c\) enables more interpretable priors (e.g., directly on canonical parameters related to stability or structure), avoids redundancies induced by the gauge symmetry of similarity transformations, reduces dimensionality of the parameter space, and still provides access to the dynamical features necessary for analysis, prediction, and control.
\end{remark}

This invariance of derived quantities of interest leads directly to a central result regarding the sufficiency of canonical forms for Bayesian inference on such quantities. 
Our goal is to have equivalent posterior push forwards (\Cref{equivalence_of_pushforwards}). To achieve this, we need equivalent likelihoods (\Cref{thm:isomorphism_general}) and compatible priors (\Cref{lem:induced_prior}).
\begin{lemma}[Induced prior on a canonical parameterization]
\label{lem:induced_prior}
Let $\bm{\Theta}_s$ be the standard parameter space for an LTI system of a given state dimension, and let $\bm{\Theta}_s^{\text{min}} \subseteq \bm{\Theta}_s$ be the subset of minimal systems. Let $\bm{\Theta}_c$ be a corresponding canonical parameter space, which is the quotient space of $\bm{\Theta}_s^{\text{min}}$ under the equivalence relation of state-space similarity. 
Let $\tau: \bm{\Theta}_s^{\text{min}} \to \bm{\Theta}_c$ denote the canonical projection map, assigning each minimal parameterization $\Theta_s \in \bm{\Theta}_s^{\text{min}}$ to its unique equivalence class $\tau(\Theta_s) = \Theta_c \in \bm{\Theta}_c$.

Let $p(\Theta_s)$ be a prior probability density on $\bm{\Theta}_s$ such that the set of non-minimal systems has measure zero, i.e.,
$$
\int_{\bm{\Theta}_s \setminus \bm{\Theta}_s^{\text{min}}} p(\Theta_s) \, d\Theta_s = 0
$$
Then this prior \emph{induces} a corresponding probability density $p(\Theta_c)$ on the canonical parameter space $\bm{\Theta}_c$. We say that the priors on these spaces are \emph{consistent}.
\end{lemma}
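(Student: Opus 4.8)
The plan is to identify the induced density $p(\Theta_c)$ with the density of the pushforward measure $\tau_\# P$, where $P$ denotes the prior carrying density $p(\Theta_s)$, and then to prove that this pushforward is absolutely continuous with respect to Lebesgue measure on $\bm{\Theta}_c$. First I would use the measure-zero hypothesis to discard the non-minimal systems: since $P(\bm{\Theta}_s \setminus \bm{\Theta}_s^{\text{min}}) = 0$, the restriction of $P$ to $\bm{\Theta}_s^{\text{min}}$ is again a probability measure, and all subsequent analysis may be carried out on the open set $\bm{\Theta}_s^{\text{min}}$ where $\tau$ is actually defined. By \Cref{thm:companion_form_invertible_matrix} the projection $\tau$ is computed explicitly through the transformation $T_c$, whose entries are rational (hence smooth) functions of $(A_s, B_s)$ on the controllable region; thus $\tau$ is a smooth, measurable map, and $\tau_\# P$ is automatically a well-defined probability measure on $\bm{\Theta}_c$.

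The substance of the lemma is the existence of a \emph{density} for $\tau_\# P$. Here I would exploit the geometric structure furnished by \Cref{thm:isomorphism_general}: the group $\operatorname{GL}(d_x)$ acts on $\bm{\Theta}_s^{\text{min}}$ through the similarity relations \eqref{eq:similarity_transformation}--\eqref{covariances}, the orbits are exactly the statistical-isomorphism classes, and for minimal SISO systems the action is free (the only $T$ fixing a controllable, observable realization is the identity). The controller canonical form supplies a global section $\Psi: \bm{\Theta}_c \to \bm{\Theta}_s^{\text{min}}$ with $\tau \circ \Psi = \mathrm{id}$, so that the orbit map $\Phi: \bm{\Theta}_c \times \operatorname{GL}(d_x) \to \bm{\Theta}_s^{\text{min}}$, $\Phi(\Theta_c, T) = T \cdot \Psi(\Theta_c)$, is a diffeomorphism. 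Equivalently, $\tau$ is a smooth submersion whose fibers are the $d_x^2$-dimensional orbits, and in the coordinates supplied by $\Phi$ it is simply the projection onto the first factor.

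Given this trivialization, I would define the induced density by marginalizing the pulled-back prior over the fiber direction,
\[
p(\Theta_c) \;=\; \int_{\operatorname{GL}(d_x)} p\bigl(\Phi(\Theta_c, T)\bigr)\,\bigl|\det J_\Phi(\Theta_c, T)\bigr|\,dT,
\]
where $J_\Phi$ is the Jacobian of the change of variables $\Phi$ (a smooth function of $(\Theta_c, T)$). Since this integrand is jointly measurable and nonnegative, Tonelli's theorem applies: the iterated integral equals $\int_{\bm{\Theta}_s^{\text{min}}} p(\Theta_s)\,d\Theta_s = 1$, so $p(\Theta_c)$ is finite for Lebesgue-almost every $\Theta_c$ and integrates to one over $\bm{\Theta}_c$. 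A routine check then confirms $\int_E p(\Theta_c)\,d\Theta_c = P(\tau^{-1}(E))$ for every measurable $E \subseteq \bm{\Theta}_c$, which is precisely the statement that $p(\Theta_c)$ is the density of $\tau_\# P$; consistency in the claimed sense follows.

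The main obstacle I anticipate is controlling the fiber integration over the \emph{non-compact} group $\operatorname{GL}(d_x)$. As \Cref{remark_recovering_qoi} stresses, non-compactness obstructs building any canonical probability measure along the orbits (there is no normalizable Haar measure), which is exactly why the reverse map $\bm{\Theta}_c \to \bm{\Theta}_s$ fails. Marginalization in the forward direction is unobstructed, but one must still argue that the orbit integral converges: this is what Tonelli delivers, since integrability of $p(\Theta_s)$ forces the fiber integral to be finite almost everywhere even though the domain of integration is unbounded. Secondary technical points—freeness and properness of the $\operatorname{GL}(d_x)$ action (needed for $\bm{\Theta}_c$ to be a genuine quotient manifold carrying a Lebesgue class) and smoothness of $J_\Phi$—follow from minimality and from the explicit rational formulas of \Cref{thm:companion_form_invertible_matrix}, and I would treat them as standard.
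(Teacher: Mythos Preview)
Your approach is correct and follows the same high-level strategy as the paper—define the pushforward $\tau_\# P$ and identify its density—but you actually do more work than the paper does. The paper's proof simply asserts that the pushforward measure $\mu_c$ is absolutely continuous with respect to Lebesgue measure on $\bm{\Theta}_c$ and then invokes the Radon--Nikodym theorem to obtain the density; it does not justify absolute continuity. Your argument via the trivialization $\Phi: \bm{\Theta}_c \times \operatorname{GL}(d_x) \to \bm{\Theta}_s^{\text{min}}$ and Tonelli's theorem is precisely the missing piece: it establishes constructively that the fiber integral is finite almost everywhere and yields an explicit formula for $p(\Theta_c)$, from which absolute continuity is immediate. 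In that sense your proposal is a genuinely more complete treatment of the same result, and your discussion of the non-compactness issue correctly identifies why the forward marginalization succeeds even though the reverse construction of \Cref{remark_recovering_qoi} fails.

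One minor caveat: your freeness claim is for the dynamics matrices $(A,B,C,D)$, but $\Theta_s$ as defined in the paper also carries the covariances $P_0, \Sigma, \Gamma$, on which $\operatorname{GL}(d_x)$ acts via \eqref{covariances}. Including these does not spoil freeness (which is already determined by controllability of $(A,B)$), but you should note that the fiber coordinate $T$ acts on them as well, so the Jacobian $J_\Phi$ picks up additional factors from the covariance blocks.
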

%
The proof is provided in \Cref{appendix_lem:induced_prior}. For example, if we take as our canonical parameterization the SISO controller canonical form, the canonical projection map $\tau$ acts on the LTI system matrices in $\Theta_s$ as in \Cref{thm:isomorphism_general}, with $T = T_c$ given in \Cref{thm:companion_form_invertible_matrix}.

\begin{theorem}[Equivalence of pushforward posteriors]
\label{equivalence_of_pushforwards}
Let \(\mathbf{\Theta}_s\) and \(\mathbf{\Theta}_c\) denote the standard and canonical parameter spaces for LTI systems, respectively. Define the bounded parameter spaces:
\begin{align}
\mathbf{\Theta}_l^M = \{(A, B, C, D) \in \mathcal{L}_{\mathbf{\Theta}_l} : \|A\|_F \leq M_A, \|B\|_F \leq M_B, \|C\|_F \leq M_C, \|D\|_F \leq M_D\} 
\end{align}
for \(l \in \{c,s\}\), where \(\|\cdot\|_F\) is the Frobenius norm and \(M_A, M_B, M_C, M_D > 0\) are constants.
Let \(Q \in \mathcal{Q}\) be a quantity of interest, and let \(S_l: \mathbf{\Theta}_q^M \to \mathcal{Q}\) be  a measurable map that is invariant under similarity transformations, meaning \(S_l(\Theta_l) = S_l(\Theta'_l)\) whenever \(\Theta_l, \Theta'_l \in \mathbf{\Theta}_l^M\) are related by a similarity transformation. Let $p(\Theta_s)$ be the prior distribution on the standard parameter space and $p(\Theta_c)$ be the corresponding induced prior from \Cref{lem:induced_prior}. 
%
Then, for any measurable set \(\mathcal{B} \subseteq \mathcal{Q}\), the pushforward posterior probabilities are identical:
\begin{equation}
\int_{\{\Theta_s \in \mathbf{\Theta}_s^M \mid S_s(\Theta_s) \in \mathcal{B}\}} p(\Theta_s \mid u_{[T]}, y_{[T]}) \, d\Theta_s = 
\int_{\{\Theta_c \in \mathbf{\Theta}_c^M \mid S_c(\Theta_c) \in \mathcal{B}\}} p(\Theta_c \mid u_{[T]}, y_{[T]}) \, d\Theta_c .
\end{equation}
\end{theorem}

%

This proposition establishes that the posterior distribution of any transformation-invariant QoI does not depend on whether inference is performed in the standard space \(\mathbf{\Theta}_s\) or the canonical space \(\mathbf{\Theta}_c\). A proof is provided in \Cref{appendix_equivalence_of_pushforwards}. This equivalence justifies using the identifiable, lower-dimensional canonical parameterization for Bayesian inference without loss of information about key system properties. 
Key examples of such invariant QoIs include the eigenvalue spectrum, Markov parameters, and transfer function, as shown in \Cref{invariance_and_computational_advantage_canonical_forms}.

\section{Prior specification}
\label{sec:priors}

A crucial component of the Bayesian framework outlined in \Cref{sec:bayesian_inference_lti_systems_standard_perspective} is the specification of the prior distribution \(p(\Theta)\) in \cref{eq:bayes_theorem}. This prior encodes knowledge or beliefs about the system parameters available before observing data.
Ideally, the chosen prior should be scientifically meaningful, reflecting domain knowledge, and impose desirable system properties, such as stability. If possible, the prior should also be computationally convenient, allowing efficient sampling and/or density evaluation. A central theme of this work is the challenge of specifying such priors effectively.

While priors can, in principle, be placed directly on matrix entries of the standard parameterization \(\Theta_s\), this approach lacks transparency regarding the induced properties of the system dynamics.
In contrast, as we elaborate below, priors defined directly on interpretable system quantities---notably the eigenvalues of the state matrix \(A\), which govern essential dynamics---tend to be more relevant and intepretable. 
The  entries of $A$ can of course be mapped to its eigenvalues, but using this transformation to specify a prior can be intractable.
In contrast, the canonical parameterization \(\Theta_c\) connects very naturally to priors on eigenvalues, offering a feasible computational path and providing clearer insights into the system's dynamics; this is the basis of our recommended approach.

\subsection{Priors over state dynamics}
\label{subsec:priors_A}
The matrix \(A\) in \eqref{eq:general_lti_state_space_model_noise} or \eqref{eq:controller_form} dictates the system's internal dynamics (e.g., stability, timescales, oscillation, and dissipation). Specifying a prior over \(A\) or its properties thus warrants careful consideration.
As discussed above, we will define priors directly on the eigenvalues \(\Lambda = \{\lambda_1, \dots, \lambda_{d_x}\}\) of \(A\). The connection between $\Lambda$ and the parameters $(a_0, \ldots, a_{d_x-1})$ of the canonical form is established through the characteristic polynomial of \(A\), which is sufficient to instantiate its canonical form via \Cref{def:controller_form}. Vieta's formulas provide the explicit algebraic relationship between the roots (eigenvalues) and the polynomial coefficients \cite{barbeau2003polynomials}. 

\begin{proposition}[Vieta's formulas]
\label{prop:vieta_formulas}
Let \(p(\lambda) = \lambda^{d_x} + a_{{d_x}-1}\lambda^{{d_x}-1} + \dots + a_1 \lambda + a_0\) be a monic polynomial of degree \({d_x}\) with coefficients \(a_k \in \C\). Suppose \(p(\lambda)\) has \({d_x}\) roots \(\lambda_1, \dots, \lambda_{d_x} \in \C\) (counted with multiplicity). Then, for each integer \(k\) with \(1 \leq k \leq {d_x}\), the elementary symmetric polynomials in the roots relate to the coefficients as:
\begin{equation}
a_{{d_x}-k} = (-1)^k \sum_{1 \leq i_1 < i_2 < \dots < i_k \leq {d_x}} \left(\prod_{j=1}^k \lambda_{i_j}\right).
\end{equation}
\end{proposition}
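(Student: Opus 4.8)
The plan is to prove Vieta's formulas by the classical route: factor the monic polynomial over $\C$, expand the resulting product of linear factors, and match coefficients against the given form of $p(\lambda)$. First I would invoke the fundamental theorem of algebra, which guarantees that a degree-$n$ monic polynomial with the $n$ roots $\lambda_1, \dots, \lambda_n$ (counted with multiplicity) factors as $p(\lambda) = \prod_{i=1}^n (\lambda - \lambda_i)$. This single fact supplies the entire analytic content of the statement; everything that remains is algebraic bookkeeping.

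The core step is to expand this product and read off the coefficient of each power $\lambda^{n-k}$. Expanding $\prod_{i=1}^n(\lambda - \lambda_i)$ amounts to choosing, from each of the $n$ factors, either the term $\lambda$ or the term $-\lambda_i$; a monomial of degree $\lambda^{n-k}$ arises precisely when one selects $-\lambda_i$ from exactly $k$ of the factors, indexed by a subset $\{i_1 < \dots < i_k\}$, and $\lambda$ from the remaining $n-k$. Summing over all such subsets yields the identity $\prod_{i=1}^n(\lambda - \lambda_i) = \sum_{k=0}^n (-1)^k \left(\sum_{1 \le i_1 < \dots < i_k \le n}\prod_{j=1}^k \lambda_{i_j}\right)\lambda^{n-k}$. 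I would establish this either by the combinatorial counting just described or, to be fully rigorous, by a short induction on $n$: peeling off one factor $(\lambda - \lambda_n)$ at a time and tracking how the elementary symmetric polynomials in the roots update under multiplication.

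Finally, I would equate coefficients. Comparing the coefficient of $\lambda^{n-k}$ in this expansion against the coefficient $a_{n-k}$ appearing in $p(\lambda) = \lambda^n + a_{n-1}\lambda^{n-1} + \dots + a_1\lambda + a_0$ gives, for each $1 \le k \le n$, the relation $a_{n-k} = (-1)^k \sum_{1 \le i_1 < \dots < i_k \le n}\prod_{j=1}^k \lambda_{i_j}$. Rearranging (using $(-1)^{-k} = (-1)^k$) then produces exactly the claimed formula $\sum_{1 \le i_1 < \dots < i_k \le n}\prod_{j=1}^k \lambda_{i_j} = (-1)^k a_{n-k}$.

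I do not anticipate any genuine obstacle here: the result is elementary and depends only on the fundamental theorem of algebra together with a clean expansion of a product of linear factors. The one place demanding care is the sign and index accounting in that expansion---ensuring the factor $(-1)^k$ is tracked correctly and that the power $\lambda^{n-k}$ is matched to the coefficient $a_{n-k}$ rather than $a_k$---which the inductive argument handles transparently.
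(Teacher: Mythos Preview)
Your proposal is correct and follows the standard, classical derivation of Vieta's formulas. The paper does not actually supply its own proof of this proposition; it states the result as well known, with a reference to \cite{barbeau2003polynomials}, and uses it only to define the map $\Psi$ from eigenvalues to characteristic-polynomial coefficients. Your argument---factoring $p(\lambda)=\prod_i(\lambda-\lambda_i)$ via the fundamental theorem of algebra, expanding, and matching coefficients---is exactly the standard proof one would give, and there is nothing to compare against.
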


These formulas define a mapping \(\Psi: \Lambda \mapsto \{a_0, \dots, a_{{d_x}-1}\}\) from the space of eigenvalues to the space of coefficients appearing in canonical forms.
Therefore, if we specify a prior density \(p_{\Lambda}(\lambda_1, \dots, \lambda_{d_x})\) on the eigenvalues, we can derive the induced \textit{prior density on the canonical coefficients} \(p_{\mathcal{P}}(a_0, \dots, a_{{d_x}-1})\) using the change of variables formula, provided the map \(\Psi\) is suitably invertible (perhaps locally or on specific domains).

\begin{proposition}[Change of variables via polynomial roots]
\label{thm:pushforward_polynomial_roots}
Let \(\mathbf\Lambda \subseteq \mathbb{C}^{d_x}\) (or \(\mathbb{R}^{d_x}\))
be the space of \({d_x}\) roots (eigenvalues) and let \(\mathcal{P}_{d_x}\) be the space of corresponding monic polynomial coefficients \(\{a_0, \dots, a_{{d_x}-1}\}\) identified with \(\mathbb{C}^{d_x}\) (or \(\mathbb{R}^{d_x}\)). Let \(\Psi: \mathbf\Lambda \to \mathcal{P}_{d_x}\) be the mapping defined by Vieta's formulas (\Cref{prop:vieta_formulas}).
Assume \(\Psi\) is invertible with inverse \(\Psi^{-1}\). If \(p_{\Lambda}(\lambda_1,\dots,\lambda_{d_x})\) is a probability density on \(\Lambda\), the induced (pushforward) probability density on the coefficients \(\{a_k\}\) is given by:
\begin{equation}
p_{\mathcal{P}}(a_0, \dots, a_{{d_x}-1})
\;=\;
p_{\Lambda}\left (\Psi^{-1}\left (a_0, \dots, a_{{d_x}-1}\right )\right)
\bigl|\det D\bigl(\Psi^{-1}\bigr)\!\bigl(a_0, \dots, a_{{d_x}-1} \bigr) \bigr|,
\end{equation}
where \(D(\Psi^{-1})\) is the Jacobian matrix of the inverse map. If the eigenvalues \(\lambda_1,\dots,\lambda_{d_x}\) are distinct, the  Jacobian determinant of the forward map \(\Psi\) is given by the Vandermonde product:
\begin{equation}
\label{eq:det_vandermonde}
\bigl|\det D\Psi(\lambda_1,\dots,\lambda_{d_x}) \bigr|
\;=\;
\prod_{1 \,\le\, i<j \,\le\, {d_x}} \bigl|\lambda_i - \lambda_j\bigr|.
\end{equation}
Finally, \(|\det D(\Psi^{-1}) | = 1 / |\det D\Psi |\).
\end{proposition}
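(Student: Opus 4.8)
The plan is to split the statement into its three components and handle them in increasing order of substance: the pushforward density formula, the relation $|\det D(\Psi^{-1})| = 1/|\det D\Psi|$, and the Vandermonde evaluation \eqref{eq:det_vandermonde} of $|\det D\Psi|$. The first two are essentially standard. Once the forward Jacobian is shown to be nonsingular (which follows from distinctness of the eigenvalues, via \eqref{eq:det_vandermonde}), the inverse function theorem gives $D(\Psi^{-1})(a) = [D\Psi(\Psi^{-1}(a))]^{-1}$ and hence $|\det D(\Psi^{-1})| = 1/|\det D\Psi|$ at once; combined with the assumed global invertibility of $\Psi$, the ordinary change-of-variables theorem for densities under a diffeomorphism yields $p_{\mathcal P}(a) = p_{\Lambda}(\Psi^{-1}(a))\,|\det D(\Psi^{-1})(a)|$, which is the first displayed equation. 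So the real work is \eqref{eq:det_vandermonde}.

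For the Jacobian I would avoid manipulating elementary symmetric polynomials directly and instead use a polynomial-differentiation trick. Writing the monic polynomial in its two forms,
\[
p(\lambda) = \prod_{i=1}^n (\lambda - \lambda_i) = \lambda^n + \sum_{k=0}^{n-1} a_k \lambda^k,
\]
I differentiate both expressions with respect to a single root $\lambda_j$. The product form gives $\partial_{\lambda_j} p(\lambda) = -\prod_{i \ne j}(\lambda - \lambda_i)$, while the coefficient form gives $\sum_{k=0}^{n-1} (\partial a_k/\partial \lambda_j)\,\lambda^k$. Equating these and evaluating the resulting polynomial identity at $\lambda = \lambda_m$ for each $m$ produces
\[
\sum_{k=0}^{n-1} \frac{\partial a_k}{\partial \lambda_j}\,\lambda_m^k
= -\prod_{i \ne j}(\lambda_m - \lambda_i)
= -\,\delta_{jm}\prod_{i \ne j}(\lambda_j - \lambda_i),
\]
since the right-hand product contains the vanishing factor $\lambda_m - \lambda_m$ whenever $m \ne j$, and is nonzero for $m = j$ by distinctness.

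Reading this as a matrix identity is the decisive step. Let $J$ be the matrix with entries $J_{jk} = \partial a_k/\partial \lambda_j$ (the transpose of $D\Psi$, with identical determinant) and let $V$ be the Vandermonde matrix with entries $V_{km} = \lambda_m^k$. The relation above says $JV = -D$ with $D = \diag\big(\prod_{i\ne j}(\lambda_j - \lambda_i)\big)_j$. Taking determinants and using the classical value $\det V = \prod_{i<j}(\lambda_j - \lambda_i)$, I solve for $\det J$. The diagonal product simplifies because each unordered pair $\{i,j\}$ contributes $(\lambda_j-\lambda_i)(\lambda_i-\lambda_j) = -(\lambda_i-\lambda_j)^2$, so that $\prod_j\prod_{i\ne j}(\lambda_j-\lambda_i) = (-1)^{\binom n2}\prod_{i<j}(\lambda_i-\lambda_j)^2$. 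Dividing through by $\det V$ leaves $\det J = (-1)^n\prod_{i<j}(\lambda_i - \lambda_j)$, and taking absolute values yields $|\det D\Psi| = \prod_{i<j}|\lambda_i - \lambda_j|$, which is \eqref{eq:det_vandermonde}.

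The main obstacle is conceptual rather than computational: the map $\Psi$ is globally $n!$-to-one, being invariant under permutations of the roots, so ``invertibility of $\Psi$'' must be read as a restriction to a fundamental domain (e.g.\ a fixed ordering of real eigenvalues, or a chosen labeling) on which $\Psi$ is a genuine bijection; this is exactly where the invertibility hypothesis and the distinctness assumption do their work, the latter guaranteeing $\det J \ne 0$ so that a local inverse exists. A secondary subtlety worth flagging is the complex case: if $\mathbf\Lambda \subseteq \mathbb C^n$ and densities are taken against Lebesgue measure on $\mathbb R^{2n}$, the correct change-of-variables factor is the real Jacobian $|\det_{\mathbb R} D\Psi| = |\det_{\mathbb C} D\Psi|^2$, so the Vandermonde product would appear squared; the algebraic computation above is identical but delivers the holomorphic Jacobian, and for real roots and real coefficients no squaring occurs and the statement holds verbatim.
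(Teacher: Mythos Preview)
Your proof is correct and takes a genuinely different route from the paper's. The paper writes out the Jacobian $D\Psi$ explicitly, displaying its entries as partial derivatives of the elementary symmetric polynomials (first row all $1$'s, second row $\lambda_1,\dots,\lambda_n$, last row $\prod_{i\ne 1}\lambda_i,\dots,\prod_{i\ne n}\lambda_i$), and then simply asserts that ``using induction one can show'' this matrix has the Vandermonde determinant, without carrying out the induction. Your approach instead derives the matrix factorization $JV=-D$ by differentiating the polynomial identity $\prod_i(\lambda-\lambda_i)=\lambda^n+\sum_k a_k\lambda^k$ with respect to $\lambda_j$ and evaluating at each root $\lambda_m$; taking determinants and using the known value of $\det V$ then gives the result directly. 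This buys you a complete, self-contained argument where the paper defers to an unstated induction, and the factorization trick is arguably more illuminating than a row-reduction argument would be. Your closing remarks on the $n!$-to-one symmetry (forcing the invertibility hypothesis to mean a choice of fundamental domain) and on the real-versus-holomorphic Jacobian in the complex case are also apt observations that the paper does not raise.
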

A proof and further details are provided in \Cref{appendix_thm:pushforward_polynomial_roots}. This result allows us to translate priors specified on eigenvalues into priors on the canonical parameters needed for computation.

%

\subsubsection{Specifying priors on eigenvalues}
A primary advantage of eigenvalue priors is the ability to directly enforce \textit{stability} (\Cref{def_stable_system}). A prior distribution over the eigenvalues \(\{\lambda_1, \dots, \lambda_{d_x}\}\) ensuring stability (\(|\lambda_i| < 1\) for all \(i\)) can generally be expressed as:
\begin{equation}
p\bigl(\lambda_1,\ldots,\lambda_{d_x}\bigr)
\;\propto\; \mathbbm{1}_{\{\max_i |\lambda_i| < 1\}} \,
\widetilde{p}\bigl(\lambda_1,\ldots,\lambda_{d_x}\bigr),
\label{eq:lti_stable_prior}
\end{equation}
where \(\widetilde{p}\) represents a base prior density 
defined on \(\mathbb{C}^{d_x}\) or some subset thereof. Choices for \(\widetilde{p}\) reflect different assumptions or prior information about system behavior. \Cref{fig:eigenvalue_priors} illustrates several options for defining priors on eigenvalues within the unit disk ($|\lambda|<1$), which we discuss below.

\paragraph{Restricted region prior}\label{prior_restricted_magnitude} Priors can enforce specific stability margins by restricting eigenvalues to, e.g., $0 < \lambda_i \le 0.9$ for real eigenvalues, as shown in \Cref{fig:eigenvalue_priors}(a) (cf.~\cite{oymak2019}).

\paragraph{Uniform real eigenvalue prior}\label{prior_real_uniform} A simple reference prior is to restrict eigenvalues to be real, i.e., $\lambda_i \in \mathbb{R}$. With a uniform base distribution $\widetilde{p}$, the stability constraint $\mathbbm{1}_{\{\max_i |\lambda_i| < 1\}}$ confines support to the real hypercube $(-1, 1)^{d_x}$. \Cref{fig:eigenvalue_priors}(b) shows samples on the real segment $(-1, 1)$.

\paragraph{Polar coordinate prior}\label{prior_polar_coordinates} For complex eigenvalues ($\lambda = \rho e^{i\theta}$, $\rho < 1$), priors can use polar coordinates. Non-informative choices might assume uniformity in area (e.g., $\rho^2 \sim \text{Uniform}(0, 1)$) and angle ($\theta \sim \text{Uniform}(0, \pi)$ for the upper half-plane), as visualized in \Cref{fig:eigenvalue_priors}(c).

\paragraph{Implied prior from uniform coefficients}\label{prior_stable_uniform} Alternatively, a uniform prior can be placed on the characteristic polynomial coefficients $\{a_k\}$ within their stability region (e.g., the stability triangle for ${d_x}=2$). This induces a specific, non-uniform \textit{mixture distribution} on the eigenvalues, as depicted in \Cref{fig:eigenvalue_priors}(d) and detailed for ${d_x}=2$ in the following \Cref{lemma:uniform_prior_2d}.
\begin{figure}[t]
    \centering
    \begin{subfigure}[t]{0.48\textwidth}
        \centering
        \includegraphics[width=\textwidth]{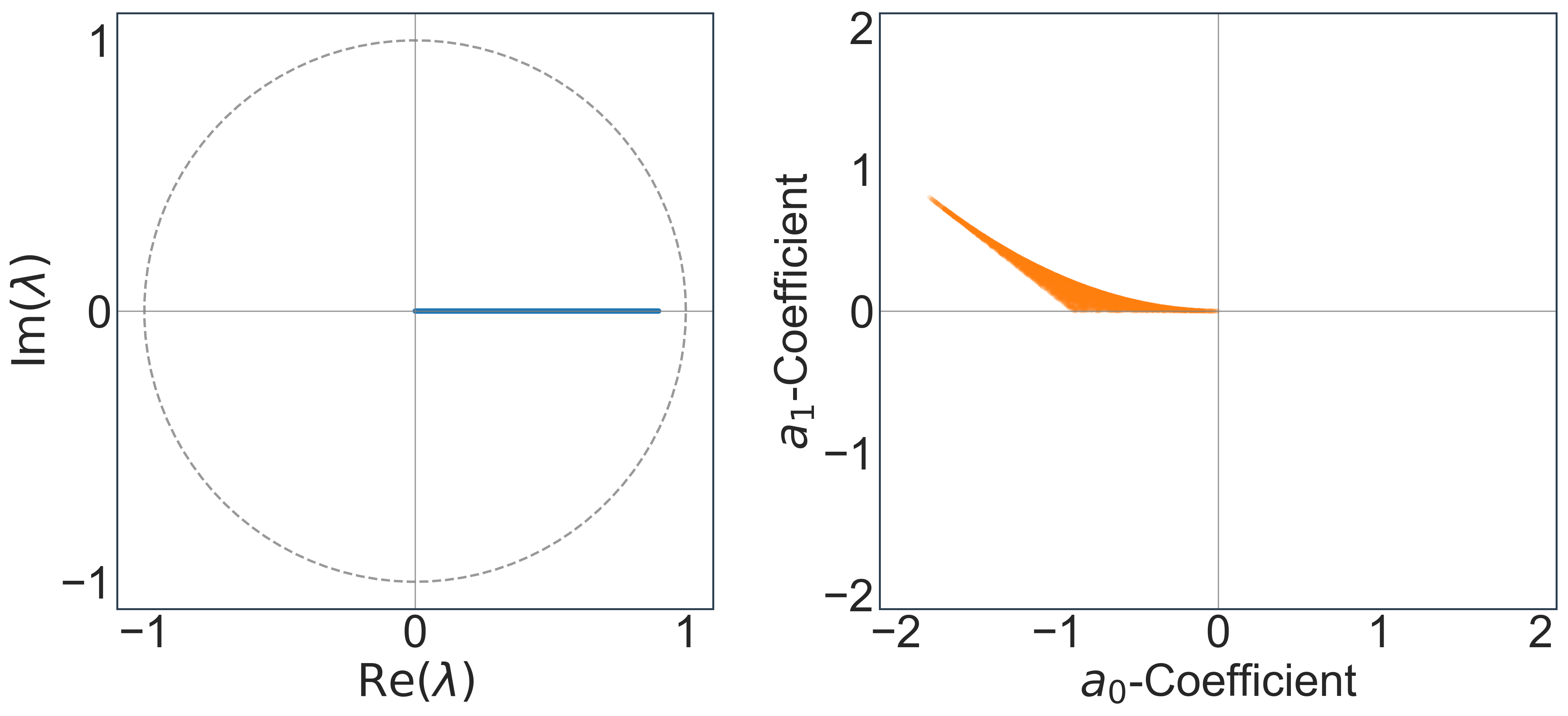}
        \caption{Prior restricting $\lambda$ to $[0,0.9]$}
    \end{subfigure}\hfill
    \begin{subfigure}[t]{0.48\textwidth}
        \centering
        \includegraphics[width=\textwidth]{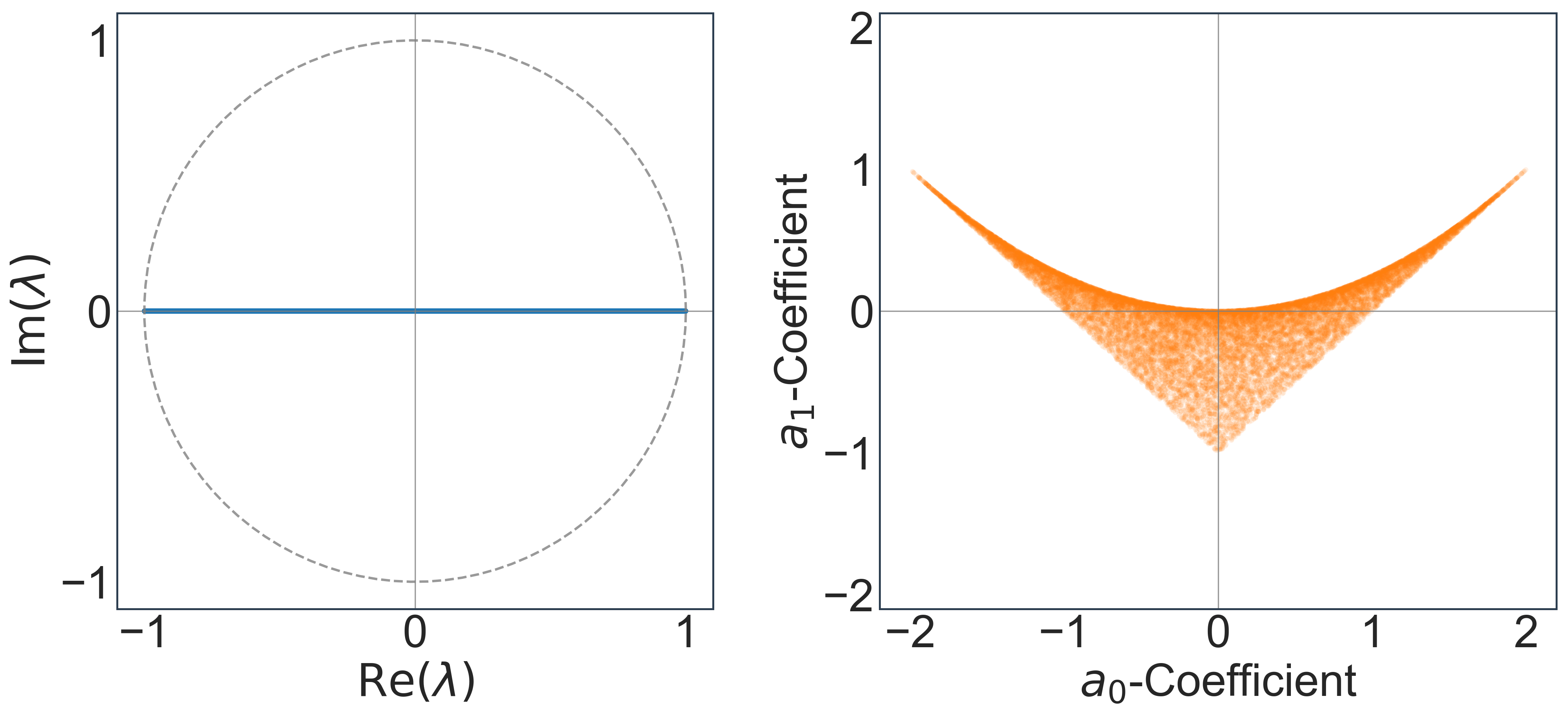}
        \caption{Uniform real-eigenvalue prior}
    \end{subfigure}

    \vspace{1em}

    \begin{subfigure}[t]{0.48\textwidth}
        \centering
        \includegraphics[width=\textwidth]{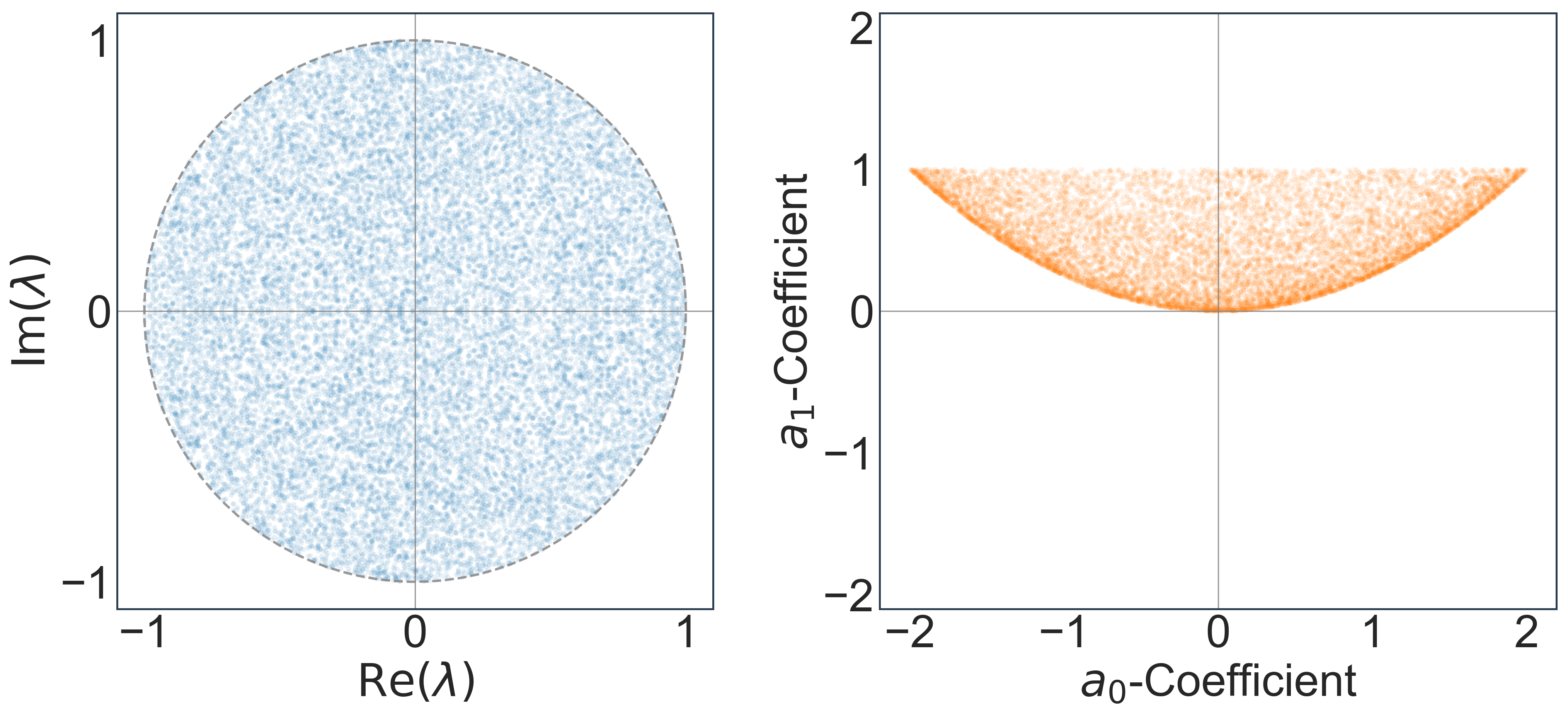}
        \caption{Polar-coordinate prior}
    \end{subfigure}\hfill
    \begin{subfigure}[t]{0.48\textwidth}
        \centering
        \includegraphics[width=\textwidth]{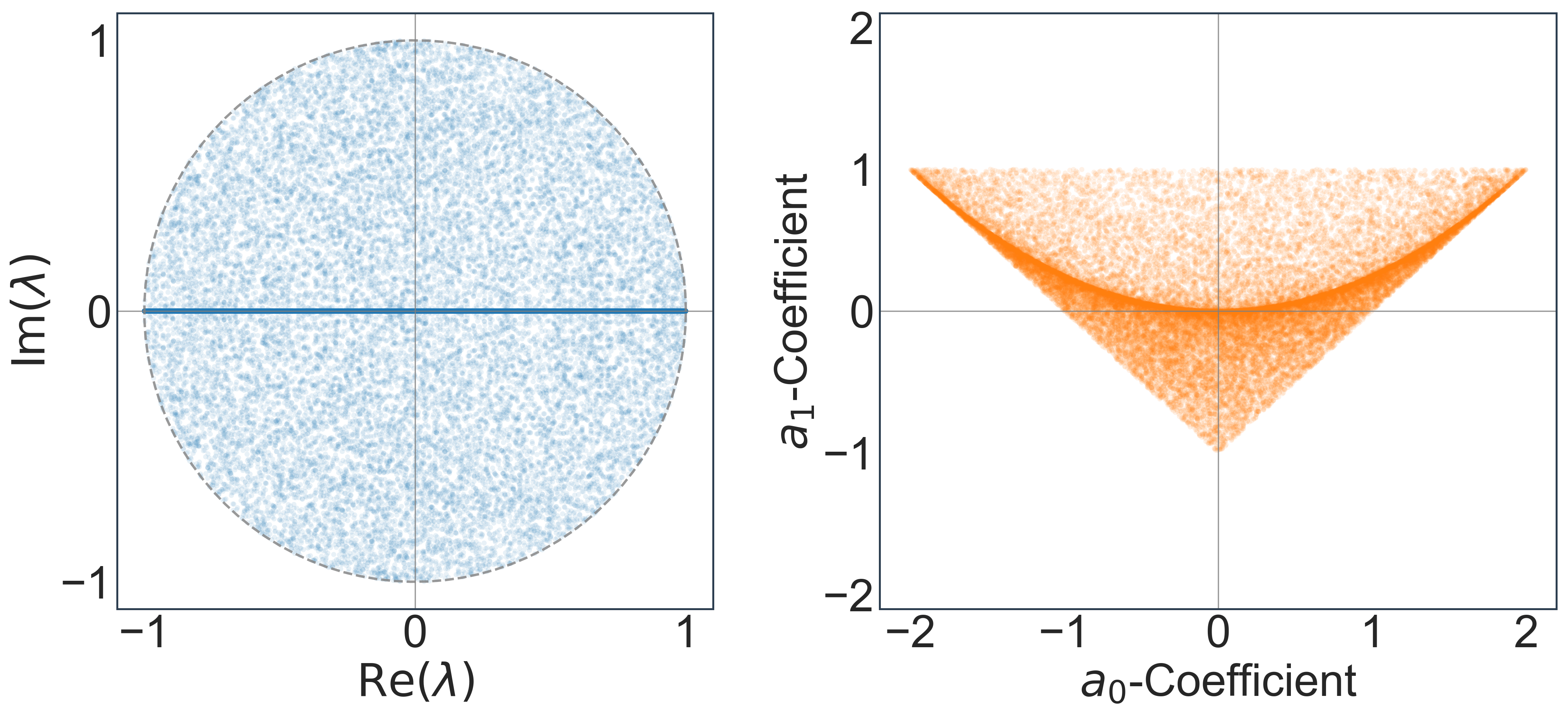}
        \caption{Uniform prior on stable coefficients}
    \end{subfigure}

    \caption{Visualization of samples from different prior distributions over eigenvalues
      $\lambda$ for stable linear systems ($|\lambda|<1$), alongside samples from the resulting prior on polynomial coefficients $(a_0, a_1)$. (a) Eigenvalue
      magnitudes restricted to $|\lambda|\le 0.9$ (cf.~\cite{oymak2019}).
      (b) Uniform distribution over stable real eigenvalues on $(-1,1)$.
      (c) Prior parameterized via polar coordinates $(\rho,\theta)$ over the
      complex unit disk. (d) Implied eigenvalue distribution from a uniform
      prior on stable 2-D system coefficients, showing a characteristic
      mixture shape (\Cref{lemma:uniform_prior_2d}).}
    \label{fig:eigenvalue_priors}
\end{figure}

\begin{lemma}[Implied eigenvalue density from uniform stable 2D systems]
\label{lemma:uniform_prior_2d}
Assume a uniform prior distribution over the coefficients \((a_0, a_1)\) corresponding to stable ${d_x}=2$ real LTI systems (i.e., uniform within the stability triangle defined by \(|a_0| < 1\) and \(|a_1| < 1+a_0\)). The implied marginal probability density for a single eigenvalue \(\lambda\) is the mixture:
\begin{equation}
p(\lambda) = \frac{2}{3} \left(\frac{1}{2}\mathbbm{1}_{\{\lambda \in \mathbb{R}, |\lambda|<1\}}\right) + \frac{1}{3}  \left(\frac{1}{\pi}\mathbbm{1}_{\{\lambda \in \mathbb{C}\setminus\mathbb{R}, |\lambda|<1\}}\right).
\end{equation}
\end{lemma}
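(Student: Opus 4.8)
The plan is to realize $p(\lambda)$ as the pushforward of the uniform prior on the stability triangle under the root map, splitting the triangle according to whether the two eigenvalues are real or form a complex-conjugate pair. For $n=2$ the characteristic polynomial is $\lambda^2 + a_1\lambda + a_0$, so by Vieta's formulas (\Cref{prop:vieta_formulas}) the map $\Psi$ sends $(\lambda_1,\lambda_2)$ to $(a_0,a_1)=(\lambda_1\lambda_2,\,-(\lambda_1+\lambda_2))$. First I would record the geometry of the stability region: the Jury/Schur--Cohn conditions give the triangle $\{|a_0|<1,\ |a_1|<1+a_0\}$, whose area I would compute (it equals $4$), so that the uniform prior has constant density on it. The sign of the discriminant $\Delta=a_1^2-4a_0$ then partitions the triangle into a real-root region ($\Delta>0$) and a complex-root region ($\Delta<0$), separated by the parabola $a_0=a_1^2/4$; the curve $\Delta=0$ has Lebesgue measure zero and can be ignored.

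The mixture weights should come from the probabilities assigned to these two regions, i.e.\ the ratios of their areas to the total area $4$, which I would obtain by integrating between the triangle's slanted edges $a_0=|a_1|-1$ and the parabola. The conditional shapes should come from the change of variables in \Cref{thm:pushforward_polynomial_roots}: on the real region I would transport the constant coefficient density back to the ordered pair $(\lambda_1,\lambda_2)$, picking up the Vandermonde factor $|\lambda_1-\lambda_2|$, and then extract the law of a single eigenvalue by symmetrizing over the two roots and integrating out the partner root over $(-1,1)$. On the complex region I would parametrize the conjugate pair by $\lambda=x+iy$ with $y>0$, rewrite $(a_0,a_1)=(x^2+y^2,\,-2x)$, compute the $2\times 2$ Jacobian $\partial(a_0,a_1)/\partial(x,y)$, and thereby transport the coefficient density to the $(x,y)$-plane over the upper half of the unit disk; symmetrizing across the real axis then gives the density on the punctured disk. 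Assembling the two weighted components yields the mixture, and I would finish by checking that each component integrates to one and that the weights sum to one.

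The main obstacle---and the step most prone to error---is the change-of-variables bookkeeping that produces the conditional densities and the weights simultaneously. One must (i) correctly identify which side of the parabola $\Delta=a_1^2-4a_0=0$ corresponds to complex versus real roots; (ii) faithfully track the Jacobian factors (the Vandermonde $|\lambda_1-\lambda_2|$ in the real case, and $\partial(a_0,a_1)/\partial(x,y)$ in the complex case) rather than treating the eigenvalues as if they were uniform a priori; and (iii) resolve the two-to-one/symmetrization issue so that the ``single-eigenvalue'' marginal is unambiguously defined. Verifying that the transported densities are genuinely constant on their supports---so that the two components really are uniform on $(-1,1)$ and on the disk---is the crux of the argument; I would confirm it by direct integration of the transported density over each region and cross-check it with a Monte Carlo experiment that samples $(a_0,a_1)$ uniformly on the triangle and histograms the induced eigenvalues.
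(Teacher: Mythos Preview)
Your plan coincides with the paper's proof: compute the triangle's area ($=4$), split by the discriminant parabola $a_0=a_1^2/4$, read the mixture weights off as area ratios, and obtain the conditional eigenvalue densities by pushing the uniform coefficient density through $\Psi^{-1}$ with the Vieta/complex Jacobians. The paper executes the area step and then asserts the Jacobian step in a single sentence; you are more explicit about the bookkeeping, but the route is identical.

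Be warned, however, that carrying out the computations you outline will \emph{not} reproduce the stated formula. Integrating between the slanted edge $a_0=|a_1|-1$ and the parabola gives
\[
\int_{-2}^{2}\Bigl(\tfrac{a_1^2}{4}-(|a_1|-1)\Bigr)\,da_1=\tfrac{4}{3}
\]
for the \emph{real}-root region and hence $\tfrac{8}{3}$ for the complex region, so $P(\text{real})=\tfrac{1}{3}$ and $P(\text{complex})=\tfrac{2}{3}$, the reverse of the claimed weights. Moreover, the transported conditional densities are not uniform: in the complex case $|\partial(a_0,a_1)/\partial(x,y)|=4y$ gives the conditional density $\tfrac{3}{2}y$ on the upper half-disk (equivalently $\tfrac{3}{4}|\operatorname{Im}\lambda|$ on the punctured disk), and in the real case the Vandermonde factor $|\lambda_1-\lambda_2|$ yields the single-eigenvalue marginal $\tfrac{3}{8}(1+\lambda^2)$ on $(-1,1)$ after symmetrizing and integrating out the partner root. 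Your stated plan to ``confirm by direct integration'' and ``cross-check with a Monte Carlo experiment'' is exactly the right safeguard and will expose these discrepancies; the paper's proof records the wrong area ($8/3$ for the real region) and then asserts uniformity without actually computing the Jacobian, which is where the error enters.
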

The proof is provided in \Cref{appendix_lemma:uniform_prior_2d}. This indicates a 2:1 preference for real eigenvalues over complex conjugate pairs when sampling uniformly from stable coefficients. The resulting density is uniform over \((-1, 1)\) for real eigenvalues and uniform over the upper (or lower) half of the open unit disk for complex eigenvalues. \Cref{fig:eigenvalue_priors}(d) visualizes samples from this implied distribution.

The tendency for priors over stable real systems (whether defined on eigenvalues or implied by coefficient priors) to involve mixtures of real and complex eigenvalues, as explicitly derived in \Cref{lemma:uniform_prior_2d}, is a general characteristic, formalized below.
\begin{corollary}[Mixture distribution for eigenvalues of a stable system]
\label{cor:mixture_stable_eigs}
For any dimension \(d \geq 2\), the eigenvalue spectrum of a stable real matrix \(A\) comprises \(r\) real eigenvalues in \((-1, 1)\) and \((d-r)/2\) complex conjugate pairs within the open unit disk, where the number of real eigenvalues, \(r\), satisfies \(0 \le r \le d\) with \(d-r\) being an even number. Consequently, priors over the eigenvalues of stable real systems can be expressed as mixtures over the possible values of \(r\). The marginal density of a single eigenvalue \(\lambda\) can be written as:
\begin{equation}
p(\lambda) = \sum_{\substack{0 \le r \le d \\ d-r \text{ is even}}} p_r(r) \left[ \, \frac{r}{d} \, p_{\mathrm{real}}(\lambda; r) + \frac{d-r}{d} \, p_{\mathrm{complex}}(\lambda; r)\right],
\end{equation}
where \(p_r(r)\) is the prior probability mass on having \(r\) real eigenvalues; \(p_{\mathrm{real}}(\lambda; r)\) is the conditional density for a real eigenvalue given \(r\), supported on \((-1,1)\); and \(p_{\mathrm{complex}}(\lambda; r)\) is the conditional density for a complex eigenvalue given \(r\), supported on the open unit disk excluding the real line.
\end{corollary}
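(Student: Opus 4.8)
The plan is to treat the statement as a decomposition obtained by two successive applications of the law of total probability, built on one purely algebraic fact about real matrices. First I would record the deterministic structure: because $A$ is real, its characteristic polynomial $\det(A - \lambda I_d)$ has real coefficients, so by the conjugate-root theorem its non-real roots occur in complex conjugate pairs. Hence, if $r$ is the number of real eigenvalues (counted with multiplicity), the number of non-real eigenvalues $d-r$ is necessarily even, giving exactly $(d-r)/2$ conjugate pairs; this fixes the index set of the sum. Stability (\Cref{def_stable_system}) then places each real eigenvalue in $(-1,1)$ and each non-real eigenvalue in the open unit disk with nonzero imaginary part, which is precisely the claimed support for $p_{\mathrm{real}}$ and $p_{\mathrm{complex}}$.

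Next I would make precise the object ``marginal density of a single eigenvalue.'' Since the prior is specified on the \emph{multiset} of eigenvalues (as in \Cref{prop:vieta_formulas}, the eigenvalues enter only through symmetric functions), the natural notion is the density of the eigenvalue obtained by the two-stage experiment: draw the full spectrum from the prior, then select one of its $d$ members uniformly at random. By exchangeability of the eigenvalues under this symmetric prior, the pooled density does not depend on which slot is selected and coincides with the average of the $d$ single-slot marginals. Conditioning on the (random) count $r$ of real eigenvalues and applying the law of total probability gives $p(\lambda) = \sum_{r} p_r(r)\, p(\lambda \mid r)$, where the sum ranges over admissible $r$ (those with $d-r$ even) and $p_r(r)$ is exactly the prior mass on having $r$ real roots.

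Finally, within each fixed value of $r$ I would condition once more on whether the uniformly selected eigenvalue is real or complex. Because the selection is uniform over the $d$ eigenvalues, of which $r$ are real and $d-r$ are non-real, the selected eigenvalue is real with probability $r/d$ and non-real with probability $(d-r)/d$. Defining $p_{\mathrm{real}}(\lambda; r)$ and $p_{\mathrm{complex}}(\lambda;r)$ as the resulting conditional densities---obtained by marginalizing the joint prior over the remaining eigenvalues and restricting to the real axis or its complement---yields $p(\lambda \mid r) = \tfrac{r}{d}\,p_{\mathrm{real}}(\lambda;r) + \tfrac{d-r}{d}\,p_{\mathrm{complex}}(\lambda;r)$, and substituting into the previous display produces the stated mixture.

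The main obstacle is conceptual rather than computational: pinning down the ``single eigenvalue'' marginal so that the mixing weights come out to exactly $r/d$ and $(d-r)/d$. This requires the prior to be exchangeable across eigenvalues (equivalently, defined on the unordered multiset), and it requires care with the conjugate-pair structure---because non-real eigenvalues arrive two at a time, $p_{\mathrm{complex}}(\cdot;r)$ is automatically symmetric under conjugation, and ``selecting one complex eigenvalue'' must be understood as picking one member of a pair. Repeated eigenvalues and exact real/complex coincidences form a Lebesgue-null set and can be excluded without affecting the density. Once these definitions are fixed, the two total-probability steps are routine, and the $n=2$ computation in \Cref{lemma:uniform_prior_2d} serves as a concrete instance of the general pattern.
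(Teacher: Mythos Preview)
Your proposal is correct and follows essentially the same approach as the paper: establish the parity constraint on $r$ via the conjugate-root theorem for real characteristic polynomials, then decompose the marginal eigenvalue density first over $r$ and then over the real/complex dichotomy. Your treatment is in fact more careful than the paper's, which simply asserts the mixture form; your explicit framing of the ``single eigenvalue'' marginal via uniform selection from the exchangeable multiset is exactly what is needed to justify the $r/d$ and $(d-r)/d$ weights, and the paper glosses over this step.
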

The proof is provided in \Cref{appendix_cor:mixture_stable_eigs}. This decomposition allows considerable flexibility in prior modeling through the specification of the base densities \(p_{\text{real}}\), \(p_{\text{complex}}\), and the mixture weights \(p_r(r)\). Even though for \({d_x}=2\) we have an explicit analytical result for the mixture weights, in higher dimensions their direct calculation becomes intractable. Such a calculation would rely on stability criteria, such as the Schur--Cohn test for determining whether all roots of the system's characteristic polynomial lie within the unit disk, which are computationally infeasible for large \({d_x}\). Consequently, estimating the weights via Monte Carlo simulation becomes the main approach.

\paragraph{Comparison with priors on matrix entries}
\label{remark_prior_comparison_with_matrix_entries}
%
Previous work 
in Bayesian system identification placed independent priors (e.g., \(p(A_{ij}) \propto \mathcal{N}(0,1)\)) on the entries of a full matrix \(A_s \in \mathbb{R}^{d_x \times d_x} \) \cite{Galioto_2020}. Such entry-wise priors lack a clear connection to system-level behavior like stability or damping, making it difficult to encode meaningful prior knowledge. While one could try to use any of the eigenvalue priors discussed above while performing inference in the standard parameterization (i.e., over full matrices $A_s$), doing so would require evaluating the complex many-to-one mapping from \(A_s\) to \(\Lambda\), which has an intractable Jacobian, and resolving its general non-invertibility. See further discussion in \Cref{remark_challenges_setting_priors_over_standard_form}. 

Placing priors directly on the coefficients \(\{a_k\}\) of a canonical form is mathematically simpler, but still lacks the direct interpretability of eigenvalue priors. Therefore, defining a prior on eigenvalues \(p(\Lambda)\) and
(via \Cref{thm:pushforward_polynomial_roots}) transforming this distribution to a prior on the canonical coefficients \(p_{\mathcal{P}}(a_k)\) represents an effective strategy. Note also that the the number of coefficients $\{a_k\}$ is exactly the same as the number of eigenvalues, such that $\Psi^{-1}$ and its Jacobian determinant are almost surely well defined (for the eigenvalue priors discussed above). This approach thus combines the intuitive control over system dynamics offered by eigenvalues with the computational benefits of working with a unique, identifiable canonical parameterization.
\begin{remark}[Non-distinct eigenvalues]
  A critical consideration arises when eigenvalues $\lambda_i$ are not distinct. The map $\Psi$ from eigenvalues to coefficients becomes singular at such points. While this degeneracy occurs only on a lower-dimensional subspace (typically assigned zero probability by continuous priors), it still has numerical implications. Eigenvalues that are very close cause the inverse Jacobian term to become extremely large. This numerical instability presents a practical hurdle for MCMC samplers exploring the parameter space near such degeneracies. Robust implementations may require numerical regularization, potentially facing sampling inefficiencies when the Jacobian $D\Psi(\Lambda)$ becomes ill-conditioned,  or necessitate alternative prior specifications (e.g., defining a prior directly on the canonical coefficients and enforcing stability through separate checks).
\end{remark}

\subsection{Priors over parameters (\(B_c, C_c, D_c\) and noise covariances)}
\label{subsec:priors_other}
While the prior on the state dynamics matrix $A$ (often specified via its eigenvalues $\Lambda$) demands careful consideration due to its strong influence on system behavior, priors on the remaining parameters are typically chosen based on simpler, standard considerations, often aiming for relative non-informativeness or some degree of regularization. For the SISO controller canonical form, parameters in $B_c, C_d$, and $D_c$ consist of the numerator coefficients $\{b_0, \dots, b_{{d_xs-1}}\}$ in the $C_c$ vector and the direct feed-through term $d_0$. Common choices involve independent zero-mean Gaussian priors on these coefficients, e.g., $b_i \sim \mathcal{N}(0, \sigma_b^2)$ and $d_0 \sim \mathcal{N}(0, \sigma_d^2)$. The prior variances ($\sigma_b^2, \sigma_d^2$) can be fixed or assigned hyperpriors.

Priors for covariance matrices like $\Sigma$ and $\Gamma$ can be set directly, often using an inverse-Wishart distribution. For greater flexibility, the matrix can be parameterized via its Cholesky factor, $\Sigma = L_\Sigma L_\Sigma^T$, allowing for separate priors on its elements. Typically, zero-mean Gaussian priors are used for the off-diagonal elements, while distributions with positive support, such as half-Cauchy or Gamma, are used for the strictly positive diagonal elements. A key special case is when the covariance matrix is assumed to be diagonal. In this scenario, its Cholesky factor $L_\Sigma$ is also diagonal, and its entries directly correspond to the standard deviations of the noise terms. Therefore, placing a half-Cauchy prior on the diagonal elements of $L_\Sigma$ is equivalent to placing a prior directly on the standard deviations \cite{alvarez2014bayesian}.

\section{Posterior asymptotics: Bernstein--von Mises theorem}
\label{sec:fim_bvm}
Understanding the asymptotic behavior of the posterior distribution is fundamental in Bayesian inference, providing insights into the efficiency of Bayes estimates and uncertainty calibration \cite{gelman2013bayesian}.  Here, we show that employing an identifiable canonical parameterization is essential for the validity of Bernstein--von Mises (BvM) results and the associated Gaussian posterior approximations, in the context of LTI system identification.

The BvM theorem is a cornerstone result \cite{vandervaart1998asymptotic} that establishes conditions under which the Bayesian posterior distribution converges to a Gaussian distribution centered at an efficient point estimate as the amount of data grows.
This asymptotic normality links Bayesian inference to frequentist concepts and justifies Gaussian approximations for uncertainty quantification in large-sample regimes. Central to this theorem is the Fisher information matrix (FIM), which quantifies the information about the parameters contained in the data via the likelihood function, defining the precision of the limiting Gaussian posterior \cite{vandervaart1998asymptotic}.

\begin{definition}[LTI Fisher information matrix]
\label{def:likelihood_fim} 
Consider the likelihood function \(p(y_{[T]} \mid \mathcal{L}_\Theta, u_{[T]})\) for parameters \(\Theta\). Assume standard regularity conditions hold, ensuring differentiability of the log-likelihood with respect to \(\Theta\)
and allowing interchange of differentiation and integration with respect to the data \(y_{[T]}\). The \emph{Fisher information matrix (FIM)} at \(\Theta\) based on data up to time \(T\) is:
\[
  I_T(\Theta, u_{[T]})
  \;=\;
  \E_{y_{[T]} \sim Y_{[T]} \mid \Theta, u_{[T]}} \! \left[
    \left( \frac{\partial}{\partial\Theta} \log p\left(y_{[T]} \,\middle|\, \mathcal{L}_\Theta, u_{[T]}\right) \right) \!
    \left( \frac{\partial}{\partial\Theta} \log p\left(y_{[T]} \,\middle|\, \mathcal{L}_\Theta, u_{[T]}\right) \right)^{\!\top}
  \right].
\]
\end{definition} 
Note that in our setting, the FIM $I_T$ is also a function of the known input or excitation $u_{[T]}$. \(I_T(\Theta, u_{[T]})\) quantifies the expected curvature of the log-likelihood function at \(\Theta\), representing the information provided about the parameter \(\Theta\) by an experiment with inputs $u_{[T]}$ yielding data \(y_{[T]}\). The definition above holds for both $\Theta = \Theta_s$, i.e., estimation of full-dimensional LTI system matrices $(A, B, C, D)$ and for $\Theta = \Theta_c$, i.e., estimation of the reduced parameters of a canonical form.
\paragraph{Fisher information matrix in canonical forms}
The FIM is calculated by accumulating output sensitivities with respect to the canonical parameters; it can computed efficiently using recursive formulas. Detailed derivations, including explicit expressions for the FIM of the controller and observer forms (\Cref{prop:fisher_LTI}) and extensions for systems with process noise via Kalman filter sensitivities (\Cref{prop:fisher_kalman_canonical}), are provided in \Cref{appendix_fisher_information_and_BvM}. For finite-data scenarios, we also introduce an alternative metric for posterior geometry, the expected posterior curvature, in \Cref{expected_posterior_curvature_section}.
We now discuss the conditions under which a BvM theorem holds for Bayesian LTI system identification. For simplicity, our analysis focuses on the inference of the LTI dynamics matrices $(A, B, C, D)$ rather than the noise covariances. Crucially, our result requires that inference be performed using the canonical parameterization, $\Theta_c$
(\Cref{def:controller_form}), and that the dimension of the model class used for inference is consistent with the minimal dimension of the data-generating system.
\begin{assumption}[Consistent state dimension]
\label{ass:minimal_state_dim}
The minimal state dimension of the data-generating system, denoted $d_x^{0}$, equals the state dimension $d_x$ of the models described by $\mathcal{L}_\Theta$.\footnote{Estimating the minimal state dimension \(d_x^0\) itself from the data $(y_{[T]}, u_{[T]})$ alone is considered an outer-loop problem, beyond the scope of the core inference task addressed here.}
\end{assumption}

Next, we require that the inputs $u_{[T]}$ be ``sufficiently rich'' to excite relevant aspects of the dynamics. 
\begin{definition}[Persistence of excitation]
\label{def:persistence_excitation_1}
A sequence of scalar inputs $\{u_t\}_{t \in \mathbb{N}_0}$ is \emph{persistently exciting of order $d$} 
if the $d \times d$ limiting sample autocovariance matrix $R_u^{(d)}$ exists and is positive definite:
\[
R_u^{(d)} := \lim_{T \to \infty} \frac{1}{T} \sum_{t=d}^T 
\begin{pmatrix}
u_t \\
u_{t-1} \\
\vdots \\
u_{t- d + 1}
\end{pmatrix}
\begin{pmatrix}
u_t & u_{t-1} & \ldots & u_{t- d +1}
\end{pmatrix}
\succ 0.
\]
\end{definition}

Under these conditions, and when performing inference with the canonical parameterization $\Theta_c$, we can prove a BvM. We state this theorem somewhat informally below; a more precise statement is given in \Cref{appendix_thm:BvM_LTI_full}.
\begin{theorem}[Bernstein--von Mises for LTI systems]
\label{thm:BvM_LTI_applicability}
Let $\{y_t\}_{t \in \mathbb{N}_0}$ be the sequence of outputs generated by a SISO LTI system with true canonical parameters $\Theta^0_c$, given an input sequence $\{u_t\}_{t \in \mathbb{N}_0}$ that is persistently exciting of order $d_x^0$. Assume the noise covariances are known, so that the $N_c$-dimensional parameter vector $\Theta_c$ contains only the canonical representation of the system dynamics matrices and, further, satisfies \Cref{ass:minimal_state_dim}. Let $p(\Theta_c \mid y_{[T]}, u_{[T]})$ denote the posterior distribution over this canonical parameterization. Under suitable regularity conditions on the likelihood \eqref{eq:joint_likelihood} and the prior $p(\Theta_c)$, the posterior distribution converges in total variation to a Gaussian distribution as $T \to \infty$, as follows:
\begin{equation}
\label{eq:bvm_convergence_generic}
\sup_{E \subset \mathbb{R}^{N_c}} \left| P \left(\sqrt{T} (\Theta_c - \widehat{\Theta}_{c}^T ) \in E \mid y_{[T]}, u_{[T]}\right) - \Phi\left(E; 0, \mathcal{I}(\Theta_c^0)^{-1}\right) \right| \xrightarrow{\mathbb{P}_{\Theta_c^0}} 0,
\end{equation}
where $\widehat{\Theta}_c^T$ is a $\sqrt{T}$-consistent estimator (e.g., the MLE or MAP), $P(\cdot \vert y_{[T]}, u_{[T]} )$ is the posterior probability measure, and $\Phi(E; \mu, \Sigma)$ denotes the probability under a Gaussian distribution $\mathcal{N}(\mu, \Sigma)$ of a set $E$. $\mathcal{I}(\Theta_c^0)$ is the asymptotic Fisher information matrix per observation at the true parameter value, defined as
$\mathcal{I}(\Theta_c^0) = \lim_{T \to \infty} \frac{1}{T} I_T(\Theta_c^0, u_{[T]})$, with $I_T$ being the FIM for $T$ observations. 
Convergence is in probability under the data-generating distribution $\mathbb{P}_{\Theta_c^0}$, which has density $p(y_{[T]} \mid u_{[T]}, \Theta_c^0)$.
\end{theorem}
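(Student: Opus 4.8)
The plan is to verify the three classical ingredients for a Bernstein--von Mises theorem within the locally asymptotically normal (LAN) framework of \cite{vandervaart1998asymptotic}: (i) a LAN expansion of the log-likelihood around $\Theta_c^0$, (ii) posterior consistency, and (iii) a prior that is continuous and positive at $\Theta_c^0$. The central structural tool is the innovations (prediction-error) decomposition of the likelihood already recorded in \eqref{joint_likelihood}. Writing $e_t(\Theta_c) = y_t - \widehat{y}_{t\mid t-1}(\Theta_c)$ for the Kalman one-step prediction errors and $S_t(\Theta_c)$ for their covariances, the log-likelihood becomes $\ell_T(\Theta_c) = -\tfrac{1}{2}\sum_{t=1}^T \bigl[\log\det(2\pi S_t) + e_t^\top S_t^{-1} e_t\bigr]$, in which the $e_t$ form a martingale difference sequence with respect to the natural filtration. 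This replaces the i.i.d.\ structure of the textbook BvM with a martingale structure amenable to a martingale central limit theorem.

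First I would establish the LAN expansion. Differentiating $\ell_T$ and exploiting smoothness of the Kalman recursions in $\Theta_c$---which holds because the companion matrix $A_c$ depends polynomially on the coefficients $a_k$---I would express the score as a sum of martingale differences in the innovations and apply a martingale CLT to obtain a central sequence $\Delta_T \xrightarrow{d} \mathcal{N}(0, \mathcal{I}(\Theta_c^0))$. Stability of the data-generating system (\Cref{def_stable_system}) guarantees that the Kalman filter reaches a steady state, so the per-step information averages converge; \emph{persistence of excitation of order $d_x$} (\Cref{def:persistence_excitation_1}) ensures that the deterministic input excites every parameter direction, so that $\tfrac{1}{T} I_T(\Theta_c^0, u_{[T]}) \to \mathcal{I}(\Theta_c^0)$ with a well-defined, positive definite limit. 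Together these yield the quadratic expansion $\ell_T(\Theta_c^0 + h/\sqrt{T}) - \ell_T(\Theta_c^0) = h^\top \Delta_T - \tfrac{1}{2} h^\top \mathcal{I}(\Theta_c^0) h + o_{\mathbb{P}}(1)$.

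Next I would establish posterior consistency. The crucial point is that the \emph{canonical} parameterization is identifiable (\Cref{thm:isomorphism_general} and \Cref{thm:companion_form_invertible_matrix}): distinct $\Theta_c$ induce distinct output laws, so for every $\epsilon > 0$ there exist exponentially consistent tests separating $\Theta_c^0$ from $\{\|\Theta_c - \Theta_c^0\| \ge \epsilon\}$. Combined with a prior placing positive mass on every neighborhood of $\Theta_c^0$, a Schwartz-type argument yields posterior concentration on shrinking neighborhoods of $\Theta_c^0$. With LAN, consistency, and the prior condition in hand, the general BvM theorem then delivers the stated total-variation convergence, with limiting precision $\mathcal{I}(\Theta_c^0)$ and centering at any $\sqrt{T}$-consistent estimator $\widehat{\Theta}_c^T$ (the MLE and MAP being asymptotically equivalent in this regime).

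I expect the main obstacle to be the verification that $\mathcal{I}(\Theta_c^0)$ is positive definite, since this is precisely the property that fails for the standard parameterization $\Theta_s$. Concretely, one must show that the score directions $\partial e_t / \partial \Theta_c$ are asymptotically linearly independent---equivalently, that no nonzero parameter perturbation leaves the innovation sequence unchanged to first order. This is where minimality (\Cref{def:minimal_realization}), identifiability of the canonical form, and persistence of excitation must be combined carefully: the deterministic input-driven contribution to the information matrix must be bounded below using $R_u^{(d_x)} \succ 0$, while the noise-driven contribution accounts for the remaining directions. The martingale CLT and the interchange of differentiation with the Kalman recursion additionally require uniform integrability and moment bounds, which follow from stability but must be checked explicitly.
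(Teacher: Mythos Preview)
Your proposal is correct and follows essentially the same route as the paper's proof: both verify the standard BvM conditions (LAN expansion via the innovations decomposition, positive definite asymptotic FIM secured by identifiability of the canonical form together with persistence of excitation, and prior regularity) and then invoke the general theorem from \cite{vandervaart1998asymptotic}. Your version is in fact more careful than the paper's sketch---you explicitly use a martingale CLT for the score, invoke a Schwartz-type argument for posterior consistency, and flag the uniform integrability and steady-state Kalman issues---whereas the paper simply asserts that the innovations are independent Gaussian and that ``all conditions of the Bernstein--von Mises theorem are satisfied.''
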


In contrast with the preceding result, suppose we perform inference in the standard parameterization $\Theta_s$ of the LTI system. Then, even if \Cref{ass:minimal_state_dim} is satisfied and the inputs are persistently exciting of order $d_x$, the BvM fails to hold.
\begin{proposition}[BvM failure in standard parameterizations]
\label{thm:BvM_LTI_failure}
Consider an LTI state-space model \cref{eq:general_lti_state_space_model_noise} satisfying the previous assumptions (\Cref{ass:gaussian_noise}, \Cref{ass:minimal_state_dim}) and with persistently exciting inputs of full order. If the system is parameterized using the standard form \(\Theta_s\), the BvM theorem does not hold.
\end{proposition}
BvM failure for the standard parameterization $\Theta_s$ stems from a singular Fisher information matrix. This singularity occurs because $\Theta_s$ is not identifiable; the posterior consequently fails to concentrate on a single point. Rather, it concentrates on the manifold of parameters related by similarity transformations. Employing an identifiable canonical parameterization is therefore essential to satisfy the conditions of \Cref{thm:BvM_LTI_applicability}. The canonical form provides local identifiability, while Gaussian noise and a persistently exciting input ensure likelihood regularity and a positive definite asymptotic FIM. For a detailed proof and discussion, see \Cref{appendix_thm:BvM_LTI_failure}.

Violating \Cref{ass:minimal_state_dim} has distinct implications depending on the direction of the mismatch. Overestimating the minimal state dimension---i.e., inferring a canonical model with $d_x > d_x^{0}$---results in a singular Fisher information matrix, as redundant directions in parameter space are unidentifiable. Conversely, underestimating the minimal state dimension, i.e., setting $d_x < d_x^{0}$, leads to model misspecification, rendering the parameter estimates inconsistent and violating key conditions required for a Bernstein--von Mises theorem to hold.

\section{Numerical experiments}
\label{sec:numerics}

This section presents numerical experiments designed to validate the preceding theory and assess the practical performance of Bayesian inference for LTI systems using canonical parameterizations \(\Theta_c\) compared to using the standard, non-identifiable parameterization \(\Theta_s\).
\Cref{sec:exp_setup_revised} details the common experimental setup, including system generation, noise specifications, algorithmic implementations, and parameter estimation methods. \Cref{sec:post_geom_interp} analyzes the impact of different system parameterizations on posterior geometry and interpretability. \Cref{sec:eff_perf} evaluates the efficiency and computational cost of MCMC sampling for these posterior distributions. \Cref{sec:acc_prior} investigates the accuracy of parameter and QoI estimates, examining the influence of prior specification, particularly in low-data regimes. Finally, \Cref{sec:asymp_scale} investigates posterior convergence with increasing data and relates these results to BvM predictions. In \Cref{appendix_f:scalability_with_dimension}, we assess the computational scalability of inference in canonical forms with increasing system dimension.

\subsection{Summary of key findings}

The experiments detailed below demonstrate the following advantages of using canonical forms for Bayesian LTI system identification: 
\begin{enumerate}[label=(\roman*)]
    \item Canonical forms \(\Theta_c\) yield well-behaved, typically unimodal posterior distributions suitable for reliable inference and interpretation, whereas standard forms \(\Theta_s\) result in complex, multimodal posteriors (\Cref{sec:post_geom_interp}, \Cref{fig:canonical_analysis}, \Cref{fig:parameter_distribution_comparison}).

    \item Posterior sampling via MCMC exhibits greater computational efficiency in the canonical parameterization, e.g., achieving higher effective sample size per second (ESS/s) (\Cref{sec:eff_perf}, \Cref{fig:canonical_vs_standard_efficiency}).

    \item Bayesian inference with canonical forms accurately recovers QoIs, and the associate parameter/QoI estimates are more accurate than those produced by baseline system identification methods such as the Ho--Kalman algorithm \cite{oymak2019} (also called the eigensystem realization algorithm \cite{Minster2021Efficient}), particularly in low-data or noisy scenarios (\Cref{sec:acc_prior}, \Cref{fig:easy-vs-hard} right,  \Cref{fig:trajectory_comparison}, and \Cref{fig:convergence_grid} left).
      
    \item While all tested priors lead to posteriors that converge on true parameter values with sufficient data, informative priors enhance accuracy significantly in data-limited regimes, and stability-enforcing priors provide robustness (\Cref{sec:acc_prior}, \Cref{fig:convergence_grid}).
      
    \item Empirical results confirm asymptotic consistency and convergence of the canonical posterior towards the Gaussian distribution given by BvM theorem, validating use of the FIM for uncertainty analysis (\Cref{sec:asymp_scale}, \Cref{fig:fisher_information_convergence}).
      
    \item Bayesian system identification in the canonical form demonstrates significantly better computational scalability with increasing system dimension compared to inference in the standard parameterization (\Cref{appendix_f:scalability_with_dimension}, \Cref{fig:high-dim-grid}).
\end{enumerate}

\subsection{Experimental setup and implementation}
\label{sec:exp_setup_revised}

We use simulated data generated from known ground-truth LTI systems. We set the feedthrough matrix \(D=0\) in all experiments, as its inclusion does not significantly alter the relative performance comparison between parameterizations. We focus on \emph{stable} LTI systems (\(|\lambda_i(A)| < 1\)), representing the most common practical scenario.

\paragraph{System generation methodology} Unless otherwise stated (e.g., for scalability tests in \Cref{sec:asymp_scale}), a diverse random set of stable systems is generated as follows:
\begin{enumerate}
    \item \textit{Stable system matrix \(A\):} We construct the stable matrix $A \in \mathbb{R}^{d_x \times d_x}$ by separately sampling its eigenvalues and eigenvectors. First, we draw a set of stable eigenvalues from a prior distribution, such as the uniform polar prior discussed in \Cref{sec:priors}. Next, we sample an orthonormal basis of eigenvectors by drawing a matrix $V$ uniformly from the orthogonal group. The matrix is then assembled as $A=V \Lambda V^T$, where $\Lambda$ is the diagonal matrix of sampled eigenvalues.
    \item \textit{Input and output matrices \(B, C\):} We independently draw each entry of \(B \in \R^{d_x \times d_u}\) and \(C \in \R^{d_y \times d_x}\) from a standard normal distribution, \(\mathcal{N}(0,1)\).
\end{enumerate}

\paragraph{Ensuring controllability and observability} While random generation usually produces controllable and observable systems, numerical near-degeneracies can occur. To mitigate this, we compute the controllability Gramian \(W_c = \sum_{k=0}^{\infty} A^k B B^\top (A^k)^\top\) and observability Gramian \(W_o = \sum_{k=0}^{\infty} (A^k)^\top C^\top C A^k\) (convergent for stable \(A\)). Following \cite{Antoulas2005}, we assess conditioning by examining the eigenvalue spectra \(\{\sigma_i(W)\}\). We reject systems where the dominant eigenvalue accounts for more than 99\% of the total energy (trace), i.e., if \(\sigma_{\max}(W) / \operatorname{trace}(W) > 0.99\) for either \(W_c\) or \(W_o\). This filtering ensures that our test systems are robustly controllable and observable in practice; all $d_x$ system states have meaningful dynamics.
In real-world settings, this choice relates to the outer-loop problem of choosing a sufficient state dimension $d_x$.

\paragraph{Input and noise specifications} Persistently exciting inputs are generated as i.i.d.\ standard Gaussian random variables, \(u_t \sim \mathcal{N}(0, I_{d_u})\). Process noise \(w_t \sim \mathcal{N}(0, \Sigma)\) and measurement noise \(z_t \sim \mathcal{N}(0, \Gamma)\) are independent zero-mean Gaussian sequences, with diagonal covariances \(\Sigma = \sigma_\Sigma^2 I_{d_x}\) and \(\Gamma = \sigma_\Gamma^2 I_{d_y}\). The noise variances $\sigma_{\Sigma}^2$ and $\sigma_{\Gamma}^2$ are either fixed and reported per experiment or, if treated as unknown, assigned Truncated Normal priors. These priors are based on a Gaussian distribution centered at 0.5 but are restricted to positive values, ensuring a physically meaningful variance.

\paragraph{Prior distributions}
Default prior distributions for model parameters are specified as follows; any deviations for particular experiments will be detailed in their respective subsections. For parameters in the standard state-space form $\Theta_s$, all coefficients are assigned independent standard normal priors. For parameters in canonical forms $\Theta_c$, the state matrix $A$ is assigned a uniform stable prior (as discussed in \Cref{lemma:uniform_prior_2d}). Coefficients in the input matrix $B$ and output matrix $C$ are assigned independent standard normal priors.

\paragraph{Algorithmic implementation} Posterior distributions are sampled using state-of-the-art algorithms and tools for Markov chain Monte Carlo (MCMC), specifically NUTS \cite{hoffman2011nouturnsampleradaptivelysetting} via \texttt{BlackJAX} \cite{cabezas2024blackjaxcomposablebayesianinference},
and \texttt{JAX} \cite{lin2024automaticfunctionaldifferentiationjax} for automatic differentiation of the log-posterior.
The NUTS step size \(\varepsilon\) and mass matrix \(M\) are tuned during warm-up via window adaptation as introduced in Stan's manual \cite{stan-manual}. 
The log-likelihood \(p(y_{[T]} \mid \mathcal{L}_{\Theta}, u_{[T]})\) is computed using a Kalman filter; a deterministic implementation is used for $\sigma_\Sigma=0$. A small nugget (\(10^{-12}\)) ensures covariance stability in the Kalman filter. For each posterior, we generate four MCMC chains, each with a sample trajectory of $20\,000$ steps and a warm-up period (with window adaptation) of $5\, 000$ steps. Convergence is monitored using effective sample size (ESS) and the Gelman--Rubin statistic \(\hat{R}\). The experiments were run on a server equipped with an Intel Xeon Platinum 8260 CPU.

\paragraph{Parameter estimates}
MCMC samples $\{\Theta^{(i)}\}_{i=1}^N$ from the posterior allow for the estimation of any quantity of interest $Q=f(\Theta; u_{[T]})$ that is a function of the model parameters and possibly the control sequence. We consider two main point estimators of such QoIs. First is the posterior mean estimate (PME), which is approximated by empirical expectation of the quantity over the posterior distribution:
\begin{equation}
    \hat{Q}^{\text{PME}} = \frac{1}{N} \sum_{i=1}^N f(\Theta^{(i)}; u_{[T]}) . 
\end{equation}
Note that setting $f$ to be the identity function recovers the PME of the parameters themselves. Other choices of $f$ correspond to specific QoIs. Second is a plug-in estimate, e.g., based on the maximum a posteriori (MAP) estimate of the parameters:
\begin{equation} \label{eq:plugin}
    \hat{Q}^{\text{MAP}} = f(\Theta^{\text{MAP}}; u_{[T]}), \quad \text{where} \quad \Theta^{\text{MAP}} = \operatorname{argmax}_{\Theta^{(i)}}\, p(\Theta^{(i)} \mid y_{[T]}, u_{[T]}) . 
\end{equation}
Recall that for a general nonlinear function $f$, the posterior mean is not equivalent to the plug-in estimate that uses the posterior mean of the parameters, i.e., $\frac{1}{N}\sum_i f(\Theta^{(i)}; u_{[T]}) \neq f(\frac{1}{N}\sum_i \Theta^{(i)}; u_{[T]})$.

\subsection{Posterior geometry and interpretability}
\label{sec:post_geom_interp}
The choice of LTI parameterization fundamentally shapes the posterior landscape. Canonical forms \(\Theta_c\) induce well-behaved posteriors, as illustrated in \Cref{fig:canonical_analysis} for inference from a trajectory of length $T=400$ with $\sigma_\Sigma = 0.3$ and $\sigma_\Gamma = 0.0$ and $d_x = 2$.
\Cref{fig:canonical_analysis}(a) shows pairwise posterior marginals of these parameters, revealing a unimodal and approximately Gaussian distribution easily amenable to MCMC exploration and interpretation. The posterior distribution of the leading eigenvalue, in \Cref{fig:canonical_analysis}(b), is straightforward to extract and similarly Gaussian-like, offering a clear characterization of the inferred system dynamics.

\begin{figure*}[htbp]
  \centering
  \begin{minipage}{0.48\textwidth}
    \centering
    \includegraphics[width=\textwidth]{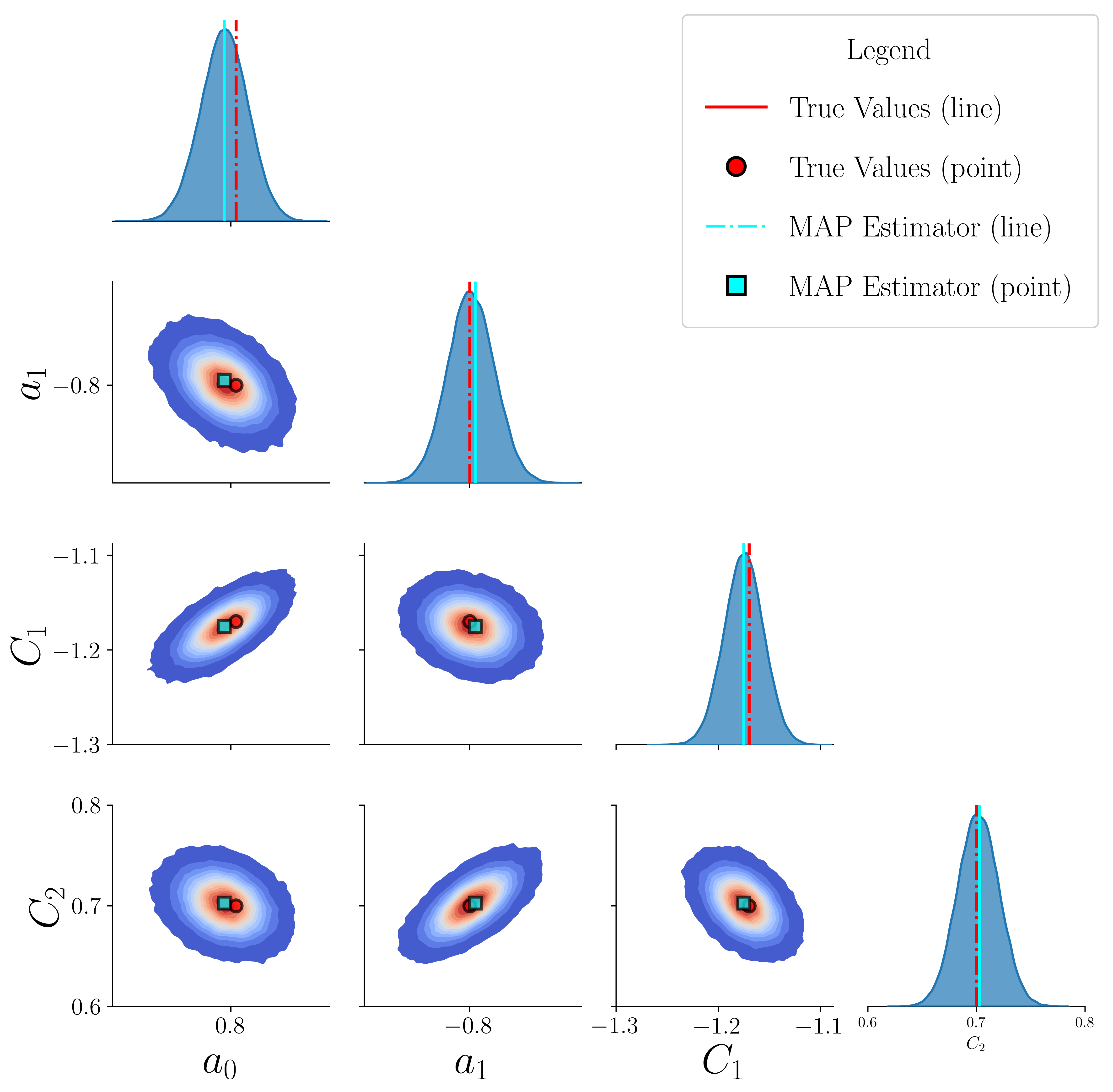}
    \vspace{0.5ex}\centering\textbf{(a)} Posteriors in canonical parameter space (\(\Theta_c\))\par\vspace{1ex}
  \end{minipage}
  \hfill
  \begin{minipage}{0.48\textwidth}
    \centering
    \includegraphics[width=\textwidth]{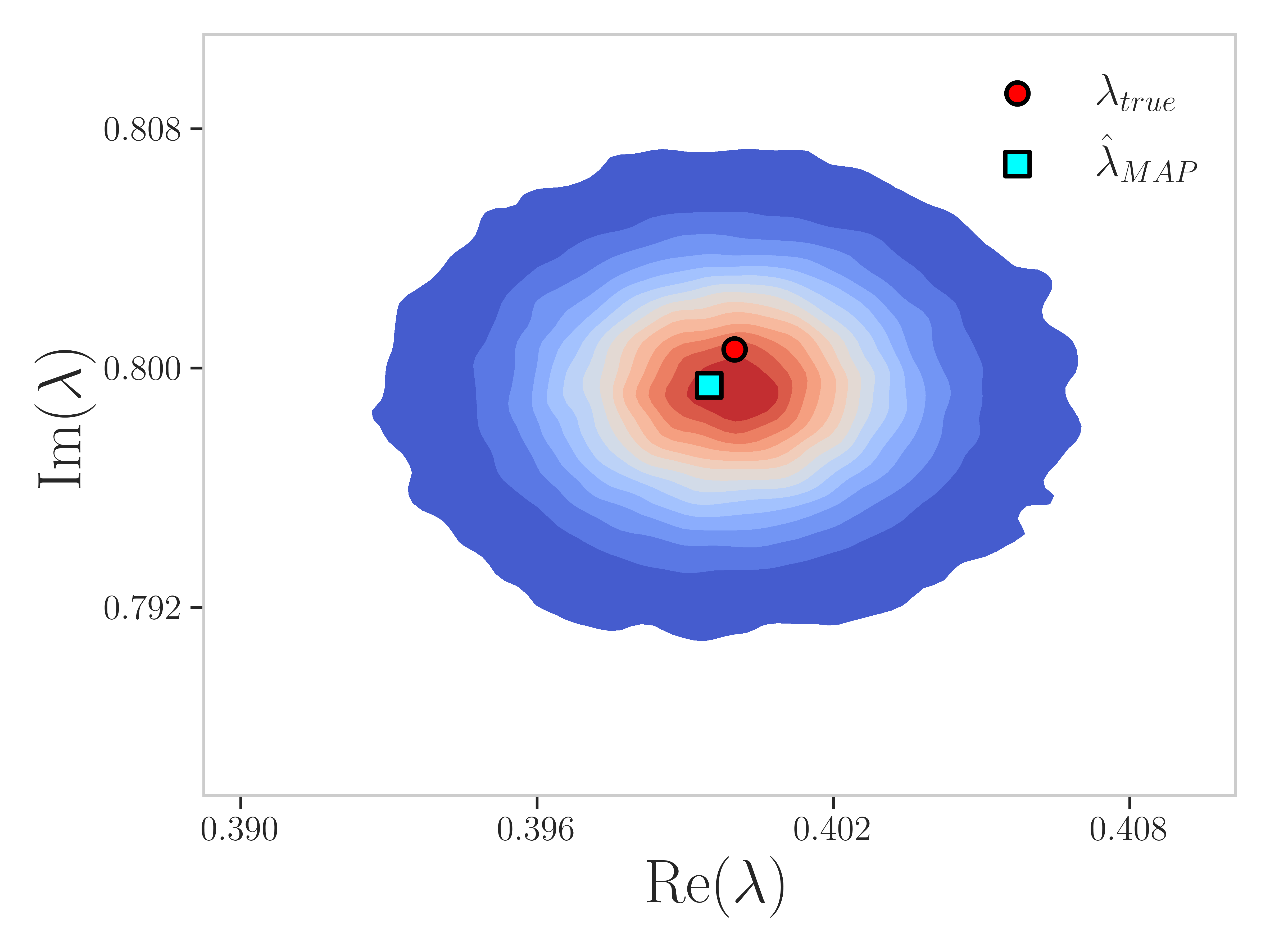}
    \vspace{0.5ex}\centering\textbf{(b)} Posterior of dominant eigenvalue\par\vspace{1ex}
  \end{minipage}
  \caption{Inference of an LTI system using the canonical form ($T=400$, $d_x = 2$). Panel (a) shows pairwise marginals of the canonical parameters \(\Theta_c\), revealing a well-behaved, unimodal posterior. True values and MAP estimate marked. Panel (b) shows the posterior distribution of the dominant complex eigenvalue pair, plotted in the complex plane, derived directly from samples of \(\Theta_c\).}
  \label{fig:canonical_analysis}
\end{figure*}
In contrast, the standard parameterization \(\Theta_s\) produces complex posterior geometries due to non-identifiability. \Cref{fig:parameter_distribution_comparison}(a) visualizes the \(\Theta_s\) posterior for the same data realization as \Cref{fig:canonical_analysis}; strong correlations and distinct modes are clearly visible, arising directly from the state-basis equivalence symmetry (\Cref{thm:isomorphism_general}).
This multimodal posterior structure,
corresponding implicitly to different choices of state transformation \(T_c\), 
hinders efficient MCMC sampling and complicates the interpretation of parameter estimates and uncertainties, compared to the unimodal structure observed for the posterior on \(\Theta_c\).

As discussed in \Cref{sec:representative_completeness}, the distribution over standard parameters \(p(\Theta_s \mid u_{[T]}, y_{[T]} )\) is intrinsically linked to the canonical posterior \(p(\Theta_c \mid u_{[T]}, y_{[T]})\) via similarity transformations \(T \in \mathrm{GL}(d_x)\). Formally constructing \(p(\Theta_s \mid u_{[T]}, y_{[T]})\) by averaging over the pushforward distributions induced by all such transformations is ill-defined? due to the non-compactness of \(\mathrm{GL}(d_x)\) and the lack of a suitable uniform measure.
To visualize the equivalences induced by what is nonetheless a large set of similarly transformations, \Cref{fig:parameter_distribution_comparison}(b) employs transformations $T$ restricted to the compact orthogonal subgroup \(\mathrm{O}(d_x) \subset \mathrm{GL}(d_x)\), which possesses a unique uniform Haar measure.
%
%
%
This visualization is generated by taking samples \(\Theta_c^{(i)}\) from the canonical posterior, drawing random orthogonal matrices \(Q^{(i)}\) uniformly from \(\mathrm{O}(d_x)\) (achieved by applying the QR decomposition to matrices with i.i.d.\ standard Gaussian entries), applying the corresponding similarity transformation \(\Psi_{Q^{(i)}}(\Theta_c^{(i)})\) to obtain samples in the standard parameter space, and then plotting the resulting empirical distribution.
The distribution shown in \Cref{fig:parameter_distribution_comparison}(b) reveals the intricate, non-elliptical geometry and strong parameter correlations introduced by this set of orthogonal transformations. Though \Cref{fig:parameter_distribution_comparison}(a) and \Cref{fig:parameter_distribution_comparison}(b) are not entirely the same, since the latter considers a more restricted set of transformations, they show a remarkable qualitative similarity for many marginals.
These structures contrast with the typically unimodal geometry of the canonical posterior, in \Cref{fig:canonical_analysis}(a).
\begin{figure*}[htbp]
  \centering
  \begin{minipage}{0.48\textwidth}
    \centering
    \includegraphics[width=\textwidth]{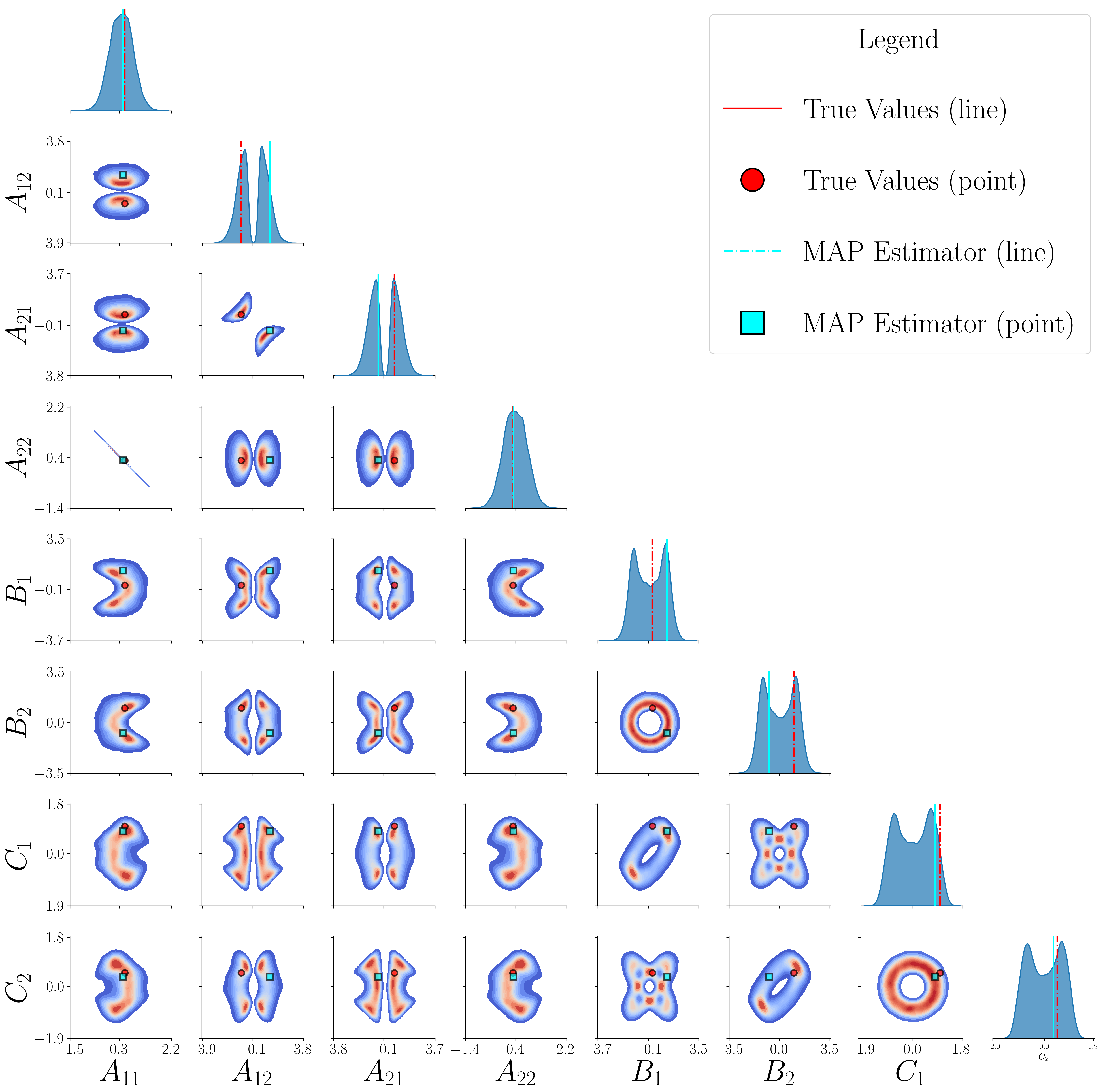}\vspace{1ex}
    \centering\textbf{(a)} Standard Parameter Space (\(\Theta_s\))\par\vspace{1ex}
  \end{minipage}
  \hfill
  \begin{minipage}{0.48\textwidth}
    \centering
    \includegraphics[width=\textwidth]{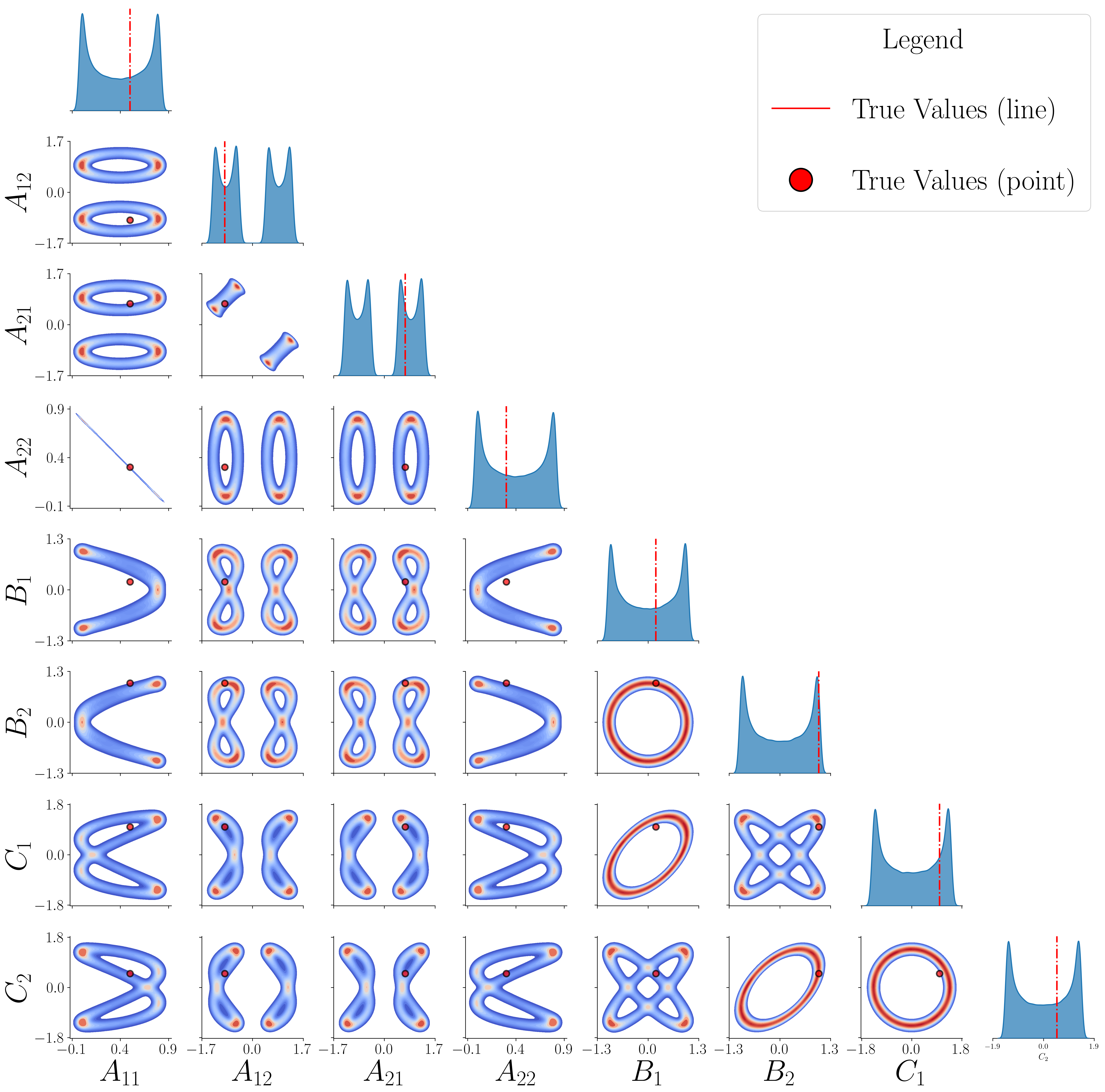}\vspace{1ex}
    \centering\textbf{(b)} Orthogonally Transformed Space\par\vspace{1ex}
  \end{minipage}
  \caption{Visualization of posterior geometry ($T=400$ experiment). Panel (a) shows the pair plot for standard parameters \(\Theta_s\), illustrating correlations and potential multimodality. True values and MAP estimate marked. Panel (b) visualizes the posterior pushed forward from the canonical space via random orthogonal transformations, highlighting the complex structure of the equivalence class.}
  \label{fig:parameter_distribution_comparison}
\end{figure*}

\subsection{Computational efficiency and MCMC performance}
\label{sec:eff_perf}

The posterior geometry induced by canonical forms translates directly into improvements in MCMC sampling efficiency and overall performance, compared to standard parameterizations.

We quantify this efficiency gain across a range of conditions by performing inference using both canonical and standard forms on 20 randomly generated, well-conditioned LTI systems, generated via the procedure in \Cref{sec:exp_setup_revised} with $d_x=2$, process noise \(\sigma_\Sigma = 0.3\), and measurement noise \(\sigma_\Gamma = 0.5\). Our experiments vary the length of the data trajectory produced by each system from $T=50$ to $T=1250$.
\Cref{fig:canonical_vs_standard_efficiency} displays the effective sample size per second (ESS/s) of MCMC sampling, averaged across all parameters and systems, for both parameterizations. The results demonstrate the efficiency of the canonical form. Notably, this efficiency gap tends to widen as \(T\) increases, to nearly three orders of magnitude; this highlights the computational advantage conferred by the identifiable and lower-dimensional canonical parameterization \(\Theta_c\).
\begin{figure}[htbp]
    \centering
    \includegraphics[width=0.8\linewidth]{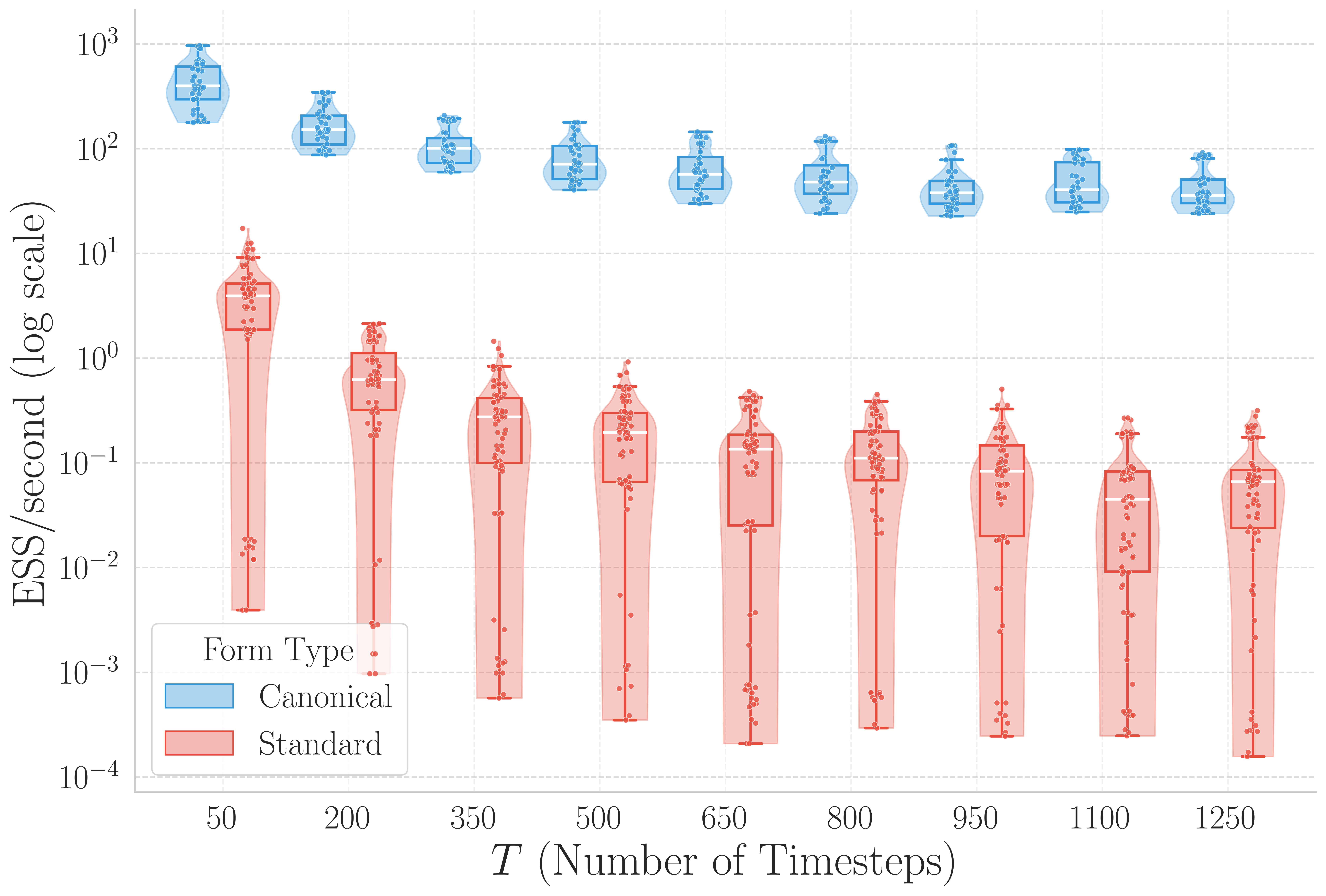} 
    \caption{Effective sample size per second, in log-scale, for MCMC sampling; comparison between canonical and standard inference approaches. Figures show a distribution of results for 20 randomly sampled systems, with \(\sigma_\Sigma=0.3\) and  \( \sigma_\Gamma=0.5\), as a function of trajectory length \(T\). Higher values indicate greater efficiency.}
    \label{fig:canonical_vs_standard_efficiency}
\end{figure}
The ESS per second is a useful aggregate measure of MCMC sampling efficiency, but other diagnostics reveal improved posterior exploration as well. Trace plots (see \Cref{fig:high-dim-grid}(d) for a $d_x=8$ system) reveal better mixing and faster convergence for canonical parameters \(\Theta_c\), while chains for standard parameters \(\Theta_s\) exhibit slower mixing and jumps between distinct modes corresponding to equivalent system representations.

Now focusing only on the canonical parameterization, we evaluate how the computational efficiency and accuracy of inference depend on the structure of the underlying true (data-generating) system. Specifically, we consider an ``easy'' LTI system, constructed to have well-balanced controllability and observability Gramians (procedure described in \Cref{appendix_e}),  and a ``hard'' system, constructed to have ill-conditioned Gramians. For the latter, we create a system such that the dominant eigenvalue ratio is higher than $0.95$. 
\Cref{fig:easy-vs-hard} presents results for 15 different noise realizations of the ``easy'' and ``hard'' system. \Cref{fig:easy-vs-hard}(a) shows that inference of the ``easy'' system achieves higher ESS/s than inference of the ``hard'' system. \Cref{fig:easy-vs-hard}(b) assesses the accuracy of the resulting posterior mean estimates of the canonical parameters \(\Theta_c\); we plot the the MSE of these estimators alongside the MSE of a baseline Ho--Kalman estimate (HKE), transformed to canonical coordinates for comparison. Parameter estimates for the poorly conditioned system have somewhat larger errors than estimates for the well-conditioned system, but in both cases the posterior mean estimate is significantly more accurate than the HKE. These results demonstrate the robustness of the canonical Bayesian approach across varying levels of system conditioning.
\begin{figure}[htbp]
    \centering
    \begin{minipage}[b]{0.49\textwidth}
        \centering
        \includegraphics[width=\linewidth]{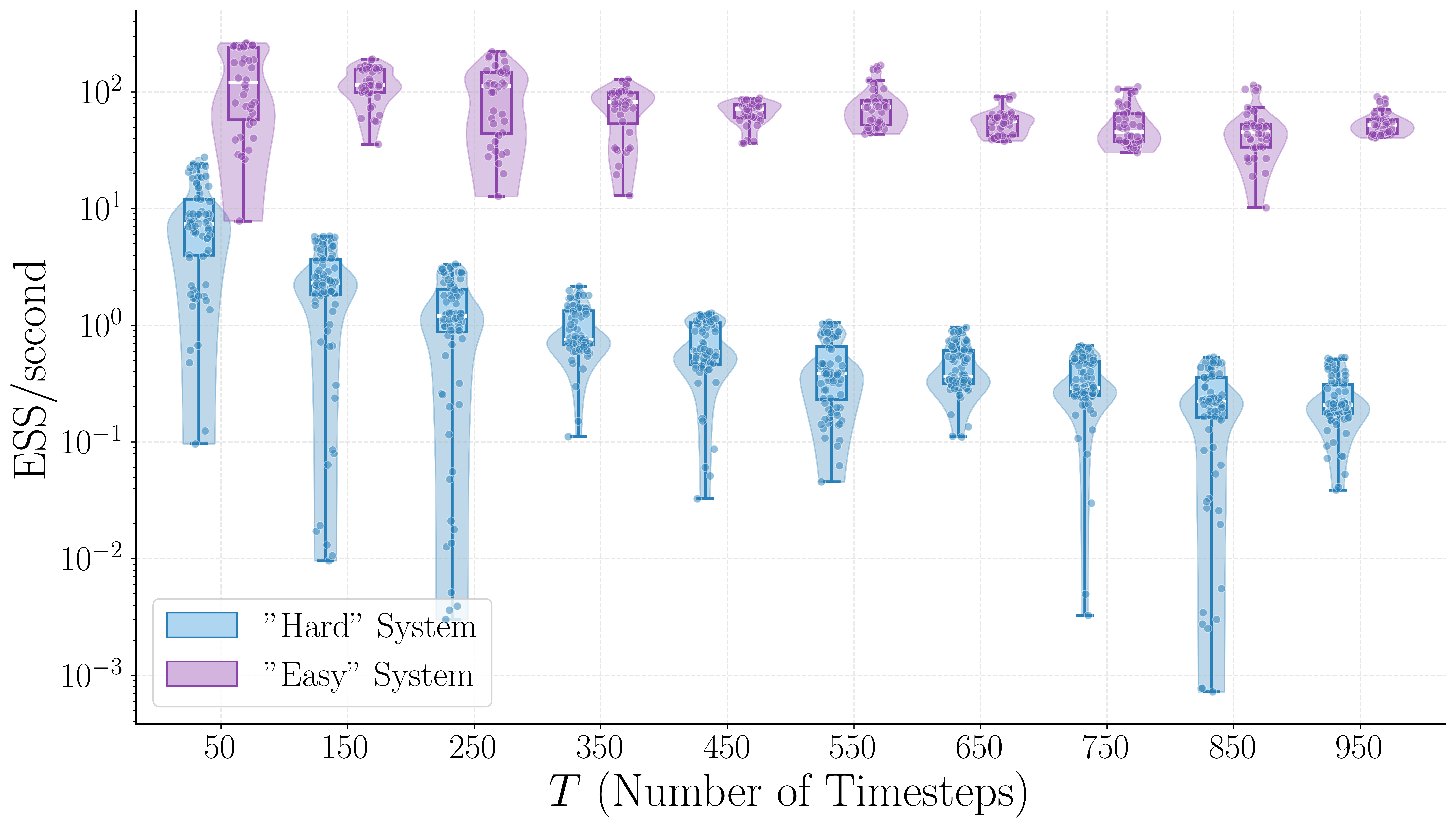}
        \vspace{0.5ex}\centering\textbf{(Left)} ESS/second comparison\par\vspace{1ex}
    \end{minipage}
    \hfill
    \begin{minipage}[b]{0.49\textwidth}
        \centering
        \includegraphics[width=\linewidth]{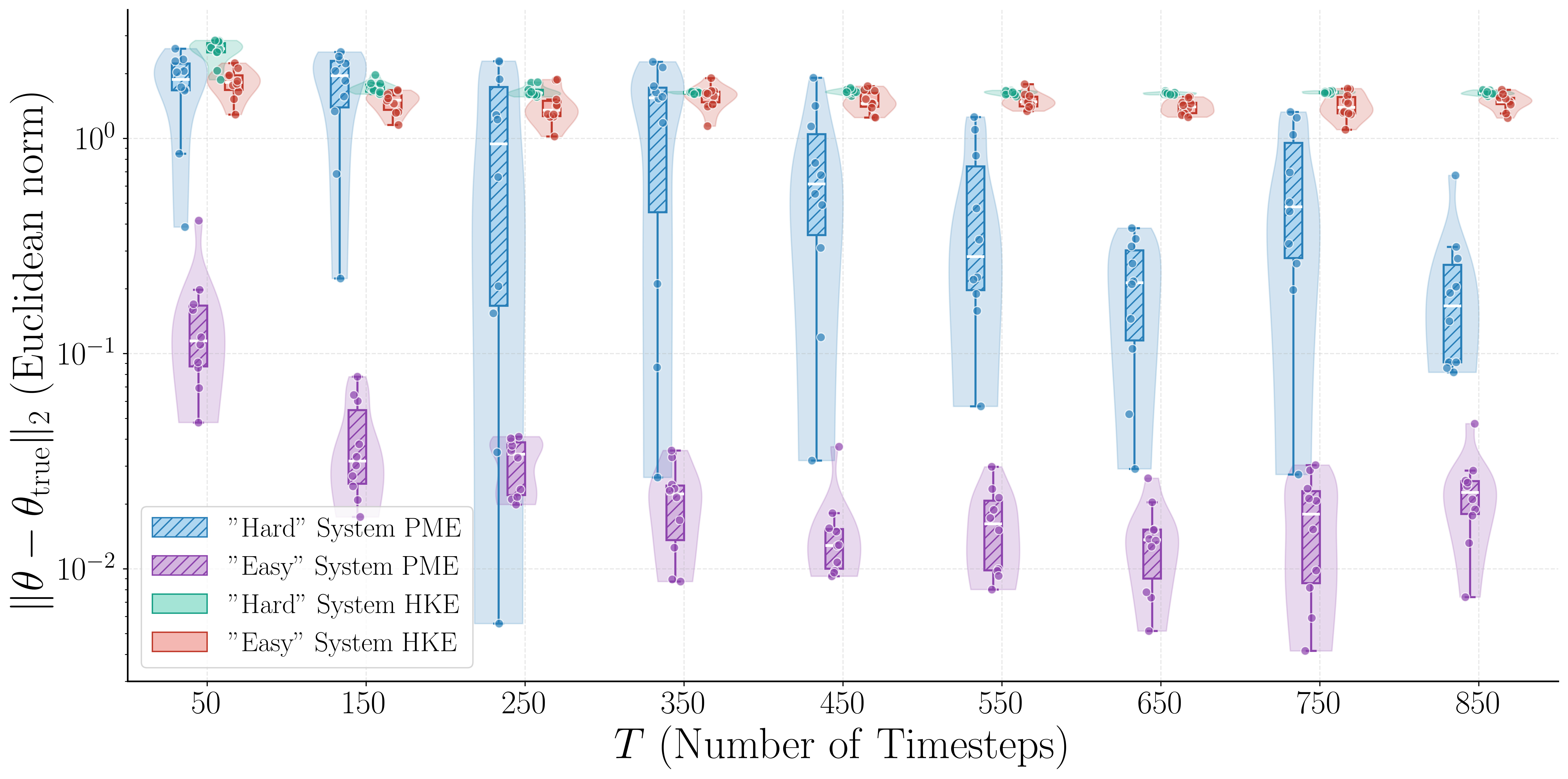}
        \vspace{0.5ex}\centering\textbf{(Right)} Estimation accuracy (MSE)\par\vspace{1ex}
    \end{minipage}
    \caption{Comparing the computational efficiency and accuracy of inference for representative ``easy'' (well-balanced Gramians) and ``hard'' (ill-conditioned Gramians) LTI systems; scatter shows results for 15 independent noise realizations. Left: ESS/second. Right: MSE of canonical parameter estimates, comparing the posterior mean (PME) to the Ho--Kalman estimate (HKE).}
    \label{fig:easy-vs-hard}
\end{figure}

We have also conducted experiments to assess how the execution time of inference, in both standard and canonical forms, scales with the system dimension $d_x$; see \Cref{appendix_f:scalability_with_dimension}. Here we observe that Bayesian inference in the canonical parameterization exhibits significantly better scaling than inference in the standard parameterization.


\subsection{Recovering quantities of interest}
\label{sec:acc_prior}
Now we evaluate the accuracy of QoI predictions produced by our methods and the role of prior information, particularly in data-limited scenarios.

\paragraph{Accuracy of point estimates and predictions}
In \Cref{fig:trajectory_comparison}, we assess the ability of our inference methods to recover system behavior by extracting posterior predictive estimates of the trajectory $y_{[T]}$, for a fixed ground-truth LTI system ($d_x =2$) simulated with process noise \(\sigma_\Sigma = 0.3\) and measurement noise \(\sigma_\Gamma = 0.5\).
Even with relatively limited data ($T=50$), plug-in estimates of the trajectory based on the MAP estimate of both the canonical (\(\Theta_c\), \Cref{fig:trajectory_comparison}(a)) and standard (\(\Theta_s\), \Cref{fig:trajectory_comparison}(d)) parameters (recall \eqref{eq:plugin}) capture the true trajectory reasonably well, despite noise-induced fluctuations. The shaded regions are centered 95\% credible regions of the marginal posterior predictive distribution at each time $t$. Moving to $T=400$ (\Cref{fig:trajectory_comparison}(b) and (e)), these regions contract and both MAP estimates of the trajectory improve in quality. These results suggest that 
optimization can find a representative point within the correct equivalence class despite the parameterization's identifiability issues. 

Differences emerge when using the posterior mean parameter estimate, however. \Cref{fig:trajectory_comparison}(c) and (f) show (i) the mean of the posterior predictive over trajectories, (ii) the plug-in estimate produced by the MAP, as before; and (iii) the plug-in trajectory estimate produed by the PME of the parameters, whether $\Theta_c$ or $\Theta_s$. While the plug-in estimate based on the PME of $\Theta_c$ appears accurate, the plug-in estimate based on the PME of $\Theta_s$  performs poorly, yielding an unrepresentative, nearly constant trajectory.  This failure is due to the complex multi-modal structure of the posterior of $\Theta_s$; its PME lies between modes, in a region of low posterior probability that does not correspond to any single valid system realization. 
This result underscores the advantage of the canonical form: identifiability leads to a unimodal posterior for which the PME serves as a robust and interpretable summary statistic.
\begin{figure*}[htbp]
  \centering
  \begin{minipage}{0.48\textwidth}
    \centering
    \includegraphics[width=\textwidth]{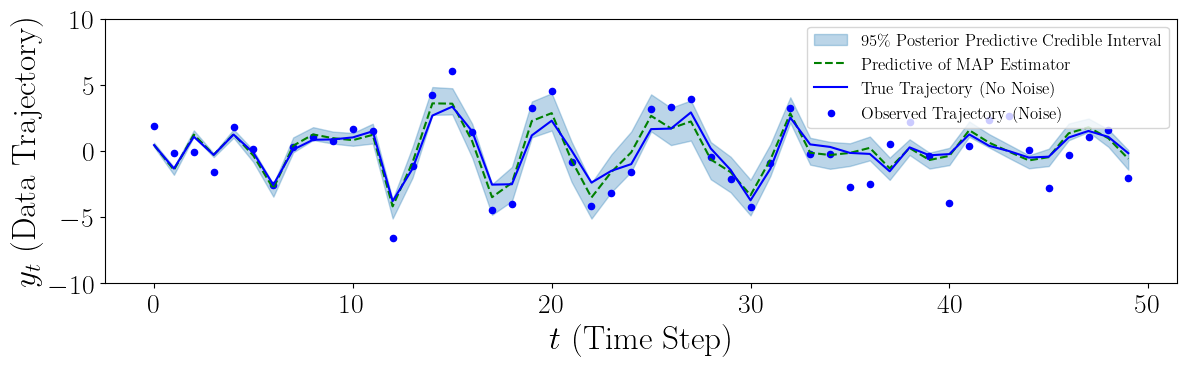}\\
    \vspace{0.5ex}\centering\textbf{(a)} Canonical, $T=50$, MAP\par\vspace{1ex}
    \includegraphics[width=\textwidth]{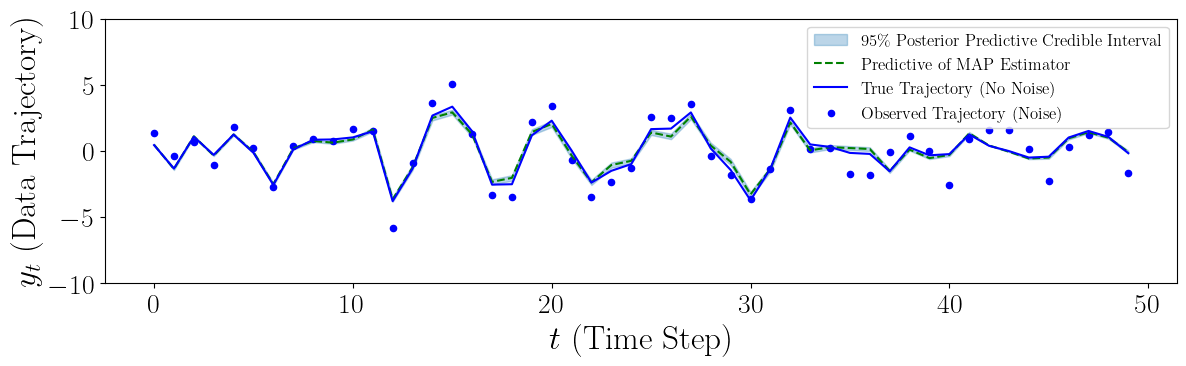}\\
    \vspace{0.5ex}\centering\textbf{(b)} Canonical, $T=400$, MAP\par\vspace{1ex}
    \includegraphics[width=\textwidth]{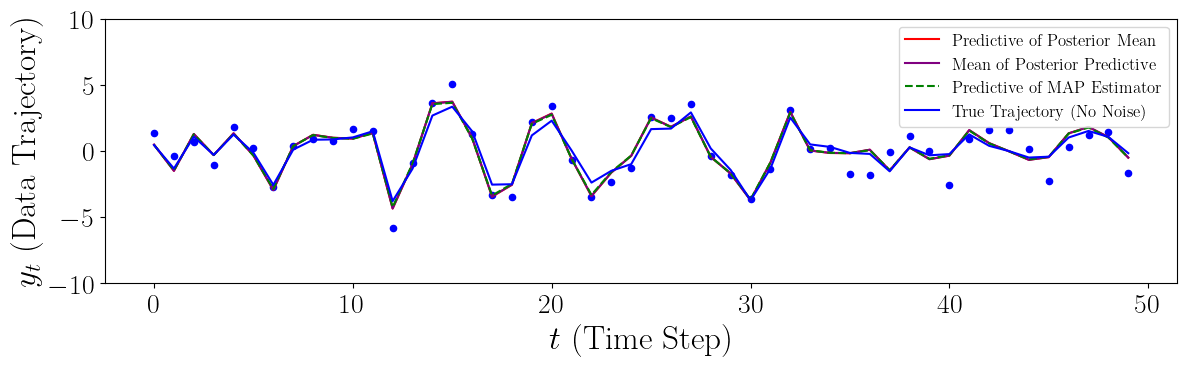}\\
    \vspace{0.5ex}\centering\textbf{(c)} Canonical, $T=50$, PME\par\vspace{1ex}
  \end{minipage}
  \hfill
  \begin{minipage}{0.48\textwidth}
    \centering
    \includegraphics[width=\textwidth]{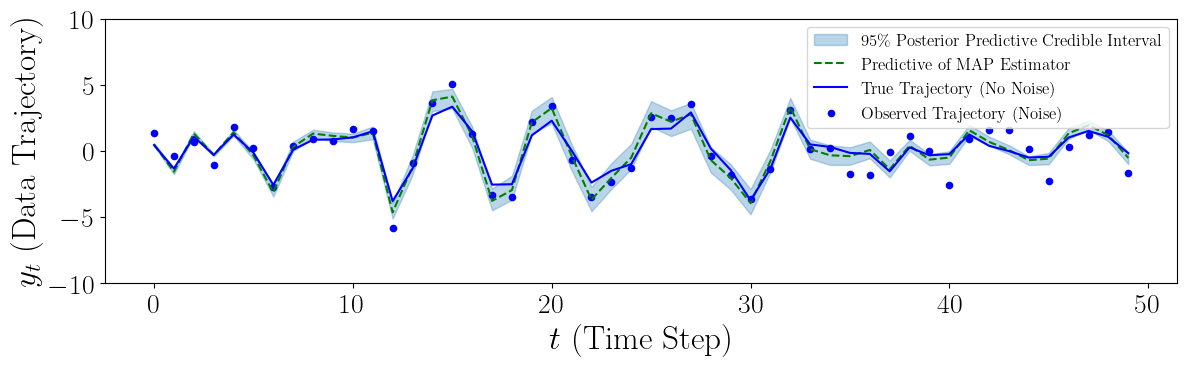}\\
    \vspace{0.5ex}\centering\textbf{(d)} Standard, $T=50$, MAP\par\vspace{1ex}
    \includegraphics[width=\textwidth]{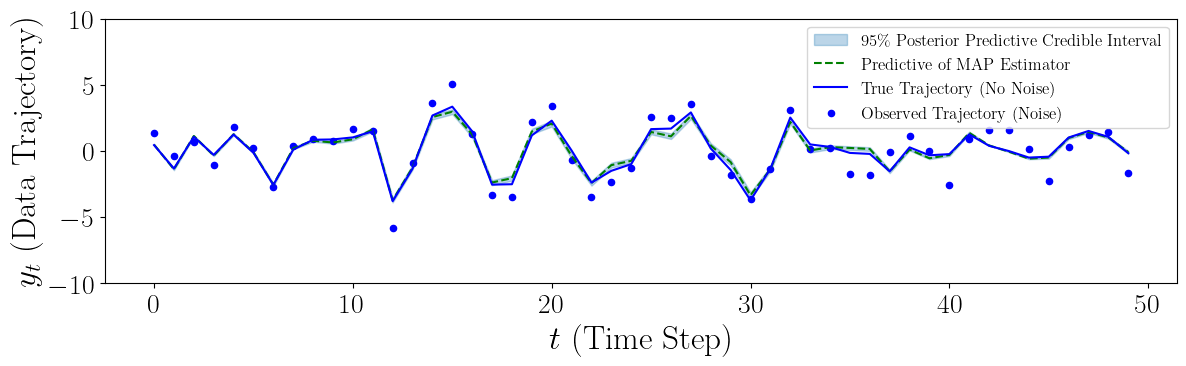}\\
    \vspace{0.5ex}\centering\textbf{(e)} Standard, $T=400$, MAP\par\vspace{1ex}
    \includegraphics[width=\textwidth]{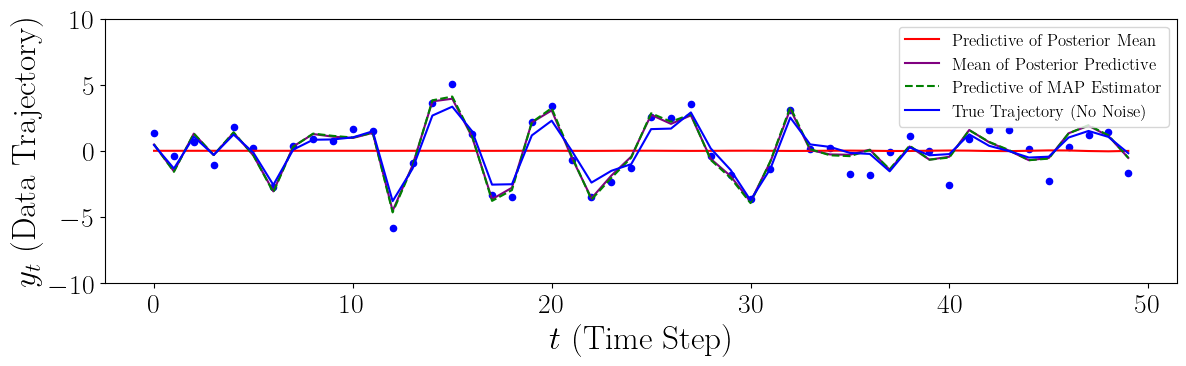}\\
    \vspace{0.5ex}\centering\textbf{(f)} Standard, $T=50$, PME\par\vspace{1ex}
  \end{minipage}
  \caption{Comparison of inferred trajectories using canonical (\(\Theta_c\), left panels) and standard (\(\Theta_s\), right panels) parameterizations for a system with \(\sigma_\Sigma=0.3, \sigma_\Gamma=0.5\). Rows compare MAP estimates at $T=50$ (top), MAP estimates at $T=400$ (middle), and several estimators at $T=50$ (bottom). All plots show the first 50 timesteps.
  }
  \label{fig:trajectory_comparison}
\end{figure*}

\paragraph{Prior sensitivity analysis}
To investigate the influence of the prior, we perform a Bayes risk-type analysis as follows. 
%
%
We generate 30 ground-truth systems \(\Theta_{c,0}\) by sampling eigenvalues from the restricted region prior as defined in \Cref{prior_restricted_magnitude} and applying random orthogonal transformations. Data trajectories up to $T=1250$ steps are simulated with fixed measurement noise (\(\sigma_\Gamma=0.5\)) and two levels of process noise: none (\(\sigma_\Sigma=0.0\)) and moderate (\(\sigma_\Sigma=0.3\)).
Bayesian inference is performed using the canonical form \(\Theta_c\) under three different prior models:
the restricted region prior, which we will call the ``informative prior'';
and two ``weakly informative priors,'' the polar coordinate prior and the implied prior from uniform coefficients as defined in \Cref{prior_stable_uniform}.
For an additional baseline comparison, we use the HKE \cite{oymak2019}, which provides a non-Bayesian point estimate and involves no prior.

We first assess the accuracy of estimates $\widehat{H} = H(\widehat{\Theta}_c)$ of the Hankel matrix, computed by plugging in a point estimate $\widehat{\Theta}_c$ of $\Theta_c$---either the PME or the HKE. For each randomly generated system, we compute the Frobenius norm of the error in the Hankel matrix derived from the PME, \( \| H (\widehat{\Theta}_c) - H(\Theta_{c,0})\|_F\), and compute the median and range of these errors. We do the same for $L^2$ errors in estimates of the canonical parameters, \(\| \widehat{\Theta}_c - \Theta_{c,0}\|_2^2\), all over a range of \(T\).
\begin{figure}[htbp]
    \centering
    \begin{minipage}[b]{0.48\textwidth}
        \centering
        \includegraphics[width=\linewidth]{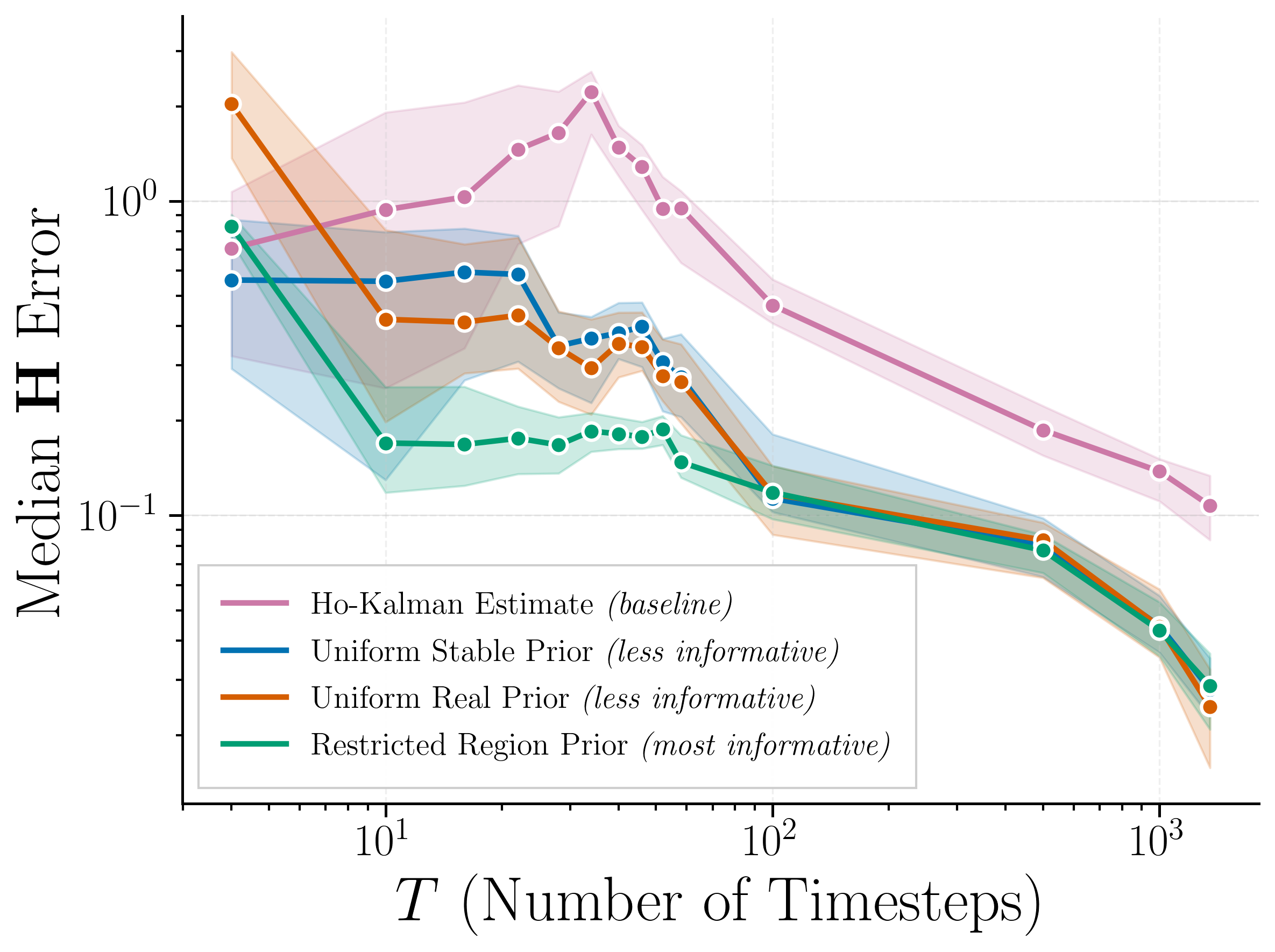}
        \vspace{0.5ex}\centering\textbf{(a)} Hankel error, \(\sigma_\Sigma=0.0\)\par
    \end{minipage}
    \hfill
    \begin{minipage}[b]{0.48\textwidth}
        \centering
        \includegraphics[width=\linewidth]{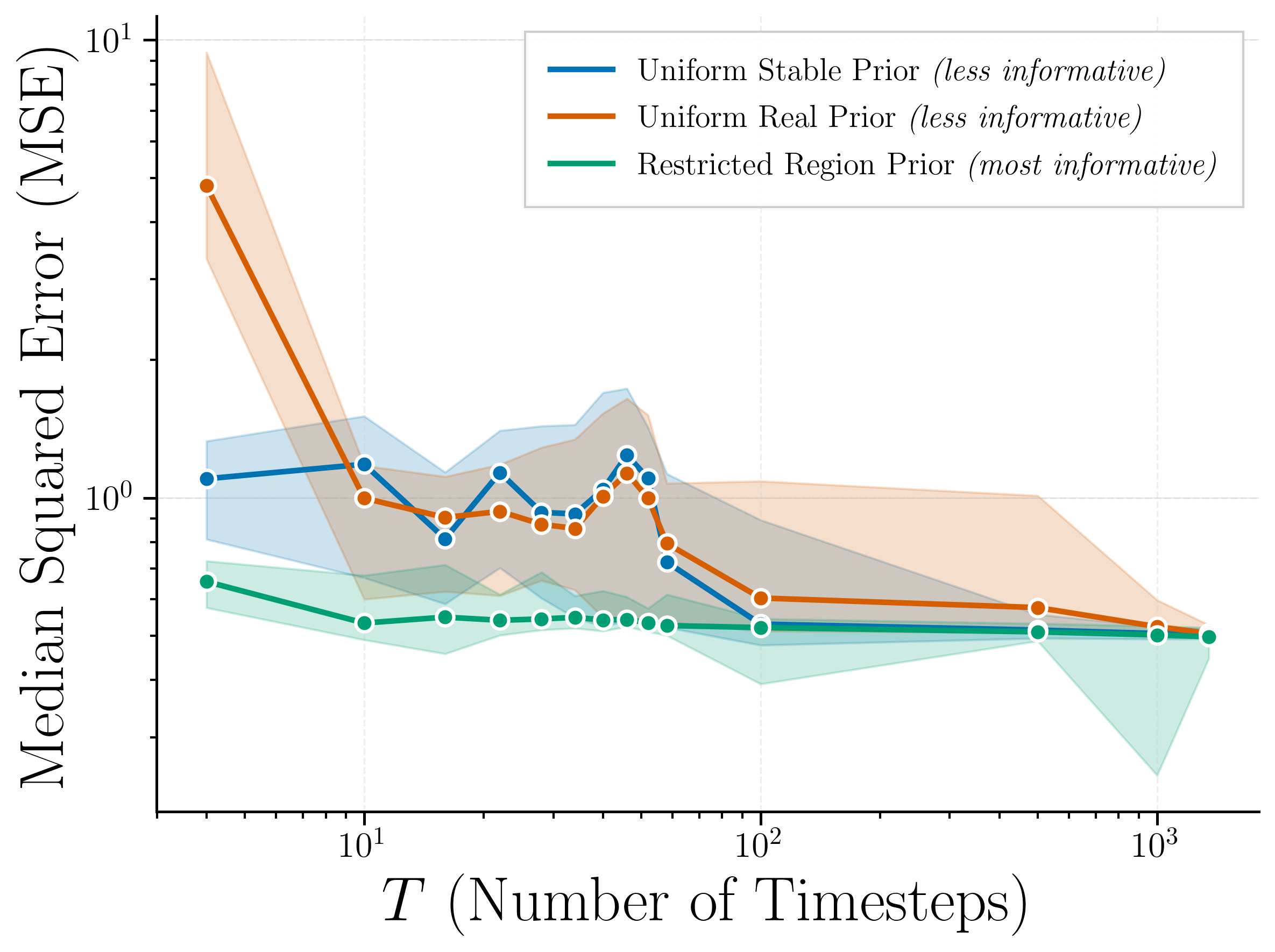}
        \vspace{0.5ex}\centering\textbf{(b)} MSE, \(\sigma_\Sigma=0.0\)\par
    \end{minipage}

    \vspace{0.5em} 

    \begin{minipage}[b]{0.48\textwidth}
        \centering
        \includegraphics[width=\linewidth]{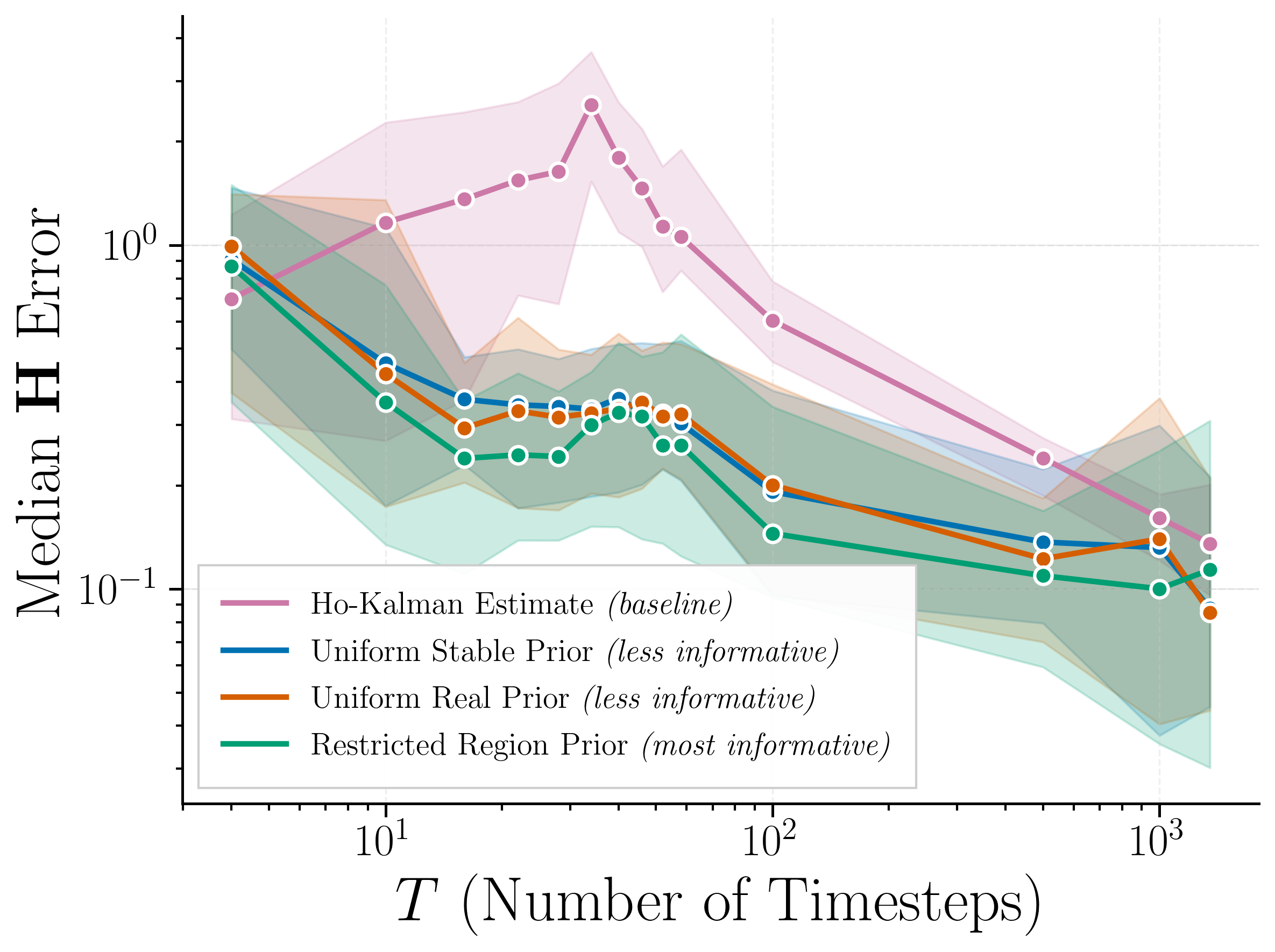}
        \vspace{0.5ex}\centering\textbf{(c)} Hankel error, \(\sigma_\Sigma=0.3\)\par
    \end{minipage}
    \hfill
    \begin{minipage}[b]{0.48\textwidth}
        \centering
        \includegraphics[width=\linewidth]{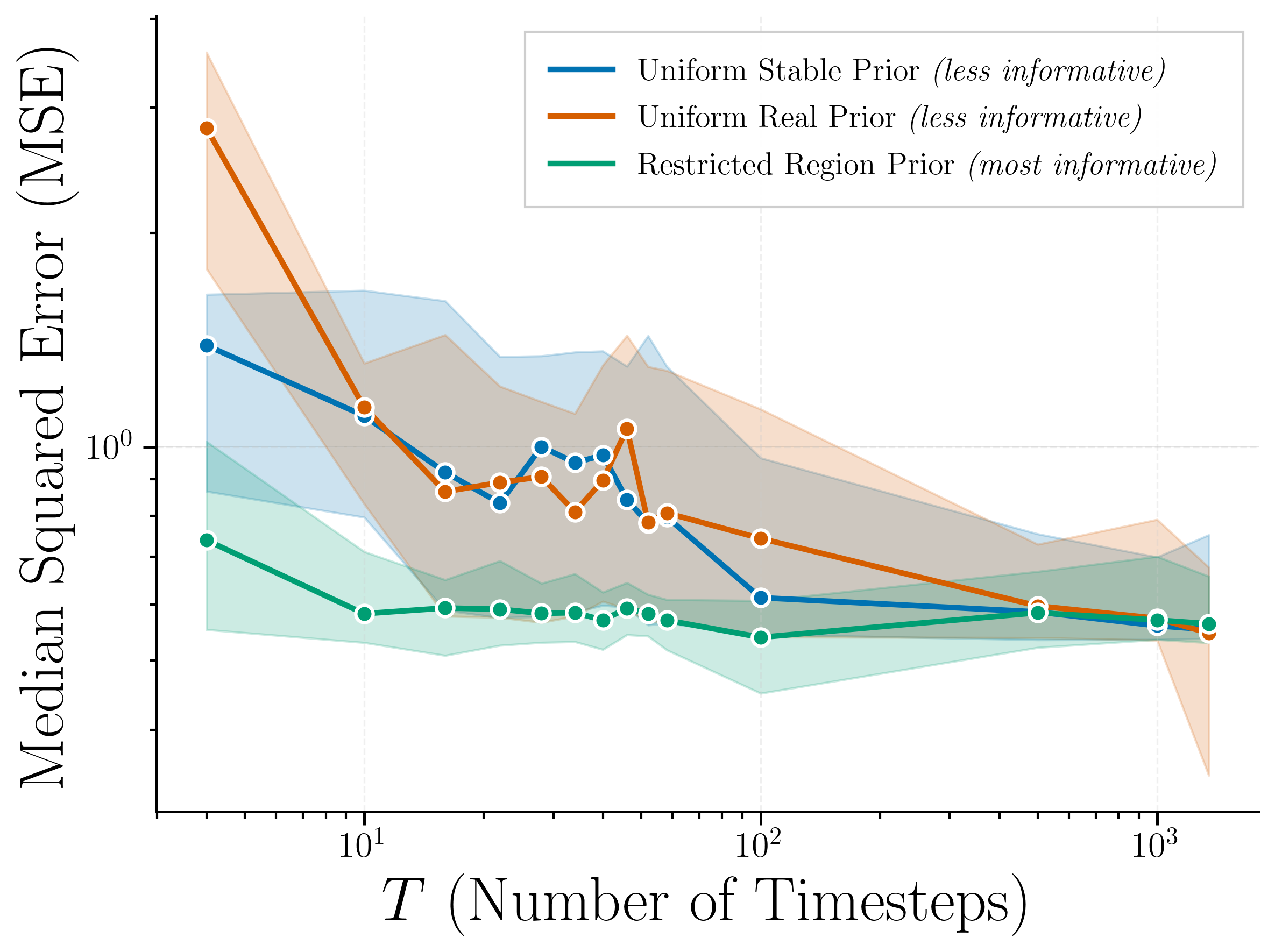}
        \vspace{0.5ex}\centering\textbf{(d)} MSE, \(\sigma_\Sigma=0.3\)\par
    \end{minipage}

    \caption{Prior sensitivity analysis: Median errors (over 30 systems)---of the Hankel matrix, in Frobenius norm (left column) and of the parameters, in $L^2$ norm (right column)---versus data length \(T\) (log-log scale). Comparisons are made under informative and uninformative priors against the HKE baseline, with fixed process noise (\(\sigma_\Gamma=0.5\)). Shaded regions depict the 5th--95th percentile range of these errors across the 30 systems.
    }
    \label{fig:convergence_grid}
\end{figure}
Results are shown in \Cref{fig:convergence_grid}.

As expected, for small \(T\),  the informative prior yields the lowest error, significantly outperforming less informative priors and the baseline,  highlighting the value of accurate prior information when data are scarce. The parameter MSE in \Cref{fig:convergence_grid}(b, d) consistently shows the benefit of the informative prior across both noise levels. However, the Hankel error in \Cref{fig:convergence_grid}(a, c), which relates directly to the input-output impulse response, shows a more nuanced picture. In the presence of process noise (\(\sigma_\Sigma=0.3\)), the advantage of the informative prior in terms of Hankel error diminishes at very low and very large \(T\); these estimates become comparable in error to the HKE or the weakly informative PME, while in both cases, the HKE-induced estimate is bad for the medium amounts of data. This suggests that while the informative prior leads to better parameter estimates, process noise can obscure the initial impulse response, making direct estimators such as Ho--Kalman competitive for predicting short-term input-output behavior with very limited, noisy data. As \(T\) increases, however, likelihood information dominates, all Bayesian methods converge, and they consistently outperform the baseline---demonstrating the robust behavior of the Bayesian canonical inference framework over a range of data conditions.

\subsection{Asymptotic convergence and scalability}
\label{sec:asymp_scale}
We empirically validate the consistency of parameter estimates and the convergence of the posterior distribution towards the Gaussian predicted by the BvM \Cref{thm:BvM_LTI_applicability} for identifiable canonical forms. This experiment uses 15 randomly generated ground truth systems. For each system, data are simulated with zero process noise ($\sigma_\Sigma=0$) and fixed measurement noise $\sigma_\Gamma=0.5$ over increasing trajectory lengths $T$. From these simulated data, we determine the MAP estimate of the canonical parameters, denoted as $\Theta_c^{\text{MAP}}$.

A key aspect of BvM is the role of the FIM. We compare two FIM calculations, both evaluated at the MAP estimate $\Theta_c^{\text{MAP}}$ for each ground truth system: (1) $I_T^{\text{analytic}}(\Theta_c^{\text{MAP}})$ and (2) $I_T^{\text{numeric}}(\Theta_c^{\text{MAP}})$. $I_T^{\text{analytic}}$ is computed using the recursive formulas derived in \Cref{prop:fisher_LTI}. These formulas are exact for the $\sigma_\Sigma=0$ case considered here. $I_T^{\text{numeric}}$ is a numerical approximation of the expected FIM. It is obtained by first generating $M=100$ independent noisy trajectory realizations using the ground truth parameters. A single MAP estimate, $\Theta_c^{\text{MAP}}$, is computed from one of these $M$ realizations (which we designate as the reference realization). Then, the observed FIM is computed for each of the $M$ realizations by applying automatic differentiation (AD) to the log-likelihood (using \texttt{JAX}), with each observed FIM being evaluated at this single $\Theta_c^{\text{MAP}}$ derived from the reference realization. Finally, these $M$ observed FIMs are averaged. This AD-based approach provides an efficient way to estimate the expected FIM and readily extends to scenarios with non-zero process noise where analytical formulas become more complex (see \Cref{appendix_fisher_information_and_BvM}).

\Cref{fig:fisher_information_convergence} provides empirical evidence for the predictions of the BvM theorem.
The insets shown at $T=\{105, 305, 495\}$ offer visual validation, showing that the posterior distribution (approximated via MCMC samples) converges to the predicted Gaussian as the trajectory length $T$ increases. These posterior contours not only become elliptical but also align with the confidence ellipsoid derived from the FIM (e.g., $I_T^{\text{analytic}}$). This confirms both the asymptotic normality and the fact that the inverse FIM correctly describes the asymptotic posterior covariance.

The main plot underlines a critical prerequisite for BvM: the consistency of the MAP estimator. It shows the log-ratio of the confidence ellipsoid volumes from the two FIMs converging to zero with increasing observation sequence. This occurs precisely because the MAP estimate $\Theta_c^{\text{MAP}}$ converges to the ground truth, causing the two distinct FIM calculations to agree in the large-$T$ limit. Further analyses, though not plotted, confirm this consistency, showing that the MSE of the posterior mean decreases appropriately with $T$ and that the empirical coverage of FIM-based confidence intervals approaches nominal levels.

For short trajectories (small $T$ ), the MAP estimate $\Theta_c^{\mathrm{MAP}}$ is derived from a single, noisy realization and is therefore a poor estimate of the ground truth. Because the two FIM calculations are based on expectations over different data-generating distributions, they are fundamentally calculating different objects and are not expected to agree. As the trajectory length $T$ increases, however, the MAP estimator becomes consistent. As the evaluation point $\Theta_c^{\text {MAP }}$ approaches the data-generating ground truth, the two FIM calculations converge to the same quantity: the FIM at the ground truth. This convergence is precisely what the plot demonstrates and is a key piece of evidence for the consistency of the estimator.

\begin{figure*}[htbp]
  \centering
  \includegraphics[width=0.9\textwidth]{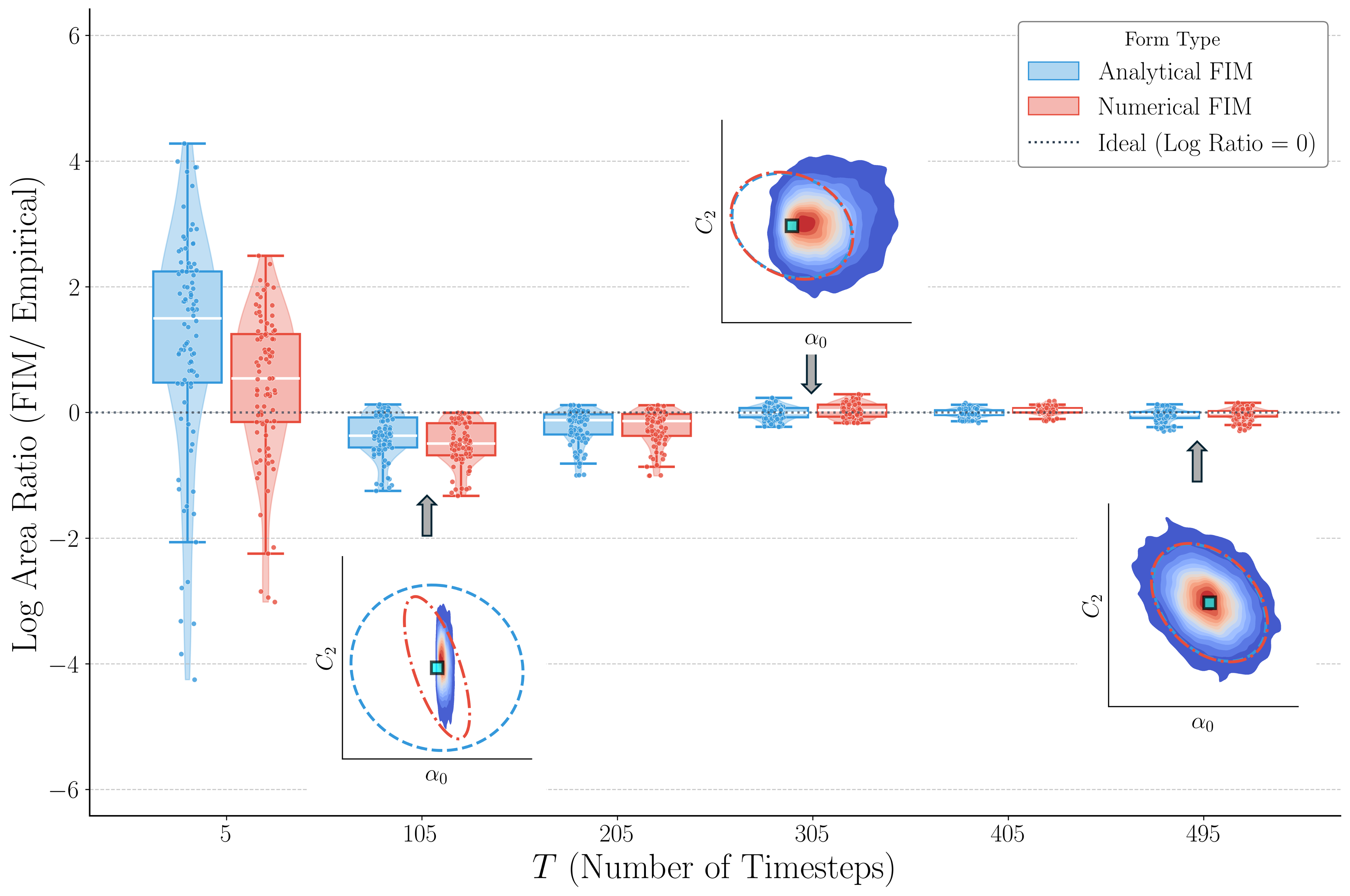} 
  \caption{Consistency and BvM convergence analysis (\(\sigma_\Sigma=0\), \(\sigma_\Gamma=0.5\)). Main plot: Boxplots of the log-ratio of confidence ellipsoid volumes (derived from numerical AD-based FIM versus analytical recursive FIM) over time \(T\) across 15 systems. Convergence to zero indicates agreement. Insets: Visualization of posterior density contours versus analytical and derived FIM ellipsoid for parameters \(\alpha_0, C_2\) at $T=\{105, 305, 495\}$ for the median system, illustrating convergence towards the appropriate Gaussian.}
  \label{fig:fisher_information_convergence}
\end{figure*}

\section{Conclusions and future work}
\label{sec:conclusion}

We have presented an efficient Bayesian framework for identifying LTI dynamical systems from observed trajectory data, leveraging identifiable state-space canonical forms.
We have demonstrated both theoretically and empirically the significant advantages of the canonical approach over inference based on standard parameterizations.

\paragraph{Summary of contributions}
Our core contribution is the integration of canonical state-space representations with principled Bayesian inference to resolve the intrinsic non-identifiability of linear dynamical systems. We prove that inference on these identifiable canonical forms yields the same posterior distributions for invariant quantities of interest as inference in standard but non-identifiable parameterizations (\Cref{equivalence_of_pushforwards}). We also show that this framework facilitates meaningful prior specification (e.g., directly on eigenvalues controlling dynamical properties). It also yields posterior concentration around a consistent parameter estimate and justifies Gaussian posterior approximations, as described by a Bernstein--von Mises theorem (\Cref{thm:BvM_LTI_applicability}); this is not the case for standard parameterizations. Numerical experiments show that our canonical approach consistently provides superior computational efficiency compared to standard parameterizations (\Cref{sec:numerics}).

\paragraph{Limitations}
Though it is much more computationally efficient than inference on \(\Thetastan\) (\Cref{sec:eff_perf}), our MCMC approach inherently carries a higher computational burden than non-Bayesian point estimation methods for system identification (e.g., subspace ID or Ho--Kalman estimates \cite{vanOverschee1996subspace, oymak2019}); this gap reflects the cost of full Bayesian uncertainty quantification
and the improved performance of posterior mean estimates (as observed in our numerical experiments).
We also note that computational performance may be sensitive to the specific canonical form chosen, notably for MIMO systems. 
Another key consideration is our assumption of a known state dimension \(d_x\). Determining \(d_x\) from data, in the present Bayesian setting, is a problem of Bayesian model selection. It is solved by comparing marginal likelihoods across dimensions---a principled but computationally intensive approach, which may be especially relevant if the ``practical'' system order is lower than the minimal one, given finite and limited data.

\paragraph{Extensions and future work}
%
%
Fully instantiating the proposed Bayesian inference in canonical representations for MIMO systems presents several further challenges. The intricate structure of MIMO systems complicates structural identifiability analysis, demanding more sophisticated prior specification methods that make sense of internal parameter structure. Inference schemes may then need to be tailored to specific structural indices (\Cref{remark:mimo_complexity}). Despite the non-uniqueness of canonical forms in the MIMO setting, the foundational results of this paper can still hold by making a choice and committing to one specific form. But it is natural to then consider ``hybrid'' inference methods that exploit multiple, equivalent canonical forms; these might improve sampling efficiency near controllability or observability boundaries.

Another important extension, as mentioned above, is to move beyond fixed model orders and infer the state dimension \(d_x\) itself, e.g., using Bayesian model selection methods or some approximate information criterion.
To tackle very high-dimensional systems, addressing scalability is essential; we suggest exploring variational inference methods or stochastic gradient MCMC tailored to state-space structures.
%
%
From a theoretical standpoint, a key challenge is to quantify finite-sample performance by deriving rigorous non-asymptotic bounds on posterior convergence with increasing data, moving beyond the typical Bernstein--von Mises guarantees. Finally, extending structure-informed Bayesian techniques to nonlinear systems, with a similar decomposition of the model class into identifiable and non-identifiable features, represents a considerably more challenging but highly promising direction.

\bibliographystyle{siamplain}
\bibliography{references}

\appendix
\section{Scalability with dimension}
\label{appendix_f:scalability_with_dimension}
Here we report on an additional experiment investigating how computational performance scales with system dimension $d_x$. We generate test systems with \({d_x}\) ranging from 2 to 10 using a specific procedure designed to yield balanced controllability and observability properties, crucial for fair comparison across dimensions (see \Cref{appendix_e}).

We compare the computation time for inference using standard \(\Theta_s\) and canonical \(\Theta_c\) forms on these balanced systems for fixed trajectory lengths \(T=200\) and \(T=500\). \Cref{fig:high-dim-grid} presents the timing results (normalized per evaluation where appropriate). \Cref{fig:high-dim-grid}(a) and \Cref{fig:high-dim-grid}(b) show the time required for posterior and gradient evaluations, respectively. The cost for the standard form increases markedly faster with dimension \({d_x}\) than for the canonical form. While the gradient computation for the canonical form might have slightly higher overhead at the lowest dimension (\(d_x=2\)) due to its specific structure, its superior scaling becomes evident quickly. This advantage arises because standard state-space operations (like the Kalman filter) inherently involve dense matrix calculations scaling polynomially with \({d_x}\) (e.g., \(\propto {d_x}^3\)), whereas the often sparse structure of canonical forms allows for more efficient implementations. \Cref{fig:high-dim-grid}(c) confirms this trend for the total inference time, demonstrating substantial computational savings with the canonical approach, especially for \(d_x>4\).

Beyond computation time, MCMC sampling efficiency also benefits from the canonical form in higher dimensions. \Cref{fig:high-dim-grid}(d) shows diagnostics for a \(d_x=8\) system (T=500). The trace plot for a canonical parameter (\(\alpha_0\)) exhibits good mixing and exploration of a concentrated posterior, reflected in the unimodal histogram centered near the true value (albeit with a slight offset, potentially due to noise or minor model misspecification). In contrast, the trace plot for a corresponding standard parameter (\(A_{11}\)) shows poorer mixing, and its histogram is significantly more dispersed. This difference in sampling behavior is typical across parameters and underscores the advantage of the well-behaved canonical posterior landscape for efficient MCMC exploration in higher dimensions. 

We also note a subtle point: our inference assumes diagonal noise covariances \(\Sigma, \Gamma\), whereas the balancing transformation \(T\) applied to the original system could induce off-diagonal terms in the true noise covariances of the balanced system. This slight model misspecification might contribute to small biases observed in estimates like that in \Cref{fig:high-dim-grid}(d). Currently, our inference assumes a simplified noise model, estimating a single variance parameter proportional to a standard normal distribution for both process and measurement noise. While a more accurate approach would involve learning the full Cholesky factors of the covariance matrices $\Sigma$ and $\Gamma$, we have observed that the associated computational cost is substantial, and a diagonal covariance assumption provides a valid simplification even under model misspecification, suggesting inference of diagonal matrices for $\Sigma$ and $\Gamma$ as a practical recommendation.
\begin{figure}[htbp]
    \centering
    \begin{minipage}[b]{0.48\textwidth}
        \centering
        \includegraphics[width=\linewidth]{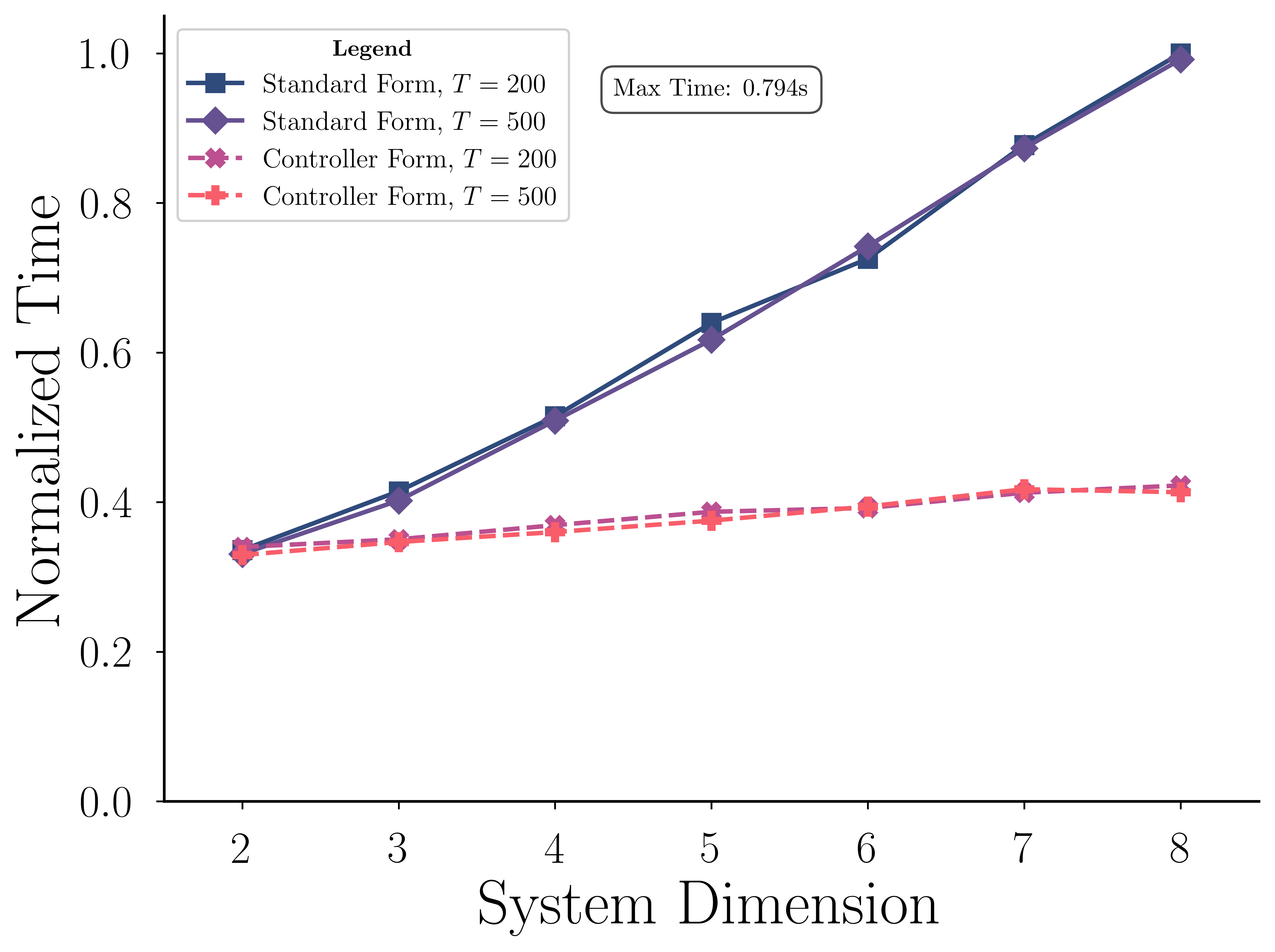}
        \vspace{0.5ex}\centering\textbf{(a)} Posterior Eval Time\par
    \end{minipage}
    \hfill
    \begin{minipage}[b]{0.48\textwidth}
        \centering
        \includegraphics[width=\linewidth]{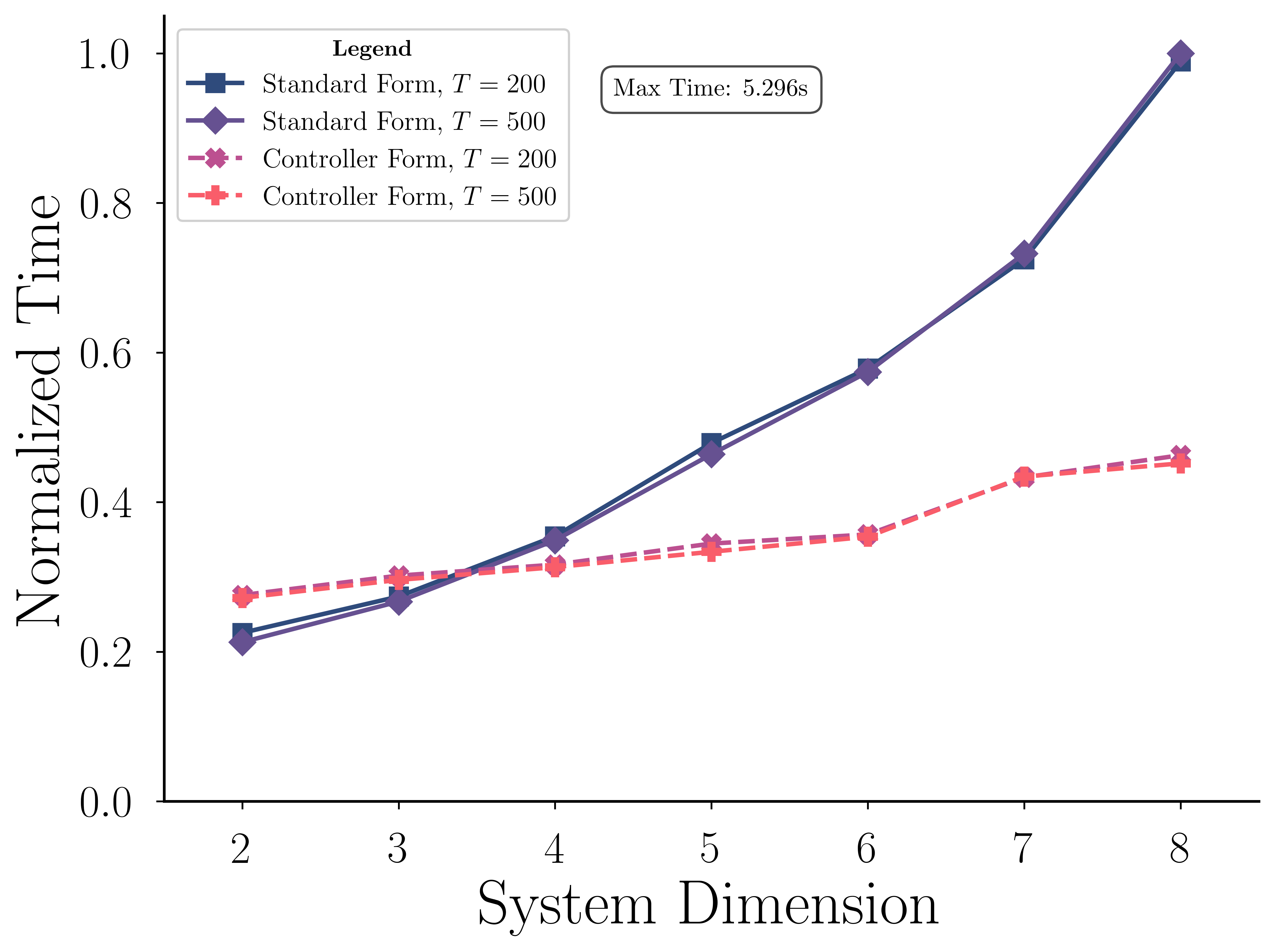}
        \vspace{0.5ex}\centering\textbf{(b)} Gradient Eval Time\par
    \end{minipage}

    \vspace{0.5em} 
    \begin{minipage}[b]{0.48\textwidth}
        \centering
        \includegraphics[width=\linewidth]{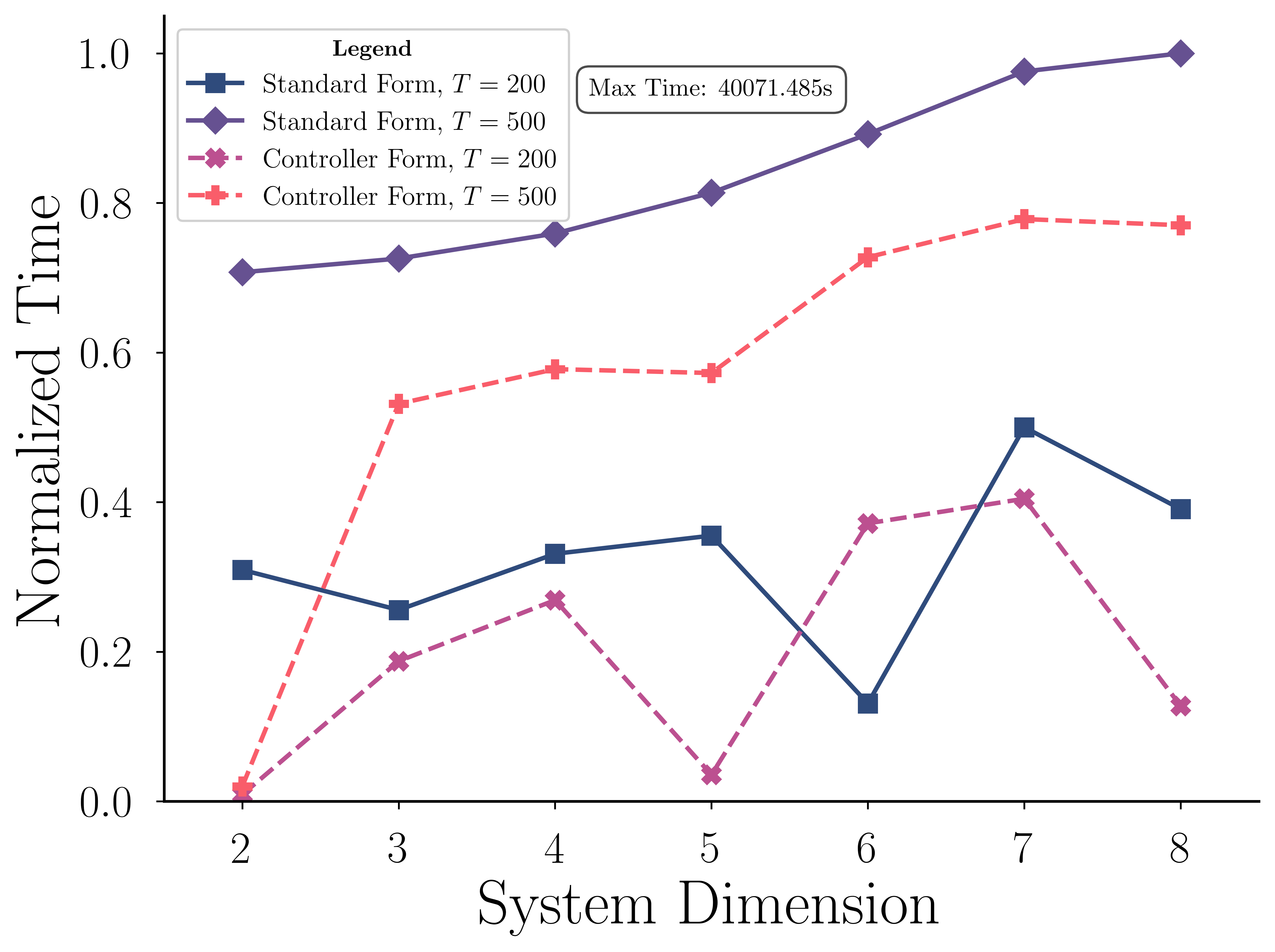}
        \vspace{0.5ex}\centering\textbf{(c)} Total Inference Time\par
    \end{minipage}
    \hfill
    \begin{minipage}[b]{0.48\textwidth}
        \centering
        \includegraphics[width=\linewidth]{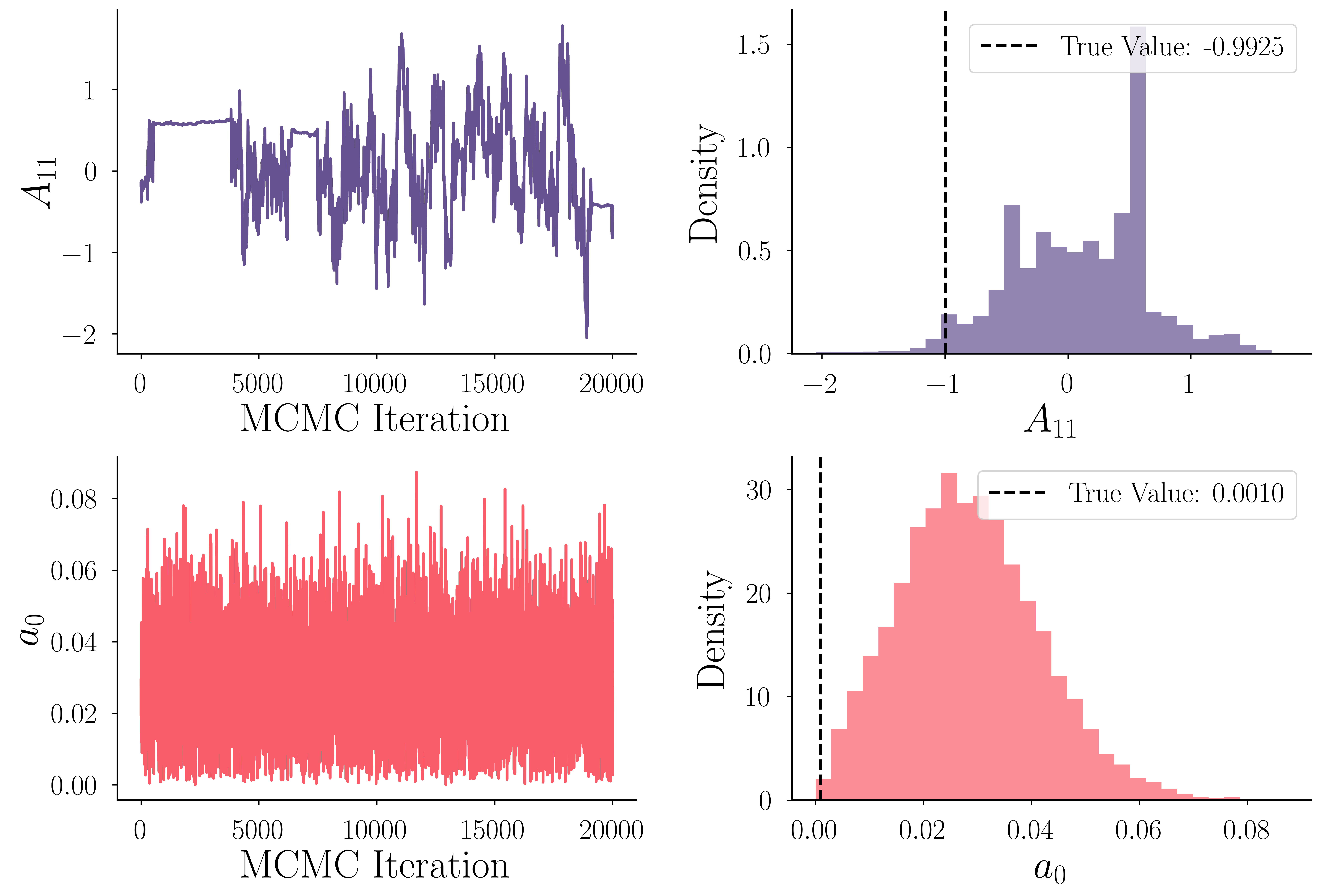}
        \vspace{0.5ex}\centering\textbf{(d)} MCMC diagnostics ($d_x=8$, $T=500$)\par
    \end{minipage}

    \caption{Scalability analysis comparing standard vs.\ canonical forms for increasing system dimension \(d_x\) ($T\in \{200, 500\}$ fixed) using balanced systems. Panels (a-c) show computation times (normalized where appropriate) for posterior evaluation, gradient evaluation, and total inference. Panel (d) shows MCMC trace plots and histograms for selected parameters (\(A_{11}\) standard vs.\ \(\alpha_0\) canonical) for $d_x=8$, $T=500$, illustrating better mixing and concentration for the canonical form.}
    \label{fig:high-dim-grid}
\end{figure}

\section{Likelihood decomposition}
\label{appendix_likelihood_decomposition}
Recall the likelihood in \Cref{eq:joint_likelihood} as 
\begin{equation}
\label{eq:markov_likelihood}
p(y_{[T]} \mid \mathcal{L}_{\Theta}, u_{[T]}) = p(y_0 \mid \mathcal{L}_{\Theta}, u_{0}) \prod_{t=1}^T p(y_t \mid \mathcal{L}_{\Theta}, y_{[t-1]}, u_{[t]})
\end{equation}


Under the assumption that both the process and observation noises are normal, we can obtain the likelihood for the state-update and the observation step, respectively:
\begin{align}
\label{eq:state_emission_update}
    p(X_{t+1}|X_t=x_t, u_t, \mathcal{L}_{\Theta}) &= \frac{1}{\sqrt{(2\pi)^{d_x}|\Sigma|}}\operatorname{exp}\left(-\frac{1}{2}\|X_{t+1}-Ax_t-Bu_t\|^2_\Sigma\right) \\
    p(Y_{t+1}|X_{t+1}=x_{t+1}, u_{t+1}, \mathcal{L}_{\Theta}) &= \frac{1}{\sqrt{(2\pi)^{d_y}|\Gamma|}}\operatorname{exp}\left(-\frac{1}{2}\|Y_{t+1}-CX_{t+1}-Du_{t+1}\|^2_{\Gamma}\right)
\end{align}
Following \cite{sarkka2013bayesian}, we describe the likelihood recursion in the following theorem:
    \begin{theorem}[Marginal likelihood recursion]
\label{thm:marg_lik}
Let \(y_{[T]}\) and \(u_{[T]}\) denote the sequences of observations and inputs. Assume that the state-space model is given by \eqref{eq:general_lti_state_space_model_noise} with an initial prior \(p(X_0\mid \mathcal{L}_{\Theta})\) on \(X_0\), where the parameter set is \(\Theta\). Then, under the assumption that both the process and observation noises are Gaussian, the marginal likelihood 
\[
p(y_{[T]} \mid \mathcal{L}_{\Theta}, u_{[T]})
\]
can be computed recursively via the following three steps for \(t=1,\dots,T-1\):

\begin{enumerate}
    \item Prediction: Compute the predictive density
    \begin{equation}
    p(X_{t+1} \mid \mathcal{L}_{\Theta}, y_{[t]}, u_{[t]})
    \;=\;
    \int \mathcal{N}\Bigl(X_{t+1};\, A\,X_t + B\,u_t,\,\Sigma\Bigr) \, p(X_t \mid \mathcal{L}_{\Theta}, y_{[t]}, u_{[t]}) \, dX_t.
    \end{equation}
    
    \item Update: Evaluate the one-step-ahead observation likelihood via
    \begin{equation}
    p(y_{t+1} \mid \mathcal{L}_{\Theta}, y_{[t]}, u_{[t+1]})
    \;=\;
    \int \mathcal{N}\Bigl(y_{t+1};\, C\,X_{t+1} + D\,u_{t+1},\,\Gamma\Bigr) \, p(X_{t+1} \mid \mathcal{L}_{\Theta}, y_{[t]}, u_{[t]}) \, dX_{t+1}.
    \end{equation}
    
    \item Marginalization: Update the filtering distribution for the state by
    \begin{equation}
    p(X_{t+1} \mid \mathcal{L}_{\Theta}, y_{[t+1]}, u_{[t+1]})
    \;=\;
    \frac{
      \mathcal{N}\Bigl(y_{t+1};\, C\,X_{t+1} + D\,u_{t+1},\,\Gamma\Bigr)
      \, p(X_{t+1} \mid \mathcal{L}_{\Theta}, y_{[t]}, u_{[t]})
    }{
      p(y_{t+1} \mid \mathcal{L}_{\Theta}, y_{[t]}, u_{[t+1]})
    }.
    \end{equation}
\end{enumerate}

By iterating these steps from \(t=1\) (with \(p(X_0 \mid \mathcal{L}_{\Theta})\)) to \(T-1\), one obtains the marginal likelihood in \Cref{eq:markov_likelihood},
as well as the sequence of filtering distributions \(p(X_t \mid \mathcal{L}_{\Theta}, y_{[t]}, u_{[t]})\) for each time step.
\end{theorem}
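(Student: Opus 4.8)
The plan is to prove the recursion by induction on $t$, carrying as the inductive hypothesis that at each step we have the filtering density $p(X_t \mid \mathcal{L}_{\Theta}, y_{[t]}, u_{[t]})$ in hand, and that the product of the one-step predictive factors accumulated so far equals the marginal likelihood of the data seen to date. The starting point is the elementary chain-rule factorization
\begin{equation}
p(y_{[T]} \mid \mathcal{L}_{\Theta}, u_{[T]}) = p(y_1 \mid \mathcal{L}_{\Theta}) \prod_{t=2}^{T} p(y_t \mid \mathcal{L}_{\Theta}, y_{[t-1]}, u_{[t-1]}),
\end{equation}
which holds by the definition of conditional density and requires no model structure. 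It therefore suffices to show that each factor $p(y_{t+1}\mid \mathcal{L}_{\Theta}, y_{[t]}, u_{[t]})$ is produced by the Update step, and that the Prediction and Marginalization steps correctly propagate the filtering density so the induction can continue.

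First I would establish the two conditional-independence relations implied by the state-space structure of \eqref{eq:general_lti_state_space_model_noise} together with the independence assumptions of \Cref{ass:gaussian_noise}: (i) given $X_t$ and $u_t$, the next state $X_{t+1}$ is independent of past observations and inputs, so that $p(X_{t+1}\mid \mathcal{L}_{\Theta}, X_t, y_{[t]}, u_{[t]}) = \mathcal{N}(X_{t+1}; A X_t + B u_t, \Sigma)$; and (ii) given $X_{t+1}$ and $u_{t+1}$, the observation $y_{t+1}$ is independent of everything in the past, so that $p(y_{t+1}\mid \mathcal{L}_{\Theta}, X_{t+1}, y_{[t]}, u_{[t+1]}) = \mathcal{N}(y_{t+1}; C X_{t+1} + D u_{t+1}, \Gamma)$. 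These follow because $w_t$ is independent of $(X_s)_{s\le t}$ and $z_{t+1}$ is independent of the entire state trajectory and of the process noise.

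Given these relations, the Prediction step is the Chapman--Kolmogorov identity obtained by marginalizing the joint $p(X_{t+1}, X_t \mid \mathcal{L}_{\Theta}, y_{[t]}, u_{[t]})$ over $X_t$ and applying (i); the Update step is the law of total probability applied to $p(y_{t+1}\mid \mathcal{L}_{\Theta}, y_{[t]}, u_{[t]})$, inserting $X_{t+1}$ and invoking (ii); and the Marginalization step is Bayes' theorem for the posterior of $X_{t+1}$ given the new observation, with the Update-step quantity serving exactly as the normalizing constant. Each of these is a generic measure-theoretic manipulation valid for any Markovian state-space model. The role of the Gaussian assumption is then purely to guarantee closure: using the standard identities for marginals and conditionals of jointly Gaussian vectors, each integral above evaluates in closed form and every predictive and filtering density remains Gaussian, so the recursion never leaves the Gaussian family.

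The main obstacle is not any single computation but the careful verification of the conditional-independence statements (i) and (ii) --- specifically, confirming that the conditioning sets reduce as claimed --- since this is precisely where the model assumptions enter and where a spurious dependence would silently break the factorization. A secondary technical point is the base case: one must check that $p(y_1 \mid \mathcal{L}_{\Theta})$ arises from the same Prediction--Update mechanism initialized with the prior $p(X_0 \mid \mathcal{L}_{\Theta})$ (with $x_0 \sim \mathcal{N}(0, P_0)$ from \Cref{ass:gaussian_noise}), so that the recursion and the chain-rule factorization align index-for-index.
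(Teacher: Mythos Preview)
Your proposal is correct and follows the standard textbook derivation. The paper itself does not give a proof of this theorem; it simply states the result after noting ``We follow the ideas by \cite{sarkka2013bayesian} and describe the procedure of likelihood recursion in the following theorem,'' so your chain-rule/Chapman--Kolmogorov/Bayes argument with Gaussian closure is exactly the approach the cited reference uses and to which the paper defers.
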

\section{Fisher information and BvM}
\label{appendix_fisher_information_and_BvM}
\subsection{Proof of \Cref{thm:BvM_LTI_applicability} (also \Cref{thm:BvM_LTI_full})}
\label{appendix_thm:BvM_LTI_full}
We follow \cite{vandervaart1998asymptotic} and restate the BvM in full generality here:
\begin{theorem}[Bernstein--von Mises theorem for LTI systems]
\label{thm:BvM_LTI_full}
Consider a discrete-time LTI system with the state-space representation \eqref{eq:general_lti_state_space_model_noise}. Let $y_{[T]}$ and $u_{[T]}$ denote the observed outputs and inputs up to time $T$. Let $\Theta \in \mathbf{\Theta}$ be the parameter vector, where $\mathbf{\Theta}$ is an open subset of $\mathbb{R}^d$. Suppose that:

\begin{enumerate}
    \item Regularity and identifiability: The true parameter $\Theta_0$ is an interior point of $\mathbf{\Theta}$, and the model is structurally identifiable in the canonical form. Moreover, the likelihood $p(y_{[T]} \mid \mathcal{L}_{\Theta}, u_{[T]})$ is smooth in $\Theta$, and the system is both controllable and observable at $\Theta_0$.
    
    \item Local asymptotic normality (LAN): The sequence of dynamical system experiments satisfies the LAN property at $\Theta_0$; that is, for a suitable sequence of estimators $\hat{\Theta}_T$ (e.g., the maximum likelihood estimator) with $\sqrt{T}(\hat{\Theta}_T-\Theta_0)$ converging in distribution, the log-likelihood admits the expansion
    \begin{equation}
    \log \frac{p(y_{[T]} \mid u_{[T]}, \Theta_0 + h/\sqrt{T})}{p(y_{[T]} \mid \mathcal{L}_{\Theta_0}, u_{[T]})}
    = h^\top \Delta_T - \frac{1}{2} h^\top I_T(\Theta_0, u_{[T]}) h + o_{P_{\Theta_0}}(1),
    \end{equation}
    where $\Delta_T$ converges in distribution to $\mathcal{N}(0, \mathcal{I}(\Theta_0))$, and $I_T(\Theta_0, u_{[T]})$ is the Fisher information matrix for the LTI system.
    \item Persistence of excitation: The input sequence persistently excites, as defined in \Cref{def:persistence_excitation_1}
    \item Prior regularity: The prior density $\pi(\Theta)$ is positive and continuous in a neighborhood of $\theta_0$.
\end{enumerate}

Then, if $\Pi(\Theta \mid y_{[T]}, u_{[T]})$ denotes the posterior distribution of $\Theta$ given the observations and inputs, we have that
\begin{equation}
\sup_{A \subset \mathbb{R}^d} \left| \Pi(\Theta \in A \mid y_{[T]}, u_{[T]}) - \Phi(A; \hat{\Theta}_T, I_T(\Theta_0, u_{[T]})^{-1}) \right| \xrightarrow{P_{\Theta_0}} 0,
\end{equation}
as $T \to \infty$. Equivalently, the posterior distribution of the rescaled parameter $\sqrt{T}(\Theta - \hat{\Theta}_T)$ converges in total variation to the multivariate normal distribution $\mathcal{N}\left(0, \mathcal{I}(\Theta_0)^{-1}\right)$, where $\mathcal{I}(\Theta_0) = \lim_{T \to \infty} \frac{1}{T}I_T(\Theta_0, u_{[T]})$.
\end{theorem}
\begin{proof}[Proof of \Cref{thm:BvM_LTI_full}\slash \Cref{thm:BvM_LTI_applicability}]
\label{proof_bvm_applicability}
We begin by proving the applicability of BvM for the canonical form. Let $\Theta_c \in \mathbf{\Theta_c} \subset \mathbb{R}^{n_c}$ denote the parameter vector in canonical form, where $n_c$ is the minimal number of parameters required to characterize an $d_x$-dimensional LTI system with $d_y$ inputs and $d_u$ outputs. The minimal canonical form uses exactly $n_c$ parameters, which is the minimal number required to uniquely characterize the input-output behavior of an $d_x$-dimensional LTI system. Each parameter in $\Theta$ appears exactly once in a specific position in the system matrices. For any two distinct minimal canonical parameter vectors $\Theta_c' \neq \Theta_c$, the corresponding transfer functions differ: $G(z; \Theta_c') \neq G(z; \Theta_c)$ see \cite{chen1999}). The log-likelihood function for this Gaussian system has the quadratic form. The Fisher information matrix (FIM) is therefore:
\begin{equation}
I_T(\Theta_c, u_{[T]}) = \sum_{t=1}^{T}\mathbb{E}_{\Theta_c}\left[\left(\frac{\partial \hat{y}_t(\Theta_c, u_{[t]})}{\partial\Theta_c}\right)^T \Gamma^{-1}\left(\frac{\partial \hat{y}_t(\Theta_c, u_{[t]})}{\partial\Theta_c}\right)\right]
\end{equation}
Following the argument above we conclude that the FIM is non-singular as $T \rightarrow \infty$ for canonical forms. Persistent excitation implies that $\frac{1}{N}\sum_{t=k+1}^{k+N}u_t u_t^T$ converges to a positive definite matrix $\Sigma_u$ as $N \rightarrow \infty$, ensuring that all state variables are influenced by the inputs. These two properties together ensure that the limit of the normalized FIM:
\begin{equation}
\mathcal{I}(\Theta_c) = \lim_{T \rightarrow \infty}\frac{1}{T}I_T(\Theta_c, u_{[T]})
\end{equation}
is positive definite for canonical parameterizations, which contain exactly the minimal number of parameters needed to characterize the input-output behavior.

Furthermore, the log-likelihood ratio for canonical parameters satisfies the Local Asymptotic Normality condition, given that the innovation sequence $\{y_t - \hat{y}_t(\Theta_c, u_{[t]})\}$ consists of independent Gaussian random variables with zero mean and covariance $\Gamma$, the Central Limit Theorem ensures that $\Delta_T \xrightarrow{d} \mathcal{N}(0, \mathcal{I}(\Theta_c))$, where $$\Delta_T=\frac{1}{\sqrt{T}} \sum_{t=1}^T\left(\frac{\partial \hat{y}_c\left(\Theta_c, u_{[t]}\right)}{\partial \Theta_c}\right)^T \Gamma^{-1}\left(y_t-\hat{y}_t\left(\Theta_c, u_{[t]}\right)\right)$$. Thus, all conditions of the Bernstein-von Mises theorem are satisfied for canonical parameterizations, and the posterior distribution converges in total variation to a normal distribution.
\end{proof}
\section{Proofs for Sections 1--4}
\label{appendix_proofs}
\subsection{Proof of \Cref{thm:isomorphism_general}}
\label{appendix_proof_thm_isomorphism_general}
We prove the two implications of the theorem separately.

($\Leftarrow$) First, assume the two minimal systems, $\mathcal{L}_{\Theta_s}$ and $\mathcal{L}_{\Theta_{s'}}$, are related by an invertible matrix $T_c \in \operatorname{GL}(d_x)$ according to \eqref{eq:similarity_transformation} and \eqref{covariances}. We aim to show that this structural equivalence forces the systems to be statistically isomorphic by demonstrating that they produce identical output distributions for any given input sequence. Let $x_t$ be a state trajectory of $\mathcal{L}_{\Theta_s}$ and define a new state variable $x'_{t} = T_c^{-1}x_t$. The initial state $x'_0$ is a zero-mean Gaussian variable with covariance
$$
\operatorname{Cov}(x'_0) = \operatorname{Cov}(T_c^{-1}x_0) = T_c^{-1} \operatorname{Cov}(x_0) (T_c^{-1})^\top = T_c^{-1} P_{0s} T_c^{-\top}.
$$
By \eqref{covariances}, this is equal to $P_{0s'}$, so the initial state of the transformed system is identically distributed to that of $\mathcal{L}_{\Theta_{s'}}$.

Next, we examine the state dynamics. Starting from $x_{t+1} = A_s x_t + B_s u_t + w_t$ and substituting $x_t = T_c x'_t$, we find
$$
T_c x'_{t+1} = A_s (T_c x'_t) + B_s u_t + w_t \quad \implies \quad x'_{t+1} = (T_c^{-1}A_s T_c) x'_t + (T_c^{-1}B_s) u_t + T_c^{-1}w_t.
$$
Using the relations from \eqref{eq:similarity_transformation}, this becomes $x'_{t+1} = A_{s'} x'_t + B_{s'} u_t + w'_t$, where the new process noise $w'_t = T_c^{-1}w_t$ has covariance $\operatorname{Cov}(w'_t) = T_c^{-1} \Sigma_s T_c^{-\top} = \Sigma_{s'}$. The dynamics of $x'_t$ are therefore statistically identical to those of the state in $\mathcal{L}_{\Theta_{s'}}$.

Finally, for the output equation $y_t = C_s x_t + D_s u_t + z_t$, substitution of $x_t = T_c x'_t$ yields
$$
y_t = (C_s T_c) x'_t + D_s u_t + z_t = C_{s'} x'_t + D_{s'} u_t + z_t.
$$
Since $\Gamma_{s'} = \Gamma_s$, the measurement noise is also identical. Because the transformed system driven by $x'_t$ and the system $\mathcal{L}_{\Theta_{s'}}$ have identically distributed initial states, identical stochastic dynamics, and identical output maps, they must induce the same output distribution $p(y_{[T]} \mid u_{[T]})$ for any input $u_{[T]}$. Thus, they are statistically isomorphic.

($\Rightarrow$) Conversely, assume $\mathcal{L}_{\Theta_s}$ and $\mathcal{L}_{\Theta_{s'}}$ are minimal and statistically isomorphic. We aim to show that this behavioral equivalence implies the existence of a unique similarity transformation $T_c$ that structurally links their parameter sets. The equality of output distributions for any input $u_{[T]}$ implies that all moments of the output must match. For a Gaussian system, this means the mean and auto-covariance of the output are identical for both systems. The expected value of the output is determined by the system's Markov parameters, $H_k = C A^k B$. The equality $\mathbb{E}_{\Theta_s}[y_t \mid u_{[T]}] = \mathbb{E}_{\Theta_{s'}}[y_t \mid u_{[T]}]$ (where $\mathbb{E}_\Theta[\cdot|\cdot]$ denotes an expectation under the parametrization given by $\Theta$) for all $u_{[T]}$ implies
$$
D_s = D_{s'}, \quad \text{and} \quad C_s A_s^k B_s = C_{s'} A_{s'}^k B_{s'} \quad \forall k \ge 0.
$$
The Realization Theorem states that for minimal (controllable and observable) systems, identical Markov parameters imply that the systems are related by a unique similarity transformation $T_c \in \operatorname{GL}(d_x)$\cite{chen1999}. This establishes the relations: $A_{s'} = T_c^{-1}A_s T_c$, $B_{s'} = T_c^{-1}B_s$, and $C_{s'} = C_s T_c$.

Now we show the covariance matrices transform accordingly. Statistical isomorphism requires the output auto-covariance functions to be identical, which for zero input implies $\operatorname{Cov}(y_t)_{\Theta_s} = \operatorname{Cov}(y_t)_{\Theta_{s'}}$ for all $t$. The output covariance is $\operatorname{Cov}(y_t) = C P_t C^\top + \Gamma$, where $P_t = \mathbb{E}[x_t x_t^\top]$ is the state covariance, evolving via the Lyapunov equation $P_{t+1} = A P_t A^\top + \Sigma$. Equality of output distributions implies the instantaneous innovation covariance is identical, which means the feedthrough measurement noise covariance must be the same, so $\Gamma_{s'} = \Gamma_s$. Equating the remaining parts of the output covariance gives $C_s P_{ts} C_s^\top = C_{s'} P_{ts'} C_{s'}^\top$. Substituting $C_{s'} = C_s T_c$ gives
$$
C_s P_{ts} C_s^\top = (C_s T_c) P_{ts'} (C_s T_c)^\top = C_s (T_c P_{ts'} T_c^\top) C_s^\top.
$$
For an observable system, this implies $P_{ts} = T_c P_{ts'} T_c^\top$, or $P_{ts'} = T_c^{-1} P_{ts} T_c^{-\top}$. At $t=0$, this gives the transformation for the initial covariance: $P_{0s'} = T_c^{-1} P_{0s} T_c^{-\top}$.

Finally, substituting $P_{ts'} = T_c^{-1} P_{ts} T_c^{-\top}$ into the Lyapunov equation for system $\mathcal{L}_{\Theta_{s'}}$ yields
$$
T_c^{-1}P_{t+1,s}T_c^{-\top} = A_{s'} (T_c^{-1} P_{ts} T_c^{-\top}) A_{s'}^\top + \Sigma_{s'}.
$$
Using $A_{s'} = T_c^{-1} A_s T_c$ and simplifying leads to
$$
T_c^{-1}P_{t+1,s}T_c^{-\top} = (T_c^{-1}A_sT_c) (T_c^{-1} P_{ts} T_c^{-\top}) (T_c^\top A_s^\top T_c^{-\top}) + \Sigma_{s'} = T_c^{-1}(A_s P_{ts} A_s^\top)T_c^{-\top} + \Sigma_{s'}.
$$
Multiplying by $T_c$ and $T_c^\top$ gives $P_{t+1,s} = A_s P_{ts} A_s^\top + T_c \Sigma_{s'} T_c^\top$. Comparing this to the original Lyapunov equation for $\mathcal{L}_{\Theta_s}$, we must have $\Sigma_s = T_c \Sigma_{s'} T_c^\top$, which rearranges to $\Sigma_{s'} = T_c^{-1} \Sigma_s T_c^{-\top}$. This completes the proof. \hfill $\blacksquare$

\subsection{Proof of \Cref{thm:companion_form_invertible_matrix}}
\label{appendix_thm:companion_form_invertible_matrix}
 Let
 \begin{equation} 
 P_A(\lambda)
 \;=\;
 \lambda^{d_x} \;+\; a_{d_x-1} \lambda^{d_x-1} \;+\;\cdots\;+\;a_1\lambda \;+\; a_0 
 \end{equation} 
 be the characteristic polynomial of \(A\). Define the auxiliary polynomials 
 \begin{equation} 
 P_i(\lambda)
 \;=\;
 \sum_{k=0}^{d_x-i} a_{k+i}\,\lambda^k, 
 \quad 
 i = 0, 1, \ldots, d_x, 
 \end{equation} 
 with \(P_0 = P_A\) and \(P_{d_x} = 1\) (setting \(a_{d_x} = 1\)). These polynomials satisfy 
 \begin{equation} 
 \lambda\,P_i(\lambda) \;=\;
 P_{i-1}(\lambda) \;-\; a_{i-1}\,P_{d_x}(\lambda). 
 \end{equation} 
 Next, define the vectors \(f_i = P_i(A)\,b\) for \(i = 0, 1, \ldots, d_x\). By the Cayley--Hamilton theorem, we have \(f_0 = P_A(A)\,b = 0\). 

 Because \((A, b)\) is controllable, the vectors
 \(\{ A^{\,d_x-1}\,b,\,A^{\,d_x-2}\,b,\,\ldots,\,b\}\) form a basis of \(\mathbb{R}^{d_x}\).
 Moreover, \(\{\,f_1,\,\ldots,\,f_{d_x}\}\) are related to this basis via the invertible transformation 
 \begin{equation} 
 \begin{pmatrix} f_1 & \cdots & f_{d_x} \end{pmatrix}
 \;=\; 
 \begin{pmatrix} A^{d_x-1}b & \cdots & b \end{pmatrix} 
 \begin{pmatrix} 
 1 & 0 & 0 & \cdots & 0 \\ 
 a_{d_x-1} & 1 & 0 & \cdots & 0 \\ 
 a_{d_x-2} & a_{d_x-1} & 1 & \cdots & 0 \\ 
 \vdots & \vdots & \vdots & \ddots & \vdots \\ 
 a_1 & a_2 & a_3 & \cdots & 1 
 \end{pmatrix}. 
 \end{equation} 
 Hence, \(\{\,f_1,\,\ldots,\,f_{d_x}\}\) is itself a basis.
 Defining \(T_c^{-1} = [\,f_1 \;\cdots\; f_{d_x}\,]\), the polynomial relation implies 
 \begin{equation} 
 A f_i
 = f_{i-1} -a_{i-1}f_{d_x}, 
 \end{equation} 
 for \(i = 1, \ldots, d_x\). From this recursion, one directly obtains the canonical (companion) form of \(T_c\,A\,T_c^{-1}\). Finally, since \(f_{d_x} = b\), we have \(T_c b = (0, \ldots, 0,\,1)^\top\).  \hfill $\blacksquare$
\subsection{Proof of \Cref{lem:induced_prior}}
\label{appendix_lem:induced_prior}
Let the prior density $p(\Theta_s)$ on the standard space $\mathbf{\Theta}_s$ define a measure $\mu_s$ such that for any measurable set $A \subseteq \mathbf{\Theta}_s$, its measure is $\mu_s(A) = \int_A p(\Theta_s) \, d\Theta_s$. Using the canonical projection map $\tau: \mathbf{\Theta}_s \to \mathbf{\Theta}_c$, we define a pushforward measure, $\mu_c$, on the canonical space $\mathbf{\Theta}_c$. By definition, the measure of any set $E \subseteq \mathbf{\Theta}_c$ under $\mu_c$ is the measure of its pre-image, $\mu_c(E) := \mu_s(\tau^{-1}(E))$.

By the Radon--Nikodym theorem, since the measure $\mu_c$ is absolutely continuous with respect to the Lebesgue measure on $\mathbf{\Theta}_c$, there exists a unique (up to a set of measure zero) density function $p(\Theta_c)$ such that the measure $\mu_c(E)$ can be computed as an integral of this density: $\mu_c(E) = \int_E p(\Theta_c) \, d\Theta_c$. This function $p(\Theta_c)$ is the induced prior density.

By equating the definition of the pushforward measure with its representation via the Radon--Nikodym derivative we uniquely set up the relation:
\begin{equation}
\int_E p(\Theta_c) \, d\Theta_c = \int_{\tau^{-1}(E)} p(\Theta_s) \, d\Theta_s
\end{equation}
for any measurable set $E \subseteq \mathbf{\Theta}_c$, where $\tau^{-1}(E) = \{\Theta_s \in \mathbf{\Theta}_s \mid \tau(\Theta_s) \in E\}$ is the pre-image of $E$. \hfill $\blacksquare$

\subsection{Proof of \Cref{equivalence_of_pushforwards}}
First we establish some preliminary results, which will be useful to proof \Cref{equivalence_of_pushforwards}.
\label{appendix_equivalence_of_pushforwards}
\begin{proposition}[Invariance under similarity transformations]
\label{invariance_and_computational_advantage_canonical_forms}
Let $\Theta = (A, B, C, D)$ and $\Theta' = (A', B', C', D')$ be parameterizations of two minimal LTI systems related by a similarity transformation $T$ such that $A' = TAT^{-1}$, $B' = TB$, $C' = CT^{-1}$, and $D' = D$. Then the following key system properties are identical for both parameterizations:
\begin{enumerate}
    \item The sequence of \textbf{Markov parameters}, $M_t = CA^{t-1}B = C'(A')^{t-1}B'$ for $t \ge 1$, and the feedthrough term, $D = D'$.
    \item The \textbf{eigenvalue spectrum} of the dynamics matrix, $\Lambda(A) = \Lambda(A')$.
\end{enumerate}
These properties imply that the Hankel matrices $H$ and transfer functions $G$ of the two systems are also identical.
\end{proposition}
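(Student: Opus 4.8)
The plan is to verify each invariance by direct substitution of the similarity relations $A' = TAT^{-1}$, $B' = TB$, $C' = CT^{-1}$, $D' = D$, exploiting the telescoping cancellation of conjugated matrix powers. The single algebraic fact that drives everything is that conjugation commutes with taking powers: for any integer $k \ge 0$,
\begin{equation}
(A')^k = (TAT^{-1})^k = T A^k T^{-1},
\end{equation}
since the interior factors $T^{-1}T$ cancel. I would state and prove this first (by a trivial induction on $k$), as both parts of the proposition reduce to it.

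For the Markov parameters (part 1), I would substitute the relations directly:
\begin{equation}
C'(A')^{t-1}B' = \bigl(CT^{-1}\bigr)\bigl(TA^{t-1}T^{-1}\bigr)\bigl(TB\bigr) = C\,(T^{-1}T)\,A^{t-1}\,(T^{-1}T)\,B = C A^{t-1} B = M_t,
\end{equation}
for all $t \ge 1$, and note $D' = D$ holds by hypothesis. For the eigenvalue spectrum (part 2), I would invoke invariance of the characteristic polynomial under conjugation: using multiplicativity of the determinant,
\begin{equation}
\det(A' - \lambda I) = \det\!\bigl(T(A - \lambda I)T^{-1}\bigr) = \det(T)\det(A-\lambda I)\det(T)^{-1} = \det(A - \lambda I),
\end{equation}
so $A$ and $A'$ share the same characteristic polynomial and hence the same multiset of roots, i.e.\ $\Lambda(A) = \Lambda(A')$.

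The two concluding implications then follow immediately. The Hankel matrix map $S^H$ (\Cref{def:hankel_pushforward}) is by definition a block arrangement built \emph{solely} from the Markov parameters $M_t$ and $D$; since part 1 shows these agree entry-for-entry, the Hankel matrices coincide with no further work. For the transfer function (\Cref{def:transfer_function_pushforward}), I would give the parallel direct computation, using $(zI - A') = T(zI - A)T^{-1}$ and hence $(zI - A')^{-1} = T(zI-A)^{-1}T^{-1}$ wherever the inverse exists, to obtain
\begin{equation}
G'(z) = D' + C'(zI - A')^{-1}B' = D + CT^{-1}\,T(zI-A)^{-1}T^{-1}\,TB = D + C(zI-A)^{-1}B = G(z).
\end{equation}
Alternatively one may note that $G$ admits the formal expansion $G(z) = D + \sum_{t \ge 1} M_t z^{-t}$, so equality of $D$ and all $M_t$ forces $G' = G$ as rational matrix functions.

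There is no substantive obstacle here; the result is essentially a one-line cancellation once the conjugation-commutes-with-powers identity is in hand. The only points requiring a little care are purely bookkeeping: making the telescoping cancellation explicit rather than asserting it, and, for the transfer function, observing that the identity $(zI-A')^{-1} = T(zI-A)^{-1}T^{-1}$ holds at every $z$ for which either inverse is defined (the excluded points being exactly the shared eigenvalues, by part 2), so that the equality $G' = G$ holds as an identity of rational functions on their common domain.
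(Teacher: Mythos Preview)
Your proof is correct and follows essentially the same approach as the paper: direct telescoping cancellation for the Markov parameters, multiplicativity of the determinant for the eigenvalue spectrum, and deriving the Hankel matrix and transfer function invariance from the Markov parameters. Your treatment is in fact slightly more careful than the paper's (you make the conjugation-commutes-with-powers identity explicit and give both the direct resolvent computation and the series argument for $G$), but the underlying argument is identical.
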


\begin{proof}[Proof of \Cref{invariance_and_computational_advantage_canonical_forms}]
\label{proof_invariance_and_computational_advantage_canonical_forms}
We begin by proving part (i). The Markov parameters transform under $T$ as:
\begin{align*}
M'_t &= CT T^{-1}A^{t-1}T T^{-1}B\\ 
&= CA^{t-1}B\\
&= M_t
\end{align*}
Since both the Hankel matrix and transfer function are constructed entirely from Markov parameters:
\begin{align*}
H &= [M_i]_{i,j}, \quad 1 \leq i \leq p, \, 1 \leq j \leq q\\
G(z) &= D + \sum_{t=1}^{\infty} M_t z^{-t}
\end{align*}
their invariance follows directly from the invariance of $M_t$. The eigenvalue invariance follows from:
\begin{align*}
\det(A' - \lambda I) &= \det(T^{-1}AT - \lambda I)\\
&= \det(T^{-1})\det(A - \lambda I)\det(T)\\
&= \det(A - \lambda I)
\end{align*}
\end{proof}

Furthermore, computing eigenvalues from the companion matrix $A_c$ of a canonical form (e.g., \eqref{eq:controller_form}) offers computational advantages compared to general matrices. The computational advantage of using a canonical form for eigenvalue analysis is substantial. For a general matrix $A$ in a standard parameterization, the eigenvalues are found numerically using a direct eigensolver, such as the QR algorithm. The computational cost of this operation is \textbf{$O({d_x}^3)$} floating-point operations.

In contrast, for a matrix in a canonical representation (e.g., the controller or observer form), the coefficients of its characteristic polynomial are directly available from its entries. The task is then reduced to finding the roots of this ${d_x}$-degree polynomial. Numerically solving for the roots using robust iterative methods (like the Jenkins-Traub algorithm) has a typical complexity of \textbf{$O({d_x}^2)$}.

\begin{proof}[Proof of \Cref{equivalence_of_pushforwards}]
\label{proof_of_generalized_equivalence_of_pushforwards}
$\Theta_s^M$ and $\Theta_c^M$ are compact subsets of Euclidean space equipped with the Borel $\sigma$-algebra and Lebesgue measure. We write $(A^1, B^1, C^1, D^1) \sim (A^2, B^2, C^2, D^2)$ if there exists $T \in GL(d_x)$ such that:
\begin{align}
A^2 &= T^{-1}A^1T,\quad
B^2 = T^{-1}B^1,\quad
C^2 = C^1T,\quad
D^2 = D^1 .
\end{align}
This relation partitions $\Theta_s^M$ into equivalence classes $[\Theta_s]$. The measurability of this partition follows from the fact that the map $(A, B, C, D, T) \mapsto (TAT^{-1}, TB, CT^{-1}, D)$ is continuous. For each equivalence class $[\Theta_s]$, we select a unique canonical form representative $\Theta_c \in \Theta_c^M$. The mapping $\mathcal{T}: \Theta_s^M \to \Theta_c^M$ that assigns each standard form to its canonical form is explicitly constructable (e.g. for the controller form \Cref{thm:companion_form_invertible_matrix}). Define the bounded general linear group:
\begin{equation}
GL_K(d_x) = \{T \in GL(d_x) : \|T\|_F \leq K, \|T^{-1}\|_F \leq K\}
\end{equation}

For sufficiently large $K$, and for any $\Theta_s \in \Theta_s^M$, there exists $T \in GL_K(d_x)$ such that $\mathcal{T}(\Theta_s) = (TAT^{-1}, TB, CT^{-1}, D)$. This follows from the normalization properties of canonical forms. The set $GL_K(d_x)$ is compact in the Euclidean topology on $\mathbb{R}^{{d_x}^2}$, as it is the intersection of the closed set $\{T : \|T\|_F \leq K, \|T^{-1}\|_F \leq K\}$ with the open set $GL(d_x)$. It carries a well-defined Borel $\sigma$-algebra and can be equipped with a finite measure $\mu$. For any transformation matrix $T \in GL_K(d_x)$:
\begin{equation}
p(y_{[k]} \mid TAT^{-1}, TB, CT^{-1}, D, u_{[k]}) = p(y_{[k]} \mid A, B, C, D, u_{[k]})
\end{equation} 
as we have established in \Cref{thm:isomorphism_general}. As prior can vary within equivalence classes:
\begin{equation}
p(TAT^{-1}, TB, CT^{-1}, D) \neq p(A, B, C, D) \text{ in general}
\end{equation}
we need to carefully keep track of the differently distributed masses. By Bayes' theorem, the posterior will also generally vary within the respective equivalence classes. For any canonical form $\Theta_c = (A', B', C', D') \in \Theta_c^M$, its preimage under $\mathcal{T}$ is:
\begin{equation}
\mathcal{T}^{-1}(\Theta_c) = \{(TA'T^{-1}, TB', C'T^{-1}, D') : T \in GL_K(d_x)\}
\end{equation}
However, this parameterization is redundant. For minimal systems with distinct eigenvalues, the stabilizer is given by:
\begin{equation}
\text{Stab}(\Theta_c) = \{\lambda I : \lambda \neq 0, \lambda I \in GL_K(d_x)\} = \{\lambda I : 0 < |\lambda| \leq K, |\lambda^{-1}| \leq K\}
\end{equation}
The equivalence class is thus precisely characterized as:
\begin{equation}
\mathcal{T}^{-1}(\Theta_c) = \{(TA'T^{-1}, TB', C'T^{-1}, D') : [T] \in GL_K(d_x)/\text{Stab}(\Theta_c)\}
\end{equation}
where $[T]$ denotes the equivalence class of $T$ in the quotient space. The quotient space $GL_K({d_x})/\text{Stab}(\Theta_c)$ carries a well-defined Borel $\sigma$-algebra and measure. Specifically, let $\mu$ be the Haar measure on $GL_K(d_x)$, which exists and is unique up to a scaling factor because $GL_K(d_x)$ is a compact topological group. 

The quotient measure $\mu_{/}$ on $GL_K(d_x)/\text{Stab}(\Theta_c)$ is defined for any measurable set $E$ in the quotient space as:
\begin{equation}
\mu_{/}(E) = \frac{\mu(\{T \in GL_K(d_x) : [T] \in E\})}{\mu(\text{Stab}(\Theta_c))}
\end{equation}

For any measurable set $E \subset \Theta_c^M$, we define the posterior probability as:
\begin{equation}
P(E \mid u_{[k]}, y_{[k]}) = \int_{\mathcal{T}^{-1}(E)} p(\Theta_s \mid u_{[k]}, y_{[k]}) \, d\Theta_s
\end{equation}

By the disintegration theorem, this integral can be broken down into integrals over individual fibers $\mathcal{T}^{-1}(\{\Theta_c\})$ for each $\Theta_c \in E$:
\begin{equation}
\int_{\mathcal{T}^{-1}(E)} p(\Theta_s \mid u_{[k]}, y_{[k]}) \, d\Theta_s = \int_{E} \left( \int_{\mathcal{T}^{-1}(\{\Theta_c\})} p(\Theta_s \mid u_{[k]}, y_{[k]}) \, d\kappa_{\Theta_c}(\Theta_s) \right) \, d\Theta_c
\end{equation}
where $\kappa_{\Theta_c}$ is the conditional measure on the fiber $\mathcal{T}^{-1}(\{\Theta_c\})$.

The explicit form of the conditional measure $\kappa_{\Theta_c}$ can be expressed using the quotient space structure:
\begin{equation}
\int_{\mathcal{T}^{-1}(\{\Theta_c\})} p(\Theta_s) \, d\kappa_{\Theta_c}(\Theta_s) = \int_{GL_K(d_x)/\text{Stab}(\Theta_c)} p((TA'T^{-1}, TB', C'T^{-1}, D')) \cdot J(T) \, d\mu_{/}([T]),
\end{equation}
where $J(T) = |\det(T)|^{{d_x}-1}$ is the Jacobian determinant of the parameterization map. Using Bayes' theorem and the invariance of the likelihood:
\begin{align}
p(\Theta_s \mid u_{[k]}, y_{[k]}) &\propto p(y_{[k]} \mid \Theta_s, u_{[k]}) \cdot p(\Theta_s)= p(y_{[k]} \mid \mathcal{T}(\Theta_s), u_{[k]}) \cdot p(\Theta_s)
\end{align}

For any $\Theta_c \in \Theta_c^M$, applying the conditional measure formula:
\begin{align}
\int_{\mathcal{T}^{-1}(\{\Theta_c\})} p(\Theta_s \mid u_{[k]}, y_{[k]}) \, d\kappa_{\Theta_c}(\Theta_s) &\propto p(y_{[k]} \mid \Theta_c, u_{[k]}) \cdot \int_{\mathcal{T}^{-1}(\{\Theta_c\})} p(\Theta_s) \, d\kappa_{\Theta_c}(\Theta_s)
\end{align}
We define the induced prior on the canonical form as:
\begin{equation}
p(\Theta_c)  = \int_{GL_K(d_x)/\text{Stab}(\Theta_c)} p((TA'T^{-1}, TB', C'T^{-1}, D')) \cdot |\det(T)|^{{d_x}-1} \, d\mu_{/}([T])
\end{equation}

This defines a proper probability measure on $\Theta_c^M$ that accounts for the variation of the prior density across the equivalence class corresponding to $\Theta_c$. For any measurable set $\mathcal{B}_Q$ in the space of system-level quantities, the set $\{\Theta_s^M : S^{Q}(\Theta_s) \in \mathcal{B}_Q\}$ consists of entire equivalence classes. This follows from the invariance of $S^Q$ under similarity transformations (see \Cref{invariance_and_computational_advantage_canonical_forms}). Therefore:
\begin{equation}
\begin{aligned}
\int_{\{\Theta_s^M : S^{Q}(\Theta_s) \in \mathcal{B}_Q\}} p(\Theta_s \mid u_{[k]}, y_{[k]}) \, d\Theta_s &= \int_{\mathcal{T}^{-1}(\{\Theta_c^M : S^Q(\Theta_c) \in \mathcal{B}_Q\})} p(\Theta_s \mid u_{[k]}, y_{[k]}) \, d\Theta_s \\
&= \int_{\{\Theta_c^M : S^Q(\Theta_c) \in \mathcal{B}_Q\}} \left( \int_{\mathcal{T}^{-1}(\{\Theta_c\})} p(\Theta_s \mid u_{[k]}, y_{[k]}) \, d\kappa_{\Theta_c}(\Theta_s) \right) \, d\Theta_c \\
&= \int_{\{\Theta_c^M : S^Q(\Theta_c) \in \mathcal{B}_Q\}} p(\Theta_c \mid u_{[k]}, y_{[k]}) \, d\Theta_c
\end{aligned}
\end{equation}
This establishes the equivalence of pushforward posteriors for standard and canonical forms with an arbitrary prior distribution, not necessarily constant over equivalence classes.
\end{proof}
\subsection{Remarks and challenges for priors}


\begin{remark}[Challenges in prior specification with standard parametrization]
\label{remark_challenges_setting_priors_over_standard_form}
Consider a discrete-time LTI system \eqref{eq:general_lti_state_space_model_noise} of dimension $n$. Consider the case, if we relax, that the priors are the same for the equivalence class, let us see that it still leads to several challenges. We examine how Gaussian priors on state-space matrices transform to canonical parameterizations and the resulting implications for Bayesian inference. A common approach places independent Gaussian priors $A_{ij} \sim \mathcal{N}(\mu_A, \sigma^2_A)$, $B_i \sim \mathcal{N}(\mu_B, \sigma^2_B)$, $C_j \sim \mathcal{N}(\mu_C, \sigma^2_C)$ on each matrix entry. While this specification appears intuitive, it introduces significant complications when considering system properties that are invariant to the choice of state-space realization. 

For controllable pairs $(A,B)$, there exists an invertible similarity transformation that yields the controller canonical form. The transformation matrix $T^{-1}$ is constructed from coefficients of the characteristic polynomial of $A$ (see \Cref{thm:companion_form_invertible_matrix}).
The characteristic polynomial coefficients can be expressed using exterior algebra as:
\begin{align}
p_A(t)=\sum_{k=0}^n t^{n-k}(-1)^k \operatorname{tr}\left(\bigwedge^k A\right)
\end{align}
where $\operatorname{tr}\left(\bigwedge^k A\right)$ is the trace of the $k$-th exterior power of $A$, computable via the determinant formula:
\begin{align}
\operatorname{tr}\left(\bigwedge^k A\right)=\frac{1}{k!}\begin{vmatrix}
\operatorname{tr} A & k-1 & \cdots & 0 \\
\vdots & \vdots & \ddots & \vdots \\
\operatorname{tr} A^k & \operatorname{tr} A^{k-1} & \cdots & \operatorname{tr} A
\end{vmatrix}
\end{align}
This transformation is fundamentally nonlinear. For a simple $2 \times 2$ matrix case, the characteristic polynomial $p_A(\lambda) = \lambda^2 - (a_{11} + a_{22})\lambda + (a_{11}a_{22} - a_{12}a_{21})$ demonstrates that while the trace term is linear in the entries of $A$, the determinant term exhibits quadratic nonlinearity. To quantify how probability measures transform under this nonlinear mapping, we examine the Jacobian matrix. For our $2 \times 2$ example, key entries include $\partial a_1/\partial a_{11} = -1$, $\partial a_0/\partial a_{11} = a_{22}$, and $\partial a_0/\partial a_{12} = -a_{21}$, further illustrating the mixture of linear and nonlinear dependencies.

The transformed probability density follows $p_{\theta'}(\theta') = p_{\theta}(f^{-1}(\theta')) \cdot |\det(J)|^{-1}$, and as the inverse is dense, leads to increased computational cost. When considering eigenvalues instead of canonical form coefficients, additionally it introduces several complications. First, the transformation from matrix entries to eigenvalues involves solving polynomial equations, which is inherently nonlinear. Second, eigenvalues are unordered, leading to permutation invariance that creates multimodality in the transformed distribution.
\end{remark}
\subsection{Proof of \Cref{thm:pushforward_polynomial_roots}}
\label{appendix_thm:pushforward_polynomial_roots}
Write
\begin{equation}
p(x)=x^{d_x}+ a_{{d_x}-1} x^{{d_x}-1} \ldots a_1 x +a_0=\prod_{k=1}^{d_x}\left(x-\lambda_k\right) .
\end{equation}
Using Vieta's formula we identify the coefficients with the products and sums of eigenvalues. We can write the Jacobian as:
\begin{equation}
    D\Phi=\left(\begin{array}{cccc}\frac{\partial a_{{d_x}-1}}{\partial \lambda_1} & \frac{\partial a_{{d_x}-1}}{\partial \lambda_2} & \cdots & \frac{\partial a_{{d_x}-1}}{\partial \lambda_{d_x}} \\ \frac{\partial a_{{d_x}-2}}{\partial \lambda_1} & \frac{\partial a_{{d_x}-2}}{\partial \lambda_2} & \cdots & \frac{\partial a_{{d_x}-2}}{\partial \lambda_{d_x}} \\ \vdots & \vdots & \ddots & \vdots \\ \frac{\partial a_0}{\partial \lambda_1} & \frac{\partial a_0}{\partial \lambda_2} & \cdots & \frac{\partial a_0}{\partial \lambda_{d_x}}\end{array}\right)=\left(\begin{array}{cccc}1 & 1 & \cdots & 1 \\ \lambda_1 & \lambda_2 & \cdots & \lambda_{d_x} \\ \vdots & \vdots & \ddots & \vdots \\ \lambda_2 \lambda_3 \cdots \lambda_{d_x} & \lambda_1 \lambda_3 \cdots \lambda_{d_x} & \cdots & \lambda_1 \lambda_2 \cdots \lambda_{{d_x}-1}\end{array}\right)
\end{equation}
Using induction one can show, that the matrix is equivalent to the Vandermonde matrix, which has an explicit formula given by:
\begin{equation}
|\operatorname{det}D\Phi|=\left|\prod_{i<j}\left(\lambda_i-\lambda_j\right)\right| . \qquad \blacksquare
\end{equation}

\subsection{Proof of \Cref{lemma:uniform_prior_2d}}
\label{appendix_lemma:uniform_prior_2d}
The characteristic polynomial of any 2×2 real matrix $A$ has the form $p(z) = z^2 - \text{tr}(A)z + \det(A)$. For a stable system whose eigenvalues lie within the unit disk, the coefficients $(\text{tr}(A), \det(A))$ must satisfy the Schur-Cohn stability conditions: $|\det(A)| < 1$ and $|\text{tr}(A)| < 1 + \det(A)$. These inequalities define a triangular region $\mathcal{R}$ in the $(\text{tr}(A), \det(A))$ plane with total area 4 square units.

The discriminant $\Delta = \text{tr}(A)^2 - 4\det(A)$ determines the nature of the eigenvalues. The curve $\Delta = 0$ (a parabola given by $\text{tr}(A)^2 = 4\det(A)$) divides $\mathcal{R}$ into two subregions: $\mathcal{R}_1$ where eigenvalues are real, and $\mathcal{R}_2$ where eigenvalues are complex.

Computing the area of $\mathcal{R}_1$ requires integrating over the region where $\text{tr}(A)^2 \geq 4\det(A)$ within the stability constraints:
\begin{align}
\text{Area}(\mathcal{R}_1) &= \int_{-2}^{2} \int_{\max(-1, \frac{\text{tr}(A)^2}{4})}^{\min(1, 1-|\text{tr}(A)|)} d\det(A) \, d\text{tr}(A)= \frac{8}{3}
\end{align}

The area of $\mathcal{R}_2$ is then $\text{Area}(\mathcal{R}_2) = \text{Area}(\mathcal{R}) - \text{Area}(\mathcal{R}_1) = 4 - \frac{8}{3} = \frac{4}{3}$.

Therefore, under a uniform distribution on $(\text{tr}(A), \det(A))$ in $\mathcal{R}$, the probability of real eigenvalues is $\frac{\text{Area}(\mathcal{R}_1)}{\text{Area}(\mathcal{R})} = \frac{2}{3}$, and the probability of complex eigenvalues is $\frac{\text{Area}(\mathcal{R}_2)}{\text{Area}(\mathcal{R})} = \frac{1}{3}$.

For purely imaginary eigenvalues, we require $\text{tr}(A) = 0$ and $\det(A) > 0$, which corresponds to a line segment in $\mathcal{R}$ with Lebesgue measure zero under the two-dimensional uniform distribution. To derive the specific density functions in each region, we apply the Jacobian transformation from the coefficient space to the eigenvalue space. For real eigenvalues the uniform density in $\mathcal{R}_1$ transforms to a constant density $\frac{1}{2}$ on $(-1,1)$. For complex eigenvalues the uniform density in $\mathcal{R}_2$ transforms to a constant density $\frac{1}{\pi}$ over the unit disk in the complex plane. \hfill $\blacksquare$

\subsection{Proof of \Cref{cor:mixture_stable_eigs}}
\label{appendix_cor:mixture_stable_eigs}
Let \(r\) be the number of real eigenvalues and \(c\) be the number of complex conjugate pairs. The total number of eigenvalues is \(d = r + 2c\). This immediately implies that \(d-r = 2c\) must be an even number, which is equivalent to the condition \(r \equiv d \pmod{2}\). For a stable system, all eigenvalues must lie within the open unit disk \(\mathbb{D} = \{z \in \mathbb{C} : |z| < 1\}\). Thus, the \(r\) real eigenvalues lie in \((-1, 1)\) and the \(c = (d-r)/2\) complex conjugate pairs lie in \(\mathbb{D} \setminus \mathbb{R}\).We first condition on the number of real eigenvalues, \(r\), which is an uncertain quantity governed by the prior probability mass function \(p_r(r)\). The set of admissible values for \(r\) is \(\{k \in \{0, \dots, d\} : d-k \text{ is even}\}\).
\begin{equation*}
p(\lambda) = \sum_{\substack{0 \le r \le d \\ d-r \text{ is even}}} p(\lambda \mid r) \, p_r(r).
\end{equation*}
Next, we condition on whether the eigenvalue \(\lambda\) is real or complex, yielding
\begin{equation}
p(\lambda \mid r) = p(\lambda \mid \text{real}, r) \, P(\text{real} \mid r) + p(\lambda \mid \text{complex}, r) \, P(\text{complex} \mid r).
\end{equation}
Here, the conditional probability of selecting a real eigenvalue from the spectrum, given \(r\) real roots, is \(P(\text{real} \mid r) = r/d\), while the probability of selecting a complex one is \(P(\text{complex} \mid r) = (d-r)/d\). Substituting these definitions back into the mixture expansion yields the final expression:
\begin{equation}
p(\lambda) = \sum_{\substack{0 \le r \le d \\ d-r \text{ is even}}} p_r(r) \left[ \frac{r}{d} \, p_{\mathrm{real}}(\lambda; r) + \frac{d-r}{d} \, p_{\mathrm{complex}}(\lambda; r) \right],
\end{equation}
For $d=2$, this recovers \Cref{lemma:uniform_prior_2d} with $p(0)=\frac{1}{3}$ and $p(2)=\frac{2}{3}$. \hfill $\blacksquare$

\subsection{BvM and Fisher information}
\label{appendix_thm:BvM_LTI_failure} 
\begin{proof}[Proof of \Cref{thm:BvM_LTI_failure}]
The BvM theorem fails for standard form parameterization $\Theta_s$, where all entries of matrices $A$, $B$, $C$, and $D$ are treated as free parameters. The fundamental issue is the existence of equivalence classes under similarity transformations. This creates equivalence classes in parameter space:
\begin{equation}
[\Theta_s] = \{\Theta'_s : \exists \text{ invertible } T \text{ such that } \Theta'_s = \{T^{-1}AT, T^{-1}B, CT, D\}\}
\end{equation}
Each equivalence class forms an $d_x^2$-dimensional manifold in parameter space (where $d_x$ is the state dimension), since $T$ has $d_x^2$ free elements. For any tangent vector $v$ to the manifold $[\Theta_s]$ at $\Theta_s$:
\begin{equation}
v^T I_T(\Theta_s, u_{[T]})v = \mathbb{E}_{\Theta_s}\left[\left(v^T \frac{\partial}{\partial\Theta_s}\log p(y_{[T]}|u_{[T]},\Theta_s)\right)^2\right] = 0
\end{equation}

This holds because moving along direction $v$ doesn't change the system's input-output behavior, hence doesn't affect the likelihood. Consequently:
\begin{equation}
\text{rank}(I_T(\Theta_s, u_{[T]})) \leq \dim(\Theta_s) - d_x^2
\end{equation}
making the FIM necessarily singular for standard parameterization. Due to this singularity, the posterior distribution cannot converge to a non-degenerate normal distribution in all directions of the parameter space.
\end{proof}
The geometric interpretation is that the posterior distribution in standard form becomes concentrated on an $d_x^2$-dimensional manifold rather than at a point, with its shape on this manifold determined by the prior distribution even in the asymptotic limit. This violates the core premise of the BvM theorem, which requires the posterior to converge to a normal distribution entirely determined by the data through the MLE and FIM.
\begin{proposition}[Fisher information for LTI systems in canonical forms]
\label{prop:fisher_LTI}
Consider an LTI system with deterministic state dynamics given by \eqref{eq:general_lti_state_space_model_noise}, where \( z_{[T]} \) is an i.i.d.\ sequence of Gaussian noise with \( z_t \sim \mathcal{N}(0,\sigma^2) \) for \( t=1,\dots,T \) and $w_t=0\;\forall t$. Assume that the system is in either the controller or observability canonical form. Let \(\theta\) denote the vector of system parameters (which includes either the entries of \(A_{c}\) or those of \(C_c\); in the observer form the the derivative would be with respect to $B_c$). Then, the Fisher information matrix corresponding to the likelihood function \(p(Y \mid \theta)\) is given by
\begin{equation}
\label{eq:fisher_info}
I_T(\theta, u_{[T]}) \;=\; \frac{1}{\sigma^2}\sum_{t=1}^T \left( \frac{\partial (C_c x_t)}{\partial \theta} \right)
\left( \frac{\partial (C_c x_t)}{\partial \theta} \right)^\top,
\end{equation}
where we have used the fact that $\mathbb{E}\Bigl[(y_t - C_c x_t)^2\Bigr] \;=\; \sigma^2$.
Moreover, the derivatives in \eqref{eq:fisher_info} can be computed recursively via the product rule. In particular, for the parameters \(a_i\) affecting the state matrix \(A_c\) (which appear in the controller form) the recursion is
\begin{equation}
\label{eq:deriv_a}
\frac{\partial y_{t+1}}{\partial a_i} \;=\; C_c\left(\sum_{m=0}^{t} A_c^{t-m}\frac{\partial A_c}{\partial a_i}A_c^m x_0 + \sum_{k=0}^{t}\sum_{n=0}^{t-k-1} A_c^{t-k-1-n}\frac{\partial A_c}{\partial a_i}A_c^n B_c u_k\right),
\end{equation}
and for the parameters \(c_i\) of \(C_c\) (when \(C_c\) is free, as in the controller form),
\begin{equation}
\label{eq:deriv_c}
\frac{\partial y_{t+1}}{\partial c_i} \;=\; \frac{\partial C_c}{\partial c_i}x_{t+1} \;=\; \frac{\partial C_c}{\partial c_i}\left(A_c^{t+1}x_0 + \sum_{k=0}^{t} A_c^{t-k}B_c u_k\right).
\end{equation}
In the observer canonical form, since \(C_c\) is fixed, the differentiation is carried out with respect to the parameters in the \(B_c\) matrix, and analogous recursive formulas hold.
\end{proposition}
\begin{proof}[Proof of \Cref{prop:fisher_LTI}]
The log-likelihood of the output sequence $Y = \{y_1, \dots, y_T\}$ given the parameter vector $\theta$ is derived from the Gaussian noise assumption. For each time step $t$, the observation $y_t$ is given by $y_t = C_c x_t + z_t$, where $z_t \sim \mathcal{N}(0, \sigma^2)$. The probability density function of a single observation $y_t$ is therefore $p(y_t | \theta) = \frac{1}{\sqrt{2\pi\sigma^2}} \exp\left(-\frac{(y_t - C_c x_t)^2}{2\sigma^2}\right)$. Since the noise samples $z_t$ are independent and identically distributed, the likelihood of the entire sequence $Y$ is the product of the individual probabilities, $p(Y|\theta) = \prod_{t=1}^T p(y_t|\theta)$. The log-likelihood function, $\mathcal{L}(\theta) = \log p(Y|\theta)$, is then the sum of the individual log-likelihoods:
$$ \mathcal{L}(\theta) = \sum_{t=1}^T \log p(y_t | \theta) = -\frac{T}{2}\log(2\pi\sigma^2) - \frac{1}{2\sigma^2}\sum_{t=1}^T (y_t - C_c x_t)^2. $$
The Fisher information matrix $I_L(\theta)$ is defined as the negative expectation of the Hessian of the log-likelihood function, i.e., $I_L(\theta) = -\mathbb{E}\left[\frac{\partial^2 \mathcal{L}(\theta)}{\partial \theta \partial \theta^\top}\right]$. Let's compute the first partial derivative of $\mathcal{L}(\theta)$ with respect to a generic parameter $\theta_i$:
$$ \frac{\partial \mathcal{L}(\theta)}{\partial \theta_i} = -\frac{1}{2\sigma^2}\sum_{t=1}^T 2(y_t - C_c x_t)\left(-\frac{\partial (C_c x_t)}{\partial \theta_i}\right) = \frac{1}{\sigma^2}\sum_{t=1}^T (y_t - C_c x_t) \frac{\partial (C_c x_t)}{\partial \theta_i}. $$
The second partial derivative with respect to $\theta_j$ is:
$$ \frac{\partial^2 \mathcal{L}(\theta)}{\partial \theta_j \partial \theta_i} = \frac{1}{\sigma^2}\sum_{t=1}^T \left[ -\frac{\partial (C_c x_t)}{\partial \theta_j} \frac{\partial (C_c x_t)}{\partial \theta_i} + (y_t - C_c x_t) \frac{\partial^2 (C_c x_t)}{\partial \theta_j \partial \theta_i} \right]. $$
Taking the expectation, we note that $\mathbb{E}[y_t - C_c x_t] = \mathbb{E}[z_t] = 0$. Thus, the second term vanishes. This leaves:
$$ I_L(\theta)_{ij} = -\mathbb{E}\left[\frac{\partial^2 \mathcal{L}(\theta)}{\partial \theta_j \partial \theta_i}\right] = \frac{1}{\sigma^2}\sum_{t=1}^T \mathbb{E}\left[ \frac{\partial (C_c x_t)}{\partial \theta_j} \frac{\partial (C_c x_t)}{\partial \theta_i} \right]. $$
Since the state evolution $x_t$ is deterministic given the inputs and initial state, the derivative $\frac{\partial (C_c x_t)}{\partial \theta}$ is non-random. Therefore, the expectation operator can be removed, and we can express the Fisher information matrix in vector form as stated in \eqref{eq:fisher_info}:
$$ I_L(\theta) = \frac{1}{\sigma^2}\sum_{t=1}^T \left( \frac{\partial (C_c x_t)}{\partial \theta} \right) \left( \frac{\partial (C_c x_t)}{\partial \theta} \right)^\top. $$
To derive the recursive formulas, we first express the state $x_t$ as a function of the initial state $x_0$ and the input sequence $u_k$: $x_t = A_c^t x_0 + \sum_{k=0}^{t-1} A_c^{t-1-k} B_c u_k$. The output is $y_t = C_c x_t$.
For a parameter $a_i$ in the state matrix $A_c$, we differentiate $y_{t+1} = C_c x_{t+1}$ with respect to $a_i$. Using the product rule and the formula for the derivative of a matrix exponential, we get:
$$ \frac{\partial x_{t+1}}{\partial a_i} = \frac{\partial A_c^{t+1}}{\partial a_i} x_0 + \sum_{k=0}^{t} \frac{\partial A_c^{t-k}}{\partial a_i} B_c u_k. $$
The derivative of the matrix power is $\frac{\partial A_c^p}{\partial a_i} = \sum_{m=0}^{p-1} A_c^{p-1-m} \frac{\partial A_c}{\partial a_i} A_c^m$. Substituting this into the expression for $\frac{\partial x_{t+1}}{\partial a_i}$ and then multiplying by $C_c$ yields the result in \eqref{eq:deriv_a}.
For a parameter $c_i$ in the output matrix $C_c$, the state $x_{t+1}$ does not depend on $c_i$. The derivative is simpler:
$$ \frac{\partial y_{t+1}}{\partial c_i} = \frac{\partial (C_c x_{t+1})}{\partial c_i} = \frac{\partial C_c}{\partial c_i} x_{t+1} = \frac{\partial C_c}{\partial c_i} \left(A_c^{t+1}x_0 + \sum_{k=0}^{t} A_c^{t-k}B_c u_k\right), $$
which gives \eqref{eq:deriv_c}. The reasoning for the observer canonical form, with differentiation with respect to parameters in $B_c$, follows an analogous procedure.
\end{proof}
The derivation, including the case with process noise (\(w_t \neq 0\)) requiring Kalman filter sensitivities, is discussed in the following proposition. We follow the derivation of \cite{CavanaughShumway1996EFIM} and adapt the equations to the canonical forms:

\begin{proposition}[Fisher information for LTI systems in canonical forms via Kalman filter]
\label{prop:fisher_kalman_canonical}
Consider an LTI system represented in a canonical form:
\begin{equation}
x_{t+1} = A x_t + B u_t + w_t, \quad y_t = C x_t + z_t,
\end{equation}
with initial state $x_0 \sim \mathcal{N}(\hat{x}_0, P_0)$, process noise $w_t \sim \mathcal{N}(0, \Sigma)$, and measurement noise $z_t \sim \mathcal{N}(0, \Gamma)$. Let the parameter vector $\theta$ consist of the variable entries in the system matrices. For the controller form the parameter vector is $\theta = [a_1, \dots, a_{d_x}, c_1, \dots, c_{d_x}]^\top$, where $\{a_i\}$ are the coefficients in the last row of $A$ and $\{c_i\}$ are the elements of $C$. The matrix $B$ is assumed known. The observer form's parameter vector contains the variable entries of $A$ and $B$, while $C$ is fixed.
The log-likelihood function, derived from the Kalman filter's prediction error decomposition, is:
\begin{align}
\mathcal{L}(\theta) = -\frac{T N_y}{2}\log(2\pi) - \frac{1}{2}\sum_{t=1}^T\left(\log|S_t| + \nu_t^\top S_t^{-1}\nu_t\right)
\end{align}
where $N_y$ is the dimension of the output $y_t$, $\nu_t = y_t - C\hat{x}_{t|t-1}$ is the innovation, and $S_t = CP_{t|t-1}C^\top + \Gamma$ is the innovation covariance. The Fisher information matrix (FIM) is given by:
\begin{align}
I(\theta) = \sum_{t=1}^T \mathbb{E}\left[ \left(\frac{\partial \nu_t}{\partial \theta}\right)^\top S_t^{-1} \left(\frac{\partial \nu_t}{\partial \theta}\right) + \frac{1}{2} \left(\frac{\partial \vec{S}_t}{\partial \theta}\right)^\top (S_t^{-1} \otimes S_t^{-1}) \left(\frac{\partial \vec{S}_t}{\partial \theta}\right) \right],
\end{align}
where $\vec{S}_t$ denotes the vectorization of the matrix $S_t$. The derivatives $\frac{\partial \nu_t}{\partial \theta}$ and $\frac{\partial \vec{S}_t}{\partial \theta}$ are computed by propagating the derivatives of the parameters through the Kalman filter equations, with the derivatives of fixed matrix entries being zero.
\end{proposition}
\begin{proof}[Proof of \Cref{prop:fisher_kalman_canonical}]
The foundation of this proof is the prediction error decomposition provided by the Kalman filter. The likelihood of the full observation sequence $y_{1:T}$ can be factored as $p(y_{1:T}|\theta) = \prod_{t=1}^T p(y_t|y_{1:t-1}, \theta)$. For a linear Gaussian system, the Kalman filter establishes that each one-step-ahead conditional distribution is Gaussian:
$$ y_t | y_{1:t-1}, \theta \sim \mathcal{N}(C\hat{x}_{t|t-1}, S_t) $$
where $\hat{x}_{t|t-1} = \mathbb{E}[x_t | y_{1:t-1}, \theta]$ is the predicted state and $S_t = \text{Cov}(y_t | y_{1:t-1}, \theta) = CP_{t|t-1}C^\top + \Gamma$ is the predicted observation covariance. The log-likelihood of a single observation $y_t$ is:
$$ \log p(y_t|y_{1:t-1}, \theta) = -\frac{N_y}{2}\log(2\pi) - \frac{1}{2}\log|S_t| - \frac{1}{2}(y_t - C\hat{x}_{t|t-1})^\top S_t^{-1}(y_t - C\hat{x}_{t|t-1}). $$
Summing over all $T$ time steps gives the total log-likelihood $\mathcal{L}(\theta)$ as stated in the proposition. The Fisher information matrix is defined as $I(\theta) = -\mathbb{E}\left[ \nabla_\theta^2 \mathcal{L}(\theta) \right]$. To compute this, we first find the score function $s(\theta) = \nabla_\theta \mathcal{L}(\theta)$. Differentiating $\mathcal{L}(\theta)$ with respect to a single parameter $\theta_i$ involves the chain rule and standard matrix derivative identities ($\nabla_x \log|A| = \text{Tr}(A^{-1}\nabla_x A)$ and $\nabla_x A^{-1} = -A^{-1}(\nabla_x A)A^{-1}$):
$$ \frac{\partial\mathcal{L}}{\partial\theta_i} = -\frac{1}{2}\sum_{t=1}^T\left( \text{Tr}(S_t^{-1}\frac{\partial S_t}{\partial\theta_i}) - \nu_t^\top S_t^{-1}\frac{\partial S_t}{\partial\theta_i}S_t^{-1}\nu_t + 2\left(\frac{\partial\nu_t}{\partial\theta_i}\right)^\top S_t^{-1}\nu_t \right). $$
To find the $(i,j)$-th entry of the Hessian, we differentiate again with respect to $\theta_j$ and take the negative expectation. The key statistical properties of the innovations are that they are zero-mean, $\mathbb{E}[\nu_t] = 0$, and are white, i.e., $\mathbb{E}[\nu_t \nu_k^\top] = S_t \delta_{tk}$. Consequently, when taking the expectation of the Hessian, all terms that are linear in $\nu_t$ vanish. The remaining non-zero terms arise from products of derivatives. Specifically, we use $\mathbb{E}[\nu_t^\top M \nu_t] = \text{Tr}(M \mathbb{E}[\nu_t \nu_t^\top]) = \text{Tr}(M S_t)$.
The resulting $(i,j)$-th element of the FIM is:
$$ I(\theta)_{ij} = \mathbb{E}\left[\sum_{t=1}^T \left(\frac{\partial\nu_t}{\partial\theta_i}\right)^\top S_t^{-1}\frac{\partial\nu_t}{\partial\theta_j}\right] + \frac{1}{2}\mathbb{E}\left[\sum_{t=1}^T \text{Tr}\left(S_t^{-1}\frac{\partial S_t}{\partial\theta_i}S_t^{-1}\frac{\partial S_t}{\partial\theta_j}\right)\right]. $$
The second term can be written using the Kronecker product ($\otimes$) and vectorization ($\vec{\cdot}$), yielding the compact matrix form presented in the proposition.

For a system in a canonical form, the derivatives $\frac{\partial A}{\partial\theta_i}$, $\frac{\partial B}{\partial\theta_i}$, and $\frac{\partial C}{\partial\theta_i}$ are sparse matrices containing only a '1' or '-1' at the position of the parameter $\theta_i$, and zeros elsewhere. For instance, in controller form, if $\theta_i = a_k$, then $\frac{\partial A}{\partial a_k}$ is zero everywhere except for a '-1' at position $({d_x}, {d_x}-k+1)$. If $\theta_i = c_k$, then $\frac{\partial C}{\partial c_k}$ is zero everywhere except for a '1' at position $(1, k)$. These sparse derivatives are propagated through the recursive derivative equations for the Kalman filter states ($\hat{x}_{t|t}, P_{t|t}$) and predictions ($\hat{x}_{t|t-1}, P_{t|t-1}$), to compute the required gradients $\frac{\partial \nu_t}{\partial \theta}$ and $\frac{\partial S_t}{\partial \theta}$ at each time step. The initialization is handled by the derivatives of the initial state, $\frac{\partial\hat{x}_{0}}{\partial\theta_i}$ and $\frac{\partial P_{0}}{\partial\theta_i}$, which are typically assumed to be zero if the initial conditions are not functions of $\theta$.
\end{proof}

\begin{proposition}[Efficient FIM calculation for canonical forms via Kalman filter sensitivities]
\label{prop:fim_efficient_canonical}
Consider an LTI system in a canonical form with parameters $\theta$. The Fisher information matrix (FIM) is given by:
\begin{equation}
I(\theta)=\sum_{t=1}^T \mathbb{E}\left[\left(\frac{\partial \nu_t}{\partial \theta}\right)^{\top} S_t^{-1}\left(\frac{\partial \nu_t}{\partial \theta}\right)+\frac{1}{2}\left(\frac{\partial S_t}{\partial \theta}\right)^{\top}\left(S_t^{-1} \otimes S_t^{-1}\right)\left(\frac{\partial S_t}{\partial \theta}\right)\right]
\end{equation}
The efficiency in this calculation stems from specializing the general derivative recursions to the sparse structure of the canonical form matrices. The required sensitivities are computed as follows:

\paragraph{1. Controller canonical form}
Here, the parameter vector is $\theta = [a_1, \dots, a_{d_x}, c_1, \dots, c_{d_x}]^\top$. The derivatives $\frac{\partial A_c}{\partial a_k}$ and $\frac{\partial C_c}{\partial c_k}$ are sparse matrices (containing only one non-zero element). This simplifies the general derivative recursions significantly. For each parameter $\theta_i \in \theta$, we compute:
\begin{align}
\label{eq:dxdt_c}
\frac{\partial \hat{x}_{t|t-1}}{\partial \theta_i} &= A_c \frac{\partial \hat{x}_{t-1|t-1}}{\partial \theta_i} + \frac{\partial A_c}{\partial \theta_i} \hat{x}_{t-1|t-1} \\
\label{eq:dPdt_c}
\frac{\partial P_{t|t-1}}{\partial \theta_i} &= A_c \frac{\partial P_{t-1|t-1}}{\partial \theta_i} A_c^\top + \left(\frac{\partial A_c}{\partial \theta_i} P_{t-1|t-1} A_c^\top + A_c P_{t-1|t-1} \left(\frac{\partial A_c}{\partial \theta_i}\right)^\top\right) \\
\label{eq:dSdt_c}
\frac{\partial S_t}{\partial \theta_i} &= C_c \frac{\partial P_{t|t-1}}{\partial \theta_i} C_c^\top + \left(\frac{\partial C_c}{\partial \theta_i} P_{t|t-1} C_c^\top + C_c P_{t|t-1} \left(\frac{\partial C_c}{\partial \theta_i}\right)^\top\right) \\
\label{eq:dnudt_c}
\frac{\partial \nu_t}{\partial \theta_i} &= -C_c \frac{\partial \hat{x}_{t|t-1}}{\partial \theta_i} - \frac{\partial C_c}{\partial \theta_i} \hat{x}_{t|t-1}
\end{align}
These sensitivities are then used to compute the derivatives of the Kalman gain $K_t$ and the updated state and covariance, which are needed for the next time step's recursion.

\paragraph{2. Observer canonical form}
Here, the parameter vector is $\theta = [a_1, \dots, a_{d_x}, b_1, \dots, b_{d_x}]^\top$. The derivatives $\frac{\partial A_o}{\partial a_k}$ and $\frac{\partial B_o}{\partial b_k}$ are sparse. The efficient recursions become:
\begin{align}
\label{eq:dxdt_o}
\frac{\partial \hat{x}_{t|t-1}}{\partial \theta_i} &= A_o \frac{\partial \hat{x}_{t-1|t-1}}{\partial \theta_i} + \frac{\partial A_o}{\partial \theta_i} \hat{x}_{t-1|t-1} + \frac{\partial B_o}{\partial \theta_i} u_{t-1} \\
\label{eq:dPdt_o}
\frac{\partial P_{t|t-1}}{\partial \theta_i} &= A_o \frac{\partial P_{t-1|t-1}}{\partial \theta_i} A_o^\top + \left(\frac{\partial A_o}{\partial \theta_i} P_{t-1|t-1} A_o^\top + A_o P_{t-1|t-1} \left(\frac{\partial A_o}{\partial \theta_i}\right)^\top\right) \\
\label{eq:dSdt_o}
\frac{\partial S_t}{\partial \theta_i} &= C_o \frac{\partial P_{t|t-1}}{\partial \theta_i} C_o^\top \\
\label{eq:dnudt_o}
\frac{\partial \nu_t}{\partial \theta_i} &= -C_o \frac{\partial \hat{x}_{t|t-1}}{\partial \theta_i}
\end{align}
Note that derivatives with respect to $C_o$ are zero as it is fixed in this form.
\end{proposition}
\begin{proof}[Proof of \Cref{prop:fim_efficient_canonical}]
The derivation provides a computational procedure. The FIM formula is standard for Gaussian models and is derived from the negative expectation of the Hessian of the log-likelihood. The efficiency comes from exploiting the structure of the canonical forms when computing the required derivatives.

The general sensitivity equations for the Kalman filter are obtained by direct differentiation of the filter's time- and measurement-update equations. For any parameter $\theta_i$, we have:
\begin{align*}
\frac{\partial\hat{x}_{t|t-1}}{\partial\theta_i} &= \frac{\partial A}{\partial\theta_i}\hat{x}_{t-1|t-1} + A\frac{\partial\hat{x}_{t-1|t-1}}{\partial\theta_i} + \frac{\partial B}{\partial\theta_i}u_{t-1} \\
\frac{\partial P_{t|t-1}}{\partial\theta_i} &= \frac{\partial A}{\partial\theta_i}P_{t-1|t-1}A^\top + A\frac{\partial P_{t-1|t-1}}{\partial\theta_i}A^\top + AP_{t-1|t-1}\left(\frac{\partial A}{\partial\theta_i}\right)^\top + \frac{\partial \Sigma}{\partial\theta_i}
\end{align*}
And similarly for the remaining filter quantities ($\nu_t, S_t, K_t, \hat{x}_{t|t}, P_{t|t}$). The matrix derivatives (e.g., $\frac{\partial A_c}{\partial a_k}$) are extremely sparse, typically containing only a single non-zero entry.
\paragraph{Controller form:} For a parameter $\theta_i = a_k$, we have $\frac{\partial C_c}{\partial a_k} = \mathbf{0}$ and $\frac{\partial A_c}{\partial a_k}$ is a matrix of zeros with a '-1' at position $({d_x}, {d_x}-k+1)$. For a parameter $\theta_i = c_k$, we have $\frac{\partial A_c}{\partial c_k} = \mathbf{0}$ and $\frac{\partial C_c}{\partial c_k}$ is a row vector of zeros with a '1' at position $k$. Substituting these sparse matrices into the general sensitivity equations simplifies them to \eqref{eq:dxdt_c}-\eqref{eq:dnudt_c}. For example, in \eqref{eq:dxdt_c}, if $\theta_i = c_k$, the term $\frac{\partial A_c}{\partial c_k}$ is zero, so the equation simplifies to $\frac{\partial \hat{x}_{t|t-1}}{\partial c_k} = A_c \frac{\partial \hat{x}_{t-1|t-1}}{\partial c_k}$. This sparsity propagates through the recursions, making the computation highly efficient as many terms become zero.
\paragraph{Observer form:} The logic is identical. Here, $C_o$ is fixed, so $\frac{\partial C_o}{\partial \theta_i} = \mathbf{0}$ for all parameters. For $\theta_i = a_k$, the derivative $\frac{\partial A_o}{\partial a_k}$ is sparse, and for $\theta_i = b_k$, the derivative $\frac{\partial B_o}{\partial b_k}$ is sparse. Substituting these into the general equations yields the simplified set \eqref{eq:dxdt_o}-\eqref{eq:dnudt_o}. For example, the term involving $\frac{\partial C_o}{\partial \theta_i}$ vanishes from the derivative of $S_t$, leaving the much simpler \eqref{eq:dSdt_o}.

This specialization avoids unnecessary matrix operations with zero matrices, providing a clear and efficient algorithm. At each time step $t=1, \dots, T$, one runs the Kalman filter and, in parallel, the set of $N$ recursive derivative equations (where $N$ is the number of parameters), using the results to update the total FIM sum.
\end{proof}

\subsection{Expected posterior curvature}
\label{expected_posterior_curvature_section}
Distinct from the likelihood-based Fisher information matrix (FIM) $I_T(\theta)$, which depends solely on the likelihood and involves expectation over data $y_{[T]}$, one can assess the posterior's average concentration using the Expected Posterior Curvature matrix, $\mathcal{J}_T$. This metric captures the negative expected Hessian of the log-posterior (thus incorporating both likelihood and prior influence), averaged with respect to the posterior itself, offering a finite-sample perspective on uncertainty (see \Cref{def:expected_posterior_curvature}). While $\mathcal{J}_T$ reflects the average posterior shape for finite data $T$ and $I_T(\theta)$ the likelihood's curvature, they are related—the Bernstein-von Mises theorem (\Cref{thm:BvM_LTI_applicability}) links the asymptotic posterior shape to the limiting FIM $I(\theta_0)$. However, they measure distinct aspects of information and curvature, especially for finite $T$.
\begin{definition}[Expected posterior curvature matrix]
\label{def:expected_posterior_curvature} 
Consider the posterior distribution \(p(\Theta \mid y_{[T]}, u_{[T]})\) for parameters \(\Theta\). Assume the log-posterior is twice differentiable with respect to \(\Theta\) (treated as a flattened vector of free parameters), and that the expectations below exist. The \emph{Expected Posterior Curvature Matrix} is defined as the negative expectation of the Hessian of the log-posterior, where the expectation is taken with respect to the posterior distribution itself:
\[
  \mathcal{J}_T
  \;=\;
  -\,\E_{\Theta \mid Y_{[T]}, U_{[T]}} \! \left[
    \frac{\partial^2}{\partial\Theta \partial\Theta^\top}
    \log p(\Theta \mid y_{[T]}, u_{[T]})
  \right].
\]
Equivalently, under regularity conditions:
\[
  \mathcal{J}_T
  \;=\;
  \E_{\Theta \mid Y_{[T]}, U_{[T]}} \! \left[
    \left( \frac{\partial}{\partial\Theta} \log p(\Theta \mid y_{[T]}, u_{[T]}) \right) \!
    \left( \frac{\partial}{\partial\Theta} \log p(\Theta \mid y_{[T]}, u_{[T]}) \right)^{\!\top}
  \right].
\]
\(\mathcal{J}_T\) quantifies the average concentration of the posterior distribution \(p(\Theta \mid y_{[T]}, u_{[T]})\).
\end{definition}
\section{Canonical forms in LTI systems}
\label{appendix_a_canonical_forms}
In control theory, canonical forms provide standardized representations of LTI systems. Here we present all eight canonical forms divided into controllability and observability categories.
\subsection{Multi-input multi output (MIMO) systems}
\label{appendix_mimo_case}
\begin{definition}[MIMO canonical structure example]
\label{def:mimo_canonical_form}
A possible canonical structure for MIMO systems (\(d_u > 1\) or \(d_y > 1\)) uses a block companion form for \(A_c\), often parameterized by structural indices like the observability index \(r\). For instance, with state dimension \(d_x = r \cdot d_u\), one such identifiable structure is given by the matrices \(\{A_c, B_c, C_c, D_c\}\):
\begin{equation}
\label{eq:mimo_canonical_form}
A_{c}=\begin{pmatrix}
-\alpha_1 I_{d_u} & -\alpha_2 I_{d_u} & \cdots & -\alpha_{r-1} I_{d_u} & -\alpha_r I_{d_u} \\
I_{d_u} & 0 & \cdots & 0 & 0 \\
0 & I_{d_u} & \cdots & 0 & 0 \\
\vdots & \vdots & \ddots & \vdots & \vdots \\
0 & 0 & \cdots & I_{d_u} & 0
\end{pmatrix}, \;\; B_{c}=\begin{pmatrix}
I_{d_u} \\ 0 \\ 0 \\ \vdots \\ 0
\end{pmatrix},\;\; C_{c}=\begin{pmatrix} N_1 & \cdots & N_r \end{pmatrix}
\end{equation}
where \(I_{d_u}\) is the \(d_u \times d_u\) identity matrix, \(\alpha_i \in \R\) (\(i=1,\dots,r\)) are scalar parameters defining the characteristic polynomial blocks, the matrices \(N_k \in \R^{d_y \times d_u}\) (\(k=1,\dots,r\)) contain free parameters representing the system's numerator dynamics (Markov parameters), and the feedthrough matrix \(D_c = D \in \R^{d_y \times d_u}\) also consists of free parameters (unaffected by state basis transformations). The set of free parameters defining the system dynamics within this canonical form is explicitly:
\[
\Theta_{c,dyn} = \{\alpha_1, \dots, \alpha_r\} \cup \{ (N_k)_{ij} \}_{\substack{k=1..r \\ i=1..d_y \\ j=1..d_u}} \cup \{ (D_c)_{ij} \}_{\substack{i=1 \ldots d_y \\ j=1 \ldots d_u}}
\]
This specific dynamic structure contains \(N_{c,dyn} = r + (r \cdot d_y \cdot d_u) + (d_y \cdot d_u) = r + (r+1)d_y d_u\) free parameters. Using the relationship \(d_x = r \cdot d_u\), this count can be expressed as \(d_x/d_u + d_x d_y + d_y d_u\). The full canonical parameter set, \(\Theta_c\), encompasses both these dynamic parameters \(\Theta_{c,dyn}\) and the parameters describing the noise statistics. Assuming Gaussian noise (\Cref{ass:gaussian_noise}) as discussed in \Cref{sec:defs}, the noise parameters consist of the unique elements of the lower (or upper) triangular Cholesky factors. Therefore, the complete set of parameters to be inferred is $
\Theta_c = \Theta_{c,dyn} \cup \{ L_\Sigma, L_\Gamma \}$. The noise components \(L_\Sigma\) and \(L_\Gamma\) contribute an additional \(d_x(d_x+1)/2 + d_y(d_y+1)/2\) parameters to the total count in \(\Theta_c\).
\end{definition}

\subsection{Controllability SISO canonical forms}~\\
\begin{minipage}[t]{0.48\textwidth}
\paragraph{(1) Controller form}
\[
A_{\mathrm{ctrl}}
=
\begin{bmatrix}
0 & \cdots & 0 & -p_{0}\\
1 & \ddots &   & \vdots \\
\vdots & \ddots & \ddots &  \\
0 & \cdots & 1 & -p_{{d_x}-1}
\end{bmatrix}
\]
\[
b_{\mathrm{ctrl}} = [\,1 \;\; 0 \;\;\cdots\; 0\,].
\]
\end{minipage}
\hfill
\begin{minipage}[t]{0.48\textwidth}
\paragraph{(2) Dual controller form}
\[
A_{\mathrm{ctrl\text{-}dual}}
=
\begin{bmatrix}
0 & 1 & \cdots & 0\\
\vdots & \ddots & \ddots & \vdots\\
\vdots &        & \ddots & 1\\
-p_{0} & \cdots & \cdots & -p_{{d_x}-1}
\end{bmatrix}
\]
\[
b_{\mathrm{ctrl\text{-}dual}} = [\,0 \;\;\cdots\;\;0\;\; 1\,].
\]
\end{minipage}

\vspace{1em}

\noindent
\begin{minipage}[t]{0.48\textwidth}
\paragraph{(3) Observable-style controller}
\[
A_{\mathrm{ctrl\text{-}obs}}
=
\begin{bmatrix}
-p_{{d_x}-1} & 1 & \cdots & 0\\
\vdots & \ddots & \ddots & \vdots\\
\vdots & & & 1\\
-p_{0} & 0 & \cdots & 0
\end{bmatrix}
\]
\[
b_{\mathrm{ctrl\text{-}obs}} = [\,0 \;\;\cdots\;\;0\;\; 1\,].
\]
\end{minipage}
\hfill
\begin{minipage}[t]{0.48\textwidth}
\paragraph{(4) Dual observable-style controller}
\[
A_{\mathrm{ctrl\text{-}obs\text{-}dual}}
=
\begin{bmatrix}
-p_{{d_x}-1} & \cdots &\cdots &-p_{0}\\
1 &  & & 0\\
\vdots & \ddots   & \ddots& \vdots\\
0 & \cdots & 1 & 0
\end{bmatrix}
\]
\[
b_{\mathrm{ctrl\text{-}obs\text{-}dual}} = [\,1 \;\; 0 \;\;\cdots\; 0\,].
\]
\end{minipage}
\subsection{Observability SISO canonical forms}~\\
\begin{minipage}[t]{0.48\textwidth}
\paragraph{(5) Observer Form}
\[
A_{\mathrm{obs}}
=
\begin{bmatrix}
0 & 1 & \cdots & 0\\
\vdots & \ddots & \ddots & \vdots\\
0 &  & & 1\\
-p_{0} & \cdots & \cdots & -p_{{d_x}-1}
\end{bmatrix}
\]
\[
c_{\mathrm{obs}} = [\,1 \;\; 0 \;\;\cdots\; 0\,].
\]
\end{minipage}
\hfill
\begin{minipage}[t]{0.48\textwidth}
\paragraph{(6) Dual observer form}
\[
A_{\mathrm{obs\text{-}dual}}
=
\begin{bmatrix}
0 & \cdots & 0 & -p_{0}\\
1 & \ddots &   & \vdots\\
\vdots &     & \ddots & \vdots\\
0 & \cdots & 1 & -p_{{d_x}-1}
\end{bmatrix}
\]
\[
c_{\mathrm{obs\text{-}dual}} = [\,0 \;\;\cdots\;\;0\;\; 1\,].
\]
\end{minipage}
\hfill
\begin{minipage}[t]{0.48\textwidth}
\paragraph{(7) Controller-style observer}
\[
A_{\mathrm{obs\text{-}ctrl}}
=
\begin{bmatrix}
-p_{{d_x}-1} & 1 & \cdots & 0\\
\vdots & \ddots & \ddots & \vdots\\
\vdots& & & 1\\
-p_{0} & 0 & \cdots & 0
\end{bmatrix}
\]
\[
c_{\mathrm{obs\text{-}ctrl}} = [\,1 \;\; 0 \;\;\cdots\; 0\,].
\]
\end{minipage}
\hfill
\begin{minipage}[t]{0.48\textwidth}
\paragraph{(8) Dual controller-style observer}
\[
A_{\mathrm{obs\text{-}ctrl\text{-}dual}}
=
\begin{bmatrix}
-p_{{d_x}-1} & \cdots & & -p_{0}\\
1 & \ddots & & 0\\
\vdots &   & & \vdots\\
0 & \cdots & 1& 0
\end{bmatrix}
\]
\[
c_{\mathrm{obs\text{-}ctrl\text{-}dual}} = [\,0 \;\;\cdots\;\; 0\;\;1\,].
\]
\end{minipage}
\bigskip

\section{Procedure for generating well-conditioned LTI systems}
\label{appendix_e}
To generate stable, high-dimensional, and well-conditioned test systems, we employ a systematic procedure instead of sampling directly from our model's prior. While sampling from the prior is a principled Bayesian approach, our deterministic method avoids any appearance of ``cherry-picking'' or creating an ``inverse crime,'' ensuring the test systems provide a challenging and objective benchmark. The procedure for a given state dimension \({d_x}\) begins by defining an initial discrete-time LTI system pair \((A, B_0)\) to be stable and controllable. This is done by choosing the dynamics matrix \(A \in \mathbb{R}^{{d_x} \times {d_x}}\) to be diagonal, \(A = \operatorname{diag}(\lambda_1, \dots, \lambda_{d_x})\), with \({d_x}\) distinct eigenvalues \(\{\lambda_i\}_{i=1}^{d_x}\) linearly spaced on the real interval \([-0.98, 0.9]\), and setting the input matrix to a column vector of ones, \(B_0 = [1, \dots, 1]^\top \in \mathbb{R}^{{d_x} \times 1}\). Next, to normalize the system's input-to-state mapping, we compute the controllability Gramian, \(W_{c0}\), as the unique symmetric positive definite solution to the discrete-time algebraic Lyapunov equation \( W_{c0} = A W_{c0} A^\top + B_0 B_0^\top \). A state transformation matrix \(T = W_{c0}^{1/2}\) is then defined as the unique symmetric positive definite square root of this Gramian. Applying this similarity transformation yields a new system realization \((\tilde{A}, \tilde{B})\), where \( \tilde{A} = T^{-1} A T \) and \( \tilde{B} = T^{-1} B_0 \), whose controllability Gramian is the identity matrix \(\tilde{W}_c = I_{d_x}\). Finally, to make the overall system nearly balanced, an output matrix \(C \in \mathbb{R}^{1 \times {d_x}}\) is constructed such that its observability Gramian, \(W_o\), which solves \(W_o = \tilde{A}^\top W_o \tilde{A} + C^\top C\), is as close to the identity matrix as possible. This is achieved using the ``uniform proportions method,'' where \(C\) is constructed from the left eigenvectors of \(\tilde{A}\). Specifically, given the eigendecomposition \(\tilde{A} = V \Lambda V^{-1}\), the output matrix is set proportional to the sum of the rows of \(V^{-1}\), i.e., \(C \propto \mathbf{1}^\top V^{-1}\). This construction ensures that \(C\) has a significant projection onto every system mode, making observability uniform. The resulting system \((\tilde{A}, \tilde{B}, C)\) is thus stable, fully controllable, observable, and approximately balanced, making it an excellent benchmark for numerical methods.
\end{document}